\newcommand{\Lu} {\mathbf L^{1}}
\newcommand{\Ca} {\mathbf C^{\alpha}}
\newcommand{\ccc} {f}
\newcommand{\JJ} {{j_{M}}}
\newcommand{\LL} {{j_{m}}}
\newcommand{\C} {{\bf C}}
\newcommand{\R} {{\mathbb R}}
\newcommand{\N} {{\mathbb N}}
\newcommand{\V} {{\bf V}}
\newcommand{\Ld} {{\bf L}^2}
\newcommand\EE{\mathbb{E}}
\newcommand {\rb} {\rangle}
\newcommand {\lb} {\langle}
\newcommand {\om} {\omega}
\DeclareMathOperator*{\argmin}{argmin}
\newtheorem{definition}{Definition}[section]
\newtheorem{theorem}{Theorem}[section]
\newtheorem*{theorem*}{Theorem}
\newtheorem{conjecture}[theorem]{Conjecture}
\newtheorem{lemma}[theorem]{Lemma}
\title{Beyond sparse denoising in frames: minimax estimation with a scattering transform}
\author{
  Nathanaël Cuvelle--Magar\thanks{Corresponding author.}\\
  Ecole normale supérieure\\
  \texttt{nmagar@di.ens.fr} \\
   \And
  Stéphane Mallat \\
  Collège de France\\
  Ecole normale supérieure\\
  Flatiron Institute, Simons Foundation\\
  \texttt{stephane.mallat@ens.fr} \\
}
\begin{document}
\maketitle

\begin{abstract}

A considerable amount of research in harmonic analysis has been devoted to non-linear estimators of signals contaminated by additive Gaussian noise. They are implemented by thresholding coefficients in a frame, which provide a sparse signal representation, or by minimising their $\ell^1$ norm. However, sparse estimators in frames are not sufficiently rich to adapt to complex signal regularities.
For cartoon images whose edges are piecewise $\C^\alpha$ curves, wavelet, curvelet and Xlet frames are suboptimal if the Lipschitz exponent $\alpha \leq 2$ is an unknown parameter. Deep convolutional neural networks have recently obtained much better numerical results, which reach the minimax asymptotic bounds for all $\alpha$. 
Wavelet scattering coefficients have been introduced as simplified convolutional neural network models. They are computed
by transforming the modulus of wavelet coefficients with a second wavelet transform. We introduce a denoising estimator by jointly minimising and maximising the $\ell^1$ norms of different subsets of scattering coefficients. We prove that these $\ell^1$ norms capture different types of geometric image regularity. Numerical experiments show that this denoising estimator reaches the minimax asymptotic bound for cartoon images for all Lipschitz exponents $\alpha \leq 2$. We state this numerical result as a mathematical conjecture.
It provides a different harmonic analysis approach to suppress noise from signals, and to specify the geometric regularity of functions. It also opens a mathematical bridge between harmonic analysis and denoising estimators with deep convolutional network.

\end{abstract}

\keywords{Sparse denoising \and minimax estimation \and geometric regularity \and wavelets}

\section{Introduction}

Signals may be contaminated by additive Gaussian white noise during measurements or transmission. Such noise removal has thus been a central signal processing topic since the early work of Wiener on optimal linear denoisers \cite{Wiener1949ExtrapolationIA}. Donoho and Johnstone \cite{Donoho1994IdealSA} have proved that nearly optimal denoising estimators are obtained over
signal classes which have sparse decomposition coefficients over a basis or a frame. The denoising estimator is computed by thresholding the noisy signal coefficients in this frame, or by minimising their $\ell^1$ norm as explained in Section \ref{sec:sparse_denoising}. 
In wavelet bases, it defines nearly minimax estimators over balls of Besov spaces \cite{donoho1996density}, which correspond to functions having specific local isotropic regularity properties. 
Section \ref{sec:wavelet} gives a brief review. 
For images having anisotropic directional regularities, a considerable effort has been devoted to build frames such as curvelets or shearlets, which define sparse representations.
However, bases and frames have a limited flexibility to adapt to various geometric regularities. A major issue in harmonic analysis addressed in this paper is to develop new mathematical tools to capture sparsity properties and improve these estimators. The paper gives a mathematical conjecture sustained by numerical results and partial mathematical justifications to do so, after reviewing important harmonic analysis results.

Images with edges having some geometric regularity are important examples introduced by Tsybakov and Korostelev \cite{Korostelev1993MinimaxTO}. We shall consider an image model \cite{Peyr2008OrthogonalBB} of piecewise $\C^\alpha$ (uniformly Lipschitz $\alpha$) geometric images, which are ${\bf C}^\alpha$ over disjoint image domains separated by regular edges. These edges are themselves regular ${\bf C}^\alpha$ curves. Edge junctions define corners which are
geometric singularities. Sparse representations are obtained with curvelets and shearlets if $\alpha = 2$ and if there is no corner, as explained in Section \ref{sec:Xlets}. However, there is no frame providing an optimally sparse representation for such images having corners, for a Lipschitz exponent $\alpha \leq 2$ which may vary. Indeed, it would require to adapt the geometry of frame vectors to the geometric regularity of edges in the image. If the edge regularity varies then we must choose the decomposition vectors in dictionaries of functions which are much larger than a frame. Optimally sparse representations have been obtained by selecting an orthonormal basis in a dictionary of bandlets 
\cite{Peyr2008OrthogonalBB,Dossal2008BandletIE}, whose geometry is adapted to each noisy 
image in terms of location, orientation and degree of regularity of edges. However, this approach becomes progressively too complex as the signal diversity increases, because the dictionary size must also increase.

More recently, Gaussian noise removal has gained considerable attention for AI data generation with score diffusion. New data are generated by transporting a Gaussian white noise with a stochastic differential equation which performs a progressive denoising, with a drift term 
computed from a denoiser implemented with a deep neural network \cite{song2021scorebased, ho2020denoising, kadkhodaie2021stochastic}.
Denoising capabilities of deep convolutional neural networks go well beyond the performance of previous estimators based on wavelets, curvelets, Xlets and bandlets but their mathematical properties are not understood.

A deep convolutional network computes a cascade of convolutional operators which have a narrow square support, and pointwise non-linearities. Simplified wavelet scattering models have been introduced in \cite{mallat2012group, ScatteringBruna} by replacing these convolutional operators with wavelet transforms and a modulus non-linearity. As in deep convolutional networks, all wavelet filters have a square support, as explained in Section \ref{sec:dyadic_wavelet}. The main theorem of the paper in Section \ref{sec:geometric_regularity_scattering} proves that applying a second wavelet transform over the modulus of a first wavelet transform can still take advantage of the anisotropic geometric regularity of edges. This regularity appears through 
the fast decay of the $\ell^1$ norm of a subset of scattering coefficients at fine scales.
More surprisingly, the concentration of large wavelet coefficients along edge curves is also captured by the $\ell^1$ norms of another subset of scattering coefficients, which become large and not small. Geometric image regularities are thus captured by both small and large $\ell^1$ norms of scattering coefficients, which only involve wavelet filters.

A denoising algorithm is implemented by minimising a weighted average of these scattering $\ell^1$ norms with positive and negative weights to insure a joint minimisation and maximisation of these norms. Numerical simulations in Section \ref{sec:numerics} demonstrate that this estimator reaches the minimax asymptotic rate over images having $\C^\alpha$ edges for all values of $\alpha \leq 2$. All
experiments are reproducible with the software available on \url{https://github.com/ScatteringDenoisingCode/SD}. Based on these numerical results, we state a mathematical conjecture on the minimax denoising rate of such denoising algorithm, which is not proved. This conjecture gives a new characterization of sparsity properties through minimisation and maximisation of different $\ell^1$ norms. It does not require to use large size dictionaries with vectors whose support are adapted to the image geometry. It is a mathematical bridge towards deep convolutional network estimators.

\section{Denoising geometrically regular images}
\label{sec:background}

This first section reviews the minimax optimality of denoising estimators, which relies on sparsity properties in harmonic analysis bases or frames. Section \ref{sec:sparse_denoising} reviews minimax estimations with regularised estimators promoting sparsity. Section \ref{sec:wavelet} reviews sparse estimations in wavelet bases, which are suboptimal to remove noise from images having some form 
of geometric regularity. Section \ref{sec:Xlets} reviews results obtained with curvelets and shearlets frames, bandlet dictionaries and deep convolutional neural networks.

\subsection{Variational and sparse denoising estimations}
\label{sec:sparse_denoising}

Suppose that $f \in \Ld[0,1]^2$ is contaminated by the addition of a Wiener measure of variance $\sigma^2$ (Gaussian white noise): $g = f + dB$, where $B$ is a Brownian motion of variance $\sigma^2$.
We shall consider functions $f$ in a set $\Lambda$ having specific regularity properties. 
An estimator $\hat f(g)$ computes an approximation of $f$ from $g$. It has a mean-squared error (MSE)
$\EE_g (\|\hat f(g) - f \|^2)$, where the expected value is computed over the Gaussian white noise distribution of $g$. 
The minimax error is the infimum which is achievable by optimizing the estimator $\hat f$
\[
\epsilon_{m} (\sigma) = \inf_{\hat f} \sup_{f \in \Lambda} \EE_g (\|\hat f(g) - f \|^2) . 
\]
If $f$ is a realization of a random process whose probability distribution is supported within $\Lambda$, then the minimum mean-squared error is
calculated with an expected value over the distribution of $f$ and $g$
\[
\epsilon_{mms} (\sigma) =  \inf_{\hat f}  \EE_{f,g} (\|\hat f(g) - f \|^2)~~\mbox{and}~~\epsilon_{mms} \leq \epsilon_m.
\]
Most work in harmonic analysis have been focused on the calculation of the minimax error $\epsilon_m$ which gives an upper bound on $\epsilon_{mms}$ overall distributions.
Depending upon the properties of $\Lambda$, our goal is to define a denoising estimator $\hat f(g)$ whose maximum error over $\Lambda$
\[
\epsilon_{M} (\sigma) = \sup_{f \in \Lambda} \EE_{g} (\|\hat f(g) - f \|^2) 
\]
is close to the theoretical bounds computed over the minimax error $\epsilon_m$. This maximum error is difficult to compute numerically. It is replaced in numerical experiments by a mean-squared error
$\epsilon_{ms} (\sigma) = \EE_{f,g} (\|\hat f(g) - f \|^2) $, 
for random processes whose typical realizations nearly reach the maximum error.

\paragraph{Regularised denoising estimators}
In numerical applications, the measurement device such as a camera gives
a finite number $d$ of coefficients
which depend linearly on the input signal $g$. It thus characterises an orthogonal projection $P_\V g$ of $g$
in a vectorial subspace $\V$ of dimension $d$. The measurements project the signal as well as the noise. The projected noise $Z = P_\V (dB)$ is a random 
Gaussian white noise vector of dimension $d$, so 
\begin{equation}
\label{eq:discrete-noise}
P_\V g = P_\V f + Z~~\mbox{where}~~Z\sim {\cal N}(0 , \sigma^2 Id) .
\end{equation}
The problem is thus restricted to the space $\V$ of dimension $d$. This dimension varies depending on the noise level. Indeed, the resolution at which the original signal could be restored from its noisy counterpart is a function of $\sigma$.

We shall consider denoising estimators $\hat f$ of $P_\V f$ from $P_\V g$,
which takes advantage of the prior information that $f$ belongs to $\Lambda$.
This may be captured by a regularity measure $U(f).$
Regularized estimators impose that $U(\hat f)$ is bounded by $\sup_{f \in \Lambda} U(f)$. For example, a Tikhonov estimation \cite{tikhonov1963solution, tikhonov1977solutions} assumes that 
$\Lambda$ is a ball in a Sobolev space which is specified by a Sobolev norm $U(f)$.
Given the regularity prior $U$, an estimation $\hat f(g) \in \V$ of $P_V f$ is obtained as a solution of the variational problem
\begin{equation}
\label{eq:variational_denoising}
    \hat f(g) = \argmin_{h \in \V} \frac 1 {2} \|h - P_\V g \|^2 + \sigma^2 U(h) .
\end{equation}
The error can be decomposed into an approximation and an estimation error
\begin{equation}
\label{eq:approx-est}
\|f - \hat f(g) \|^2 = \|f - P_\V f \|^2 + \|P_\V f - \hat f(g) \|^2 .
\end{equation}
In the following we will assume that the discretisation dimension $d$ is large enough so that
the leading term is the estimation error $\|P_\V f - \hat f(g) \|^2.$
When the prior set $\Lambda$ is more complex than a ball in a Banach space, 
a major difficulty is to find $U(f)$ which defines a regularised estimator which is asymptotically minimax in $\Lambda$.

\paragraph{Sparsity with $\ell^1$ norms in bases or frames}
An important body of research in applied harmonic analysis shows that 
nearly minimax estimators may be obtained by finding a basis $\{\psi_n \}_{n \leq d}$ of $\V$, where the decomposition coefficients
$W f = \{ \lb f , \psi_n \rb \}_{n \leq d}$ are sufficiently sparse for all $f \in \Lambda$. 
This sparsity can be measured by an $\ell^1$ norm of the vector $Wf$ of decomposition coefficients
\begin{equation}
    \label{sparse-prior}
U(f) = \lambda\, \|W f\|_1 .
\end{equation}
If the basis $\{\psi_n \}_{n\leq d}$ is orthonormal then
one can verify \cite{donoho_soft_thresholding, antoniadis2001regularization} that the solution $\hat f(g)$ of the variational problem (\ref{eq:variational_denoising}) is
computed by thresholding the coefficients of $Wf$ with the soft-thresholding function
\[
\rho_T (a) = {\rm sign}(a) \max(|a| - T, 0) 
\]
with  $T= \lambda\, \sigma$. The estimation is calculated by applying the inverse $W^T$ of $W$ 
\begin{equation}
\label{eq:soft-thresh}
\hat f(g) = W^{T} \rho_{\lambda \sigma} (W g)  . 
\end{equation}
This thresholding can also be interpreted as a two layers neural network, whose weights are defined by $W$ and $W^T$,
with the point-wise discontinuity defined by the soft thresholding that can also be decomposed into a difference of two $ReLUs$. 
If the basis $\{\psi_n \}_{n \leq d}$ is not orthogonal in which case $W$ in (\ref{sparse-prior}) is not unitary then $\hat f(g)$ 
is computed with an iterative proximal algorithm \cite{Daubechies2003AnIT, kowalski2014thresholding}. Such algorithms can be efficiently approximated with a deep neural network \cite{Gregor2010LearningFA, Liu2018ALISTAAW}.

\paragraph{Asymptotic optimality}
Donoho and Johnstone \cite{Donoho1994IdealSA, donoho2002noising} 
have proved that thresholding estimators can achieve
nearly minimax errors in a basis which provides a nearly ``optimal" sparse decomposition.
We briefly summarize the main results.
In the following we write $u(d) \sim v(d)$ if there exists $B \geq A > 0$ such that for all $d \in \N$ 
\[
A\, u(d) \leq v(d) \leq B\, u(d) .
\]
A fundamental theorem of Donoho and Johnstone \cite{Donoho1994IdealSA} proves that 
a thresholding estimator (\ref{eq:soft-thresh}) for $\lambda = \sqrt{2 \log d}$ satisfies
\[
\EE (\|P_\V f - \hat f(g) \|^2) \leq (2 \log d + 1) \big( \sigma^2 +  \sum_{n=1}^d \min(|\lb f , \psi_n \rb|^2,\sigma^2) \big) .
\]
The upper bound is controlled by the sparsity of the decomposition coefficients
$|\lb f , \psi_n \rb|$. This sparsity is measured by sorting them in decreasing amplitude order. We denote $|\lb f , \psi_{n_r} \rb|$ the sorted coefficient of rank $r$: $|\lb f , \psi_{n_r} \rb| \geq |\lb f , \psi_{n_{r+1}} \rb|$. 
If $|\lb f , \psi_{n_r} \rb|^2 \sim r^{-(\alpha+1)}$ one can verify that
\[
\sum_{n=1}^d \min(|\lb f , \psi_n \rb|^2,\sigma^2) \sim \sigma^{2 \alpha/(\alpha+1)}.
\]
Suppose that $\Lambda$ is a ``tail compact" set, which means that there exists $C > 0$ and $\epsilon > 0$ such that for all $f \in \Lambda$ and all $M > 0$, we have 
$\sum_{n \geq M} |\lb f , \psi_n \rb|^2 \leq C\, M^{-\epsilon}$ . By choosing $d = \sigma^{-\beta}$ for $\beta$ large enough we then verify with 
(\ref{eq:approx-est}) that there exists $C > 0$ such that the maximum error on
$\Lambda$ satisfies
\[
\epsilon_M (\sigma) \leq C\, \sigma^{2 \alpha/(\alpha+1)} |\log \sigma|~.
\]
The last step proves that this upper bound gives the minimax error over $\Lambda$
up to the $\log \sigma$ term. \citet{donoho1993unconditional} proves that if there exists two balls $\Gamma_1$ and $\Gamma_2$ of a Banach space such that $\Gamma_1 \subset \Lambda \subset \Gamma_2$ and if the sparsity basis
$\{ \psi_n \}_{n \in \N}$ defines an unconditional basis of this Banach space then
there exists $C' > 0$ such that the minimax error satisfies
\[
\epsilon_m (\sigma) \geq C'\,\sigma^{2 \alpha/(\alpha+1)} 
\]
and hence
\begin{equation}
\label{eq:asymptot-optm}
\epsilon_m (\sigma) \leq \epsilon_M (\sigma) \leq C/C'\, \epsilon_m (\sigma)\, |\log \sigma|~.
\end{equation}
Neglecting the $\log \sigma$ term, we shall then say that the thresholding estimator
is asymptotically minimax. Next section applies these results to wavelet bases which define unconditional bases of all Besov spaces \cite{meyer1992wavelets}.

\subsection{Denoising in wavelet orthonormal bases}
\label{sec:wavelet}

\paragraph{Wavelet orthonormal bases}
A wavelet orthonormal basis of $\Ld[0,1]^2$ is constructed with three mother
wavelets $\psi^k$ for $1 \leq k \leq 3$, which are dilated $\psi^k_j (u) = 2^{-j} \psi^k (2^{-j} u)$ at different scales $2^j < 1$, and translated, modulo boundary
conditions \cite{Mallat1989ATF}.
It defines a wavelet orthonormal basis 
\[
\{ \psi_j^k (2^j n - u) \}_{j < 0, 1 \leq k \leq 3, 2^j n \in [0,1]^2}. 
\]
The inner products of $f$ with translated wavelets can also be rewritten as convolutions sampled at intervals $2^j$:
\[
\lb f(u) , \psi^k_j (2^j n - u) \rb = f * \psi^k_j (2^j n) .
\]
A finite dimensional approximation $P_\V f$ of $f$ at the scale $2^\LL$ is defined by setting to zero all wavelet coefficients
at scales $2^j \leq 2^\LL$. It projects $f$
in the space $\V$ of dimension $d = 2^{-2\LL}$ generated by wavelets having a scale $2^j \geq 2^\LL$. This space captures the low frequencies of $f$. It
also admits an orthonormal basis of $2^{-2\LL}$ translated
scaling functions $\{ \phi_\LL(2^\LL n - u) \}_{2^\LL n \in [0,1]^2}$ where $\phi_\LL(u) = 2^{-\LL} \phi(2^{-\LL} u)$. The Fourier transform of $\phi_\LL$ is mostly concentrated in the low-frequency
square $[-2^{-\LL} \pi, 2^{-\LL} \pi]$. The orthogonal projection $P_\V f$ is thus also represented by a discrete image of 
$d = 2^{-2\LL}$ coefficients $f * \phi_\LL (2^\LL n)$ which are local averages of $f$ in the neighbourhood of each grid point $2^\LL n$. This discrete image is typically provided as input measures.

One may also limit the maximum wavelet scale to $2^\JJ \leq 1$. The $2^{-2\JJ}$ wavelets at larger scales generate the same space as the family of scaling functions
$\{ \phi_\JJ(2^\JJ n - u) \}_{2^\JJ n \in [0,1]^2}$. The wavelet representation up to the scale $2^\JJ$ of $P_\V f$ is thus defined by
\[
W f = \Big\{ f * \psi^k_{j} (2^j n) \Big\}_{\LL \ge j \leq \JJ,2^{-j} n \in [0,1]^2,1 \leq k \leq 3} \cup \Big\{  f * \phi_\JJ (2^\JJ n) \Big\}_{ 2^{-\JJ} n \in [0,1]^2} .
\]
These discrete wavelet coefficients are computed at scales $2^j \geq 2^\LL$ with a fast discrete wavelet transform, with $O(d)$ operations \cite{mallat1999wavelet}. 
The amplitudes of wavelet coefficients depend upon the local regularity of $f$.
The more regular $f$  the smaller $| f * \psi_{j}^k(2^j n)|$ at fine scales $2^j$, as quantified by equation (\ref{decayCalpha}) for Lipschitz $\alpha$ functions. Figures \ref{fig:original_image} and \ref{fig:original_wav_decomp} show respectively a piecewise regular image along with its discrete orthogonal wavelet coefficients. Large coefficients are located near edges, where the image is discontinuous.
A wavelet thresholding estimator sets to zero all coefficients below a threshold
$T$ proportional to the noise standard deviation $\sigma$. 
Setting to zero a wavelet coefficient eliminates a high frequency component, which is equivalent to eliminate the noise with a local averaging operator. This averaging operator adapts to the amplitude of wavelet coefficients and hence to the local signal regularity.

\paragraph{Wavelet thresholding optimality for uniformly Lipschitz models}

The Lipschitz regularity of a function $f$ in the two-dimensional neighbourhood of a point $u \in \Omega$ is defined by the error of a polynomial approximation.
A function $f \in \Ld[0,1]^2$ is uniformly Lipschitz $\alpha$ 
on $\Omega$ with Lipschitz constant $L > 0$ if 
for any $u \in \Omega$ there exists a polynomial $q_u$ of degree smaller than $\alpha$ such that
\[
\forall u' \in \Omega~~,~~|f (u') - q_u (u')| \leq L\, |u - u'|^\alpha .  
\]
The minimal value of $L$ defines the norm $\|f \|_{\Ca}$ of $f$ in the H\"older space $\Ca$ over $\Omega$.
The set $\Lambda \subset \Ld(\Omega)$ of all functions $f$ which are Lipschitz $\alpha$ for a constant $L$ defines a determinist model of images corresponding to a ball in this H\"older space.

We shall suppose that  wavelet $\psi^k$ has $m$ vanishing moments, which means that for any polynomial $q(u)$ of degree $m$ we have $$\int q(u) \psi^k(u) du = 0.$$  Wavelet coefficients provide a characterization of H\"older spaces ${\mathbf C}^{\alpha}[0,1]^2$ of uniformly Lipschitz $\alpha$ functions.
For $\alpha \le m$, one can prove \cite{Jaffard1991PointwiseST, meyer1992wavelets} that $f \in {\mathbf C}^{\alpha}[0,1]^2$ if and only if there exists $C > 0$ such that for all $2^j n \leq [0,1]^2$
\begin{equation}
\label{decayCalpha}
|f *  \psi_{j}^k (2^j n) |  \leq C 2^{(\alpha +1) j} .
\end{equation}
One can verify that this inequality implies that sorted wavelet coefficients
of $f$ of rank $r$ decay at most like $r^{-(\alpha+1)/2}$. Wavelets define unconditional
bases of H\"older spaces \cite{meyer1992wavelets}. It results from (\ref{eq:asymptot-optm}) that a wavelet thresholding estimator has a maximum error $\epsilon_M (\sigma)$ which is asymptotically minimax with
\[
\epsilon_m (\sigma) \sim \sigma^{2\alpha/(\alpha + 1)} .
\]
In that sense a wavelet thresholding estimator is nearly optimal to suppress noise for all levels of uniform regularity
$\alpha \leq m$ and it adapts to the regularity exponent $\alpha$ which may be unknown.
Such results can be extended to ball of any Besov space \cite{donoho1996density} which gives other conditions to specify the regularity of $f$.

\begin{figure}
    \centering
    \subfigure[]{\label{fig:original_image}\includegraphics[width=0.24\textwidth]{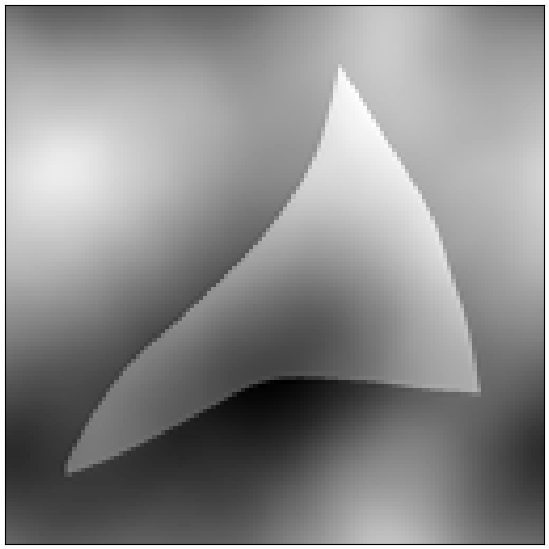}}
    \hspace{1cm}
    \subfigure[]{\label{fig:original_wav_decomp}\includegraphics[width=0.24\textwidth]{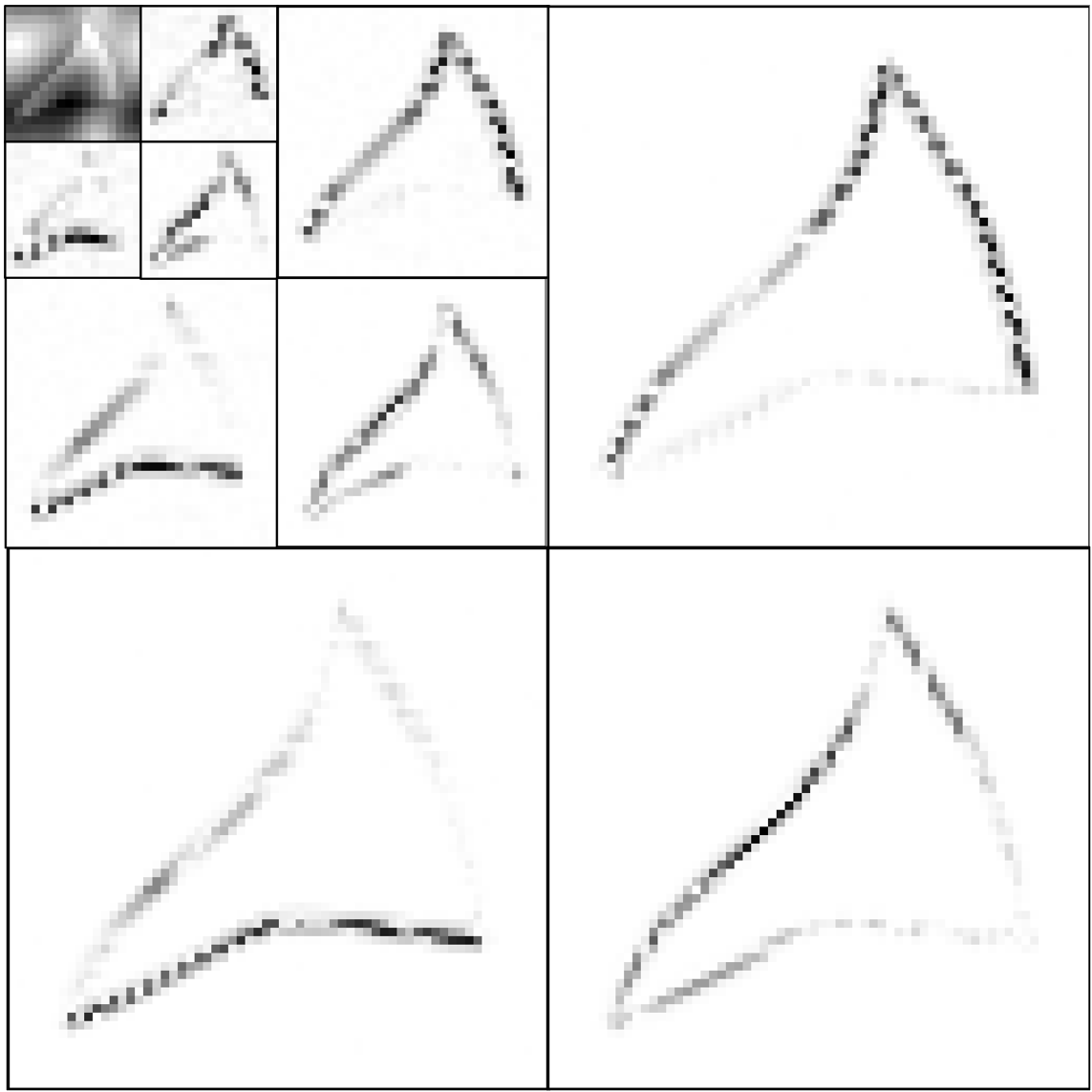}}
    \caption{(a) A $\mathrm C^2$-geometrically regular image discretised in dimension $d=128^2$, (b) 3 levels discrete orthogonal wavelet decomposition of this image. Large wavelet coefficients could clearly be seen along edges. Symlets with four vanishing moments are used \cite{Daubechies1992TenLO}.}
\end{figure}

\begin{figure}
    \centering
    \subfigure[]{\label{fig:noisy_image}\includegraphics[width=0.24\textwidth]{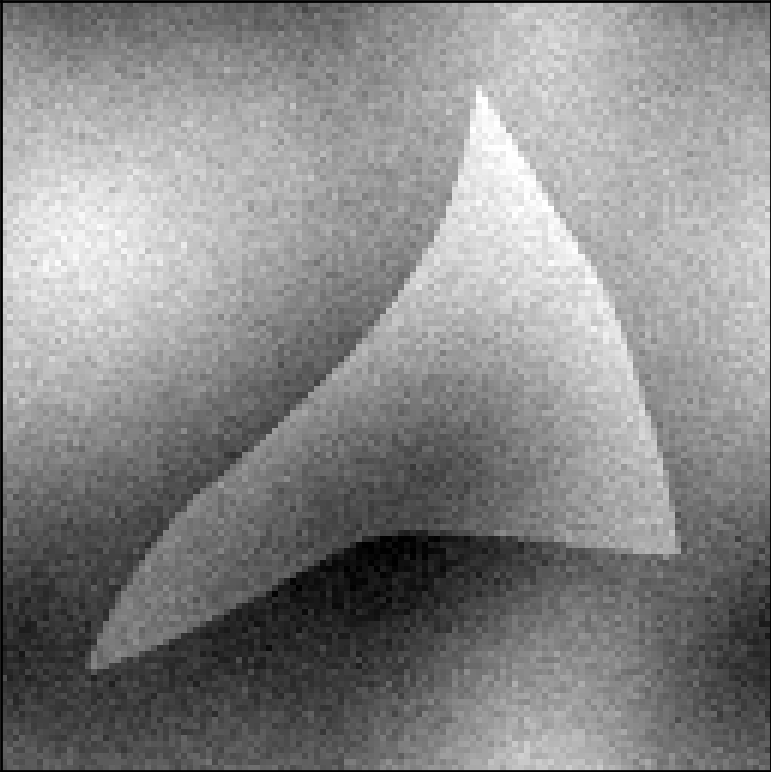}}
    \hspace{1cm}
    \subfigure[]{\label{fig:noisy_wav_decomp}\includegraphics[width=0.24\textwidth]{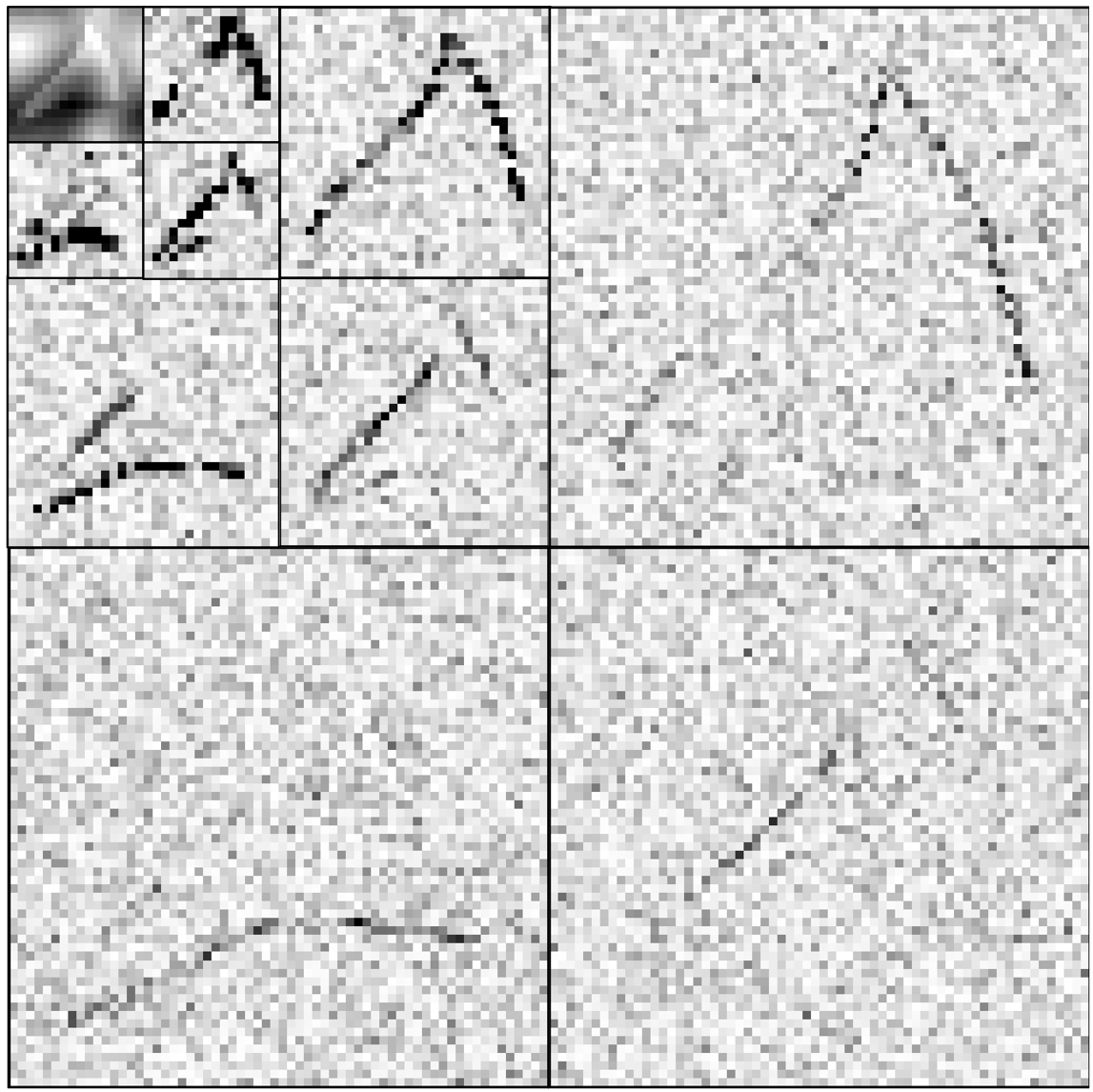}}\\
    \subfigure[]{\label{fig:denoised_wav_decomp}\includegraphics[width=0.24\textwidth]{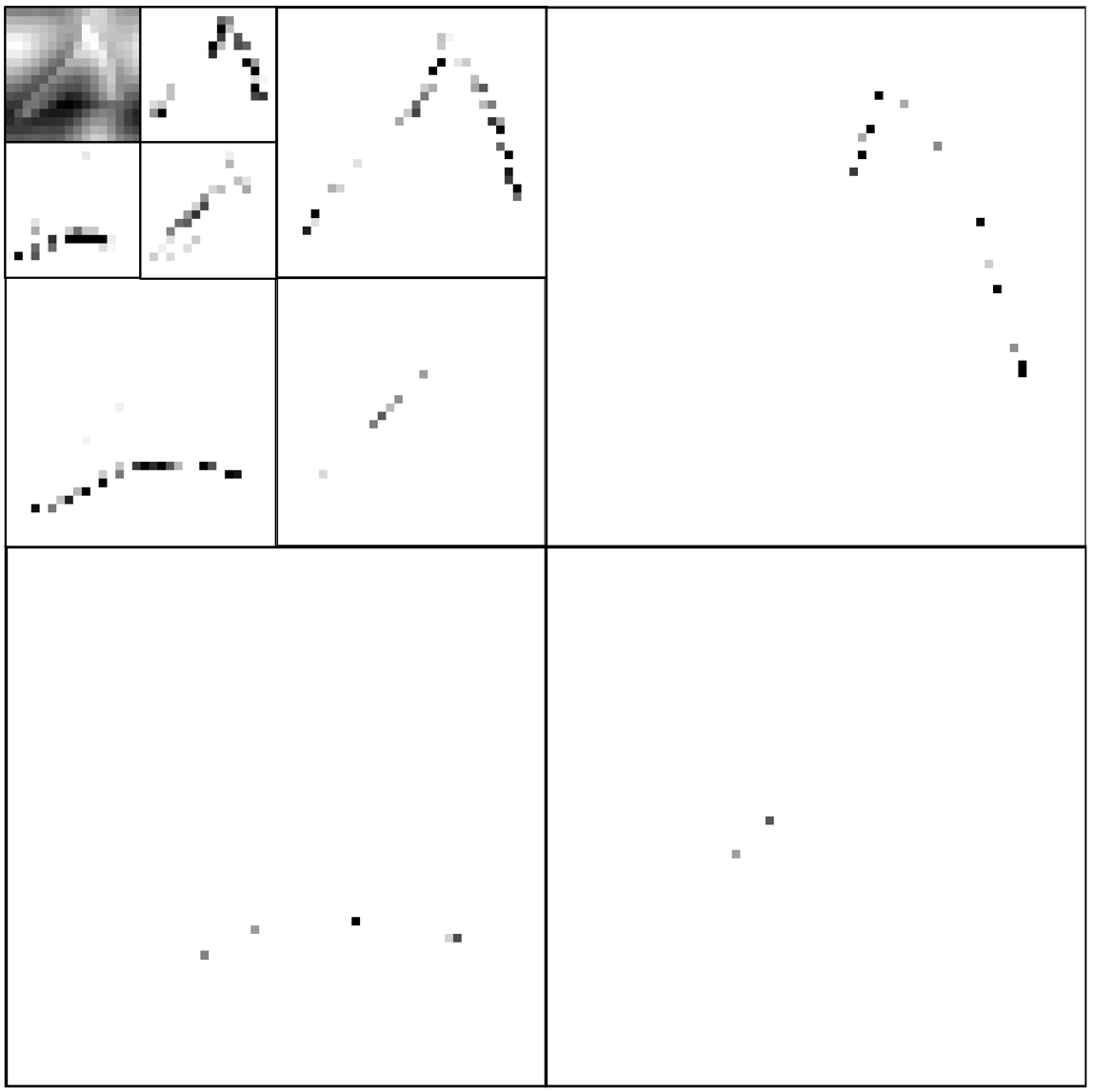}} 
    \hspace{1cm}
     \subfigure[]{\label{fig:denoised_threshold_image}\includegraphics[width=0.24\textwidth]{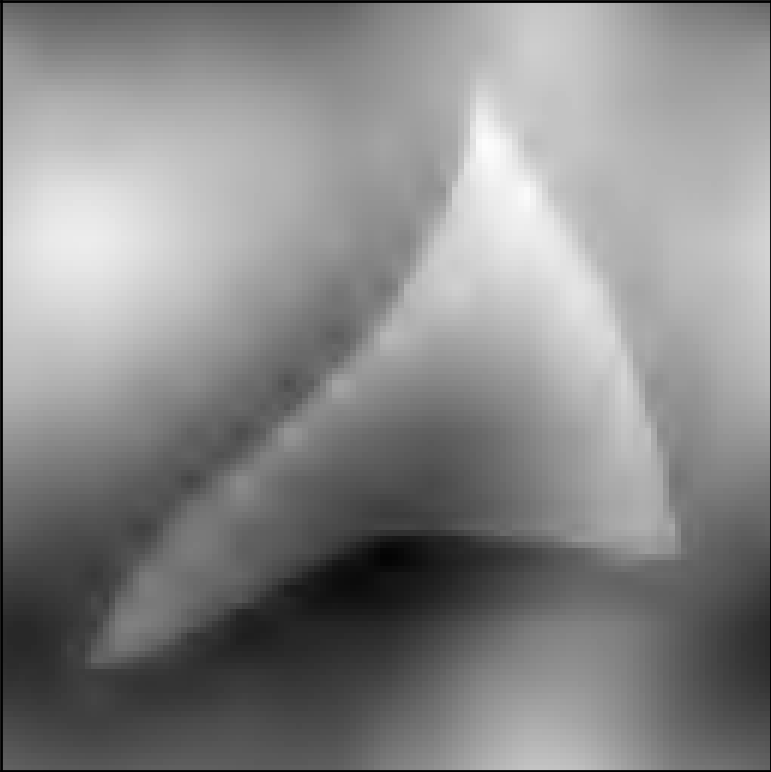}}

    \caption{(a): the noisy image is the sum of a $\bf C^2$ geometrically regular image ranging in $[-1,1]$ with an additive Gaussian white noise of variance $\sigma^2 = 0.005$. (b): orthogonal wavelet coefficients of the noisy image computed on $3$ scales $2^j$. Larger coefficients are darker. (c): non-zero wavelet coefficients after soft thresholding are shown in black. (d): denoised estimated image recovered from thresholded wavelet coefficients. The PSNR is 28.97dB for the noisy image and 33.29dB for its denoised counterpart.}
    \label{fig:foobar}
\end{figure}

\paragraph{Wavelet sub-optimality over $\C^\alpha$ geometric models}
Along an edge curve, image values have regular variations when moving parallel to the edge curve but they may be discontinuous when crossing the edge boundary. The image regularity is therefore anisotropic. We consider models of $\C^\alpha$ geometrically regular images, whose definition was introduced in \citet{Peyr2008OrthogonalBB}, inspired by \citet{Donoho1999WedgeletsNM, Korostelev1993MinimaxTO}.

\begin{definition}
\label{definition:C_alpha_gr}
(geometrically $\mathbf C^{\alpha}$)
    A function $f \in \mathrm \Ld[0,1]^2$ is said to be geometrically ${\bf C}^{\alpha}$ with Lipschitz constant $L$ if $f$ is uniformly Lipschitz $\alpha$ with constant $L$ over $[0,1^2]- \{\gamma_i \}_{1 \le i \le n}$, where the $\gamma_i$ are uniformly Lipschitz $\alpha$ curves with constant $L$, and these curves do not intersect tangentially.
\end{definition}

The set $\Lambda \subset \Ld[0,1]^2$ of all functions $f$ which are geometrically $\bf C^\alpha$ with constant $L$ defines a model of images which incorporate edges along regular curves, which may intersect at junctions.
Wavelet coefficients have a large amplitude along edges,
as illustrated in Figure \ref{fig:original_wav_decomp}. A wavelet thresholding algorithm
keeps the noise added to these large amplitude wavelet coefficients, which dominate the overall error of a wavelet thresholding estimator. One can verify that
the resulting maximum error over a set of $\Ca$ geometrically functions satisfies
\begin{equation}
\label{eq:waveletorth-estim}
\sup_{f \in \Lambda} {\mathbb E}(\|\hat f - f \|^2) \sim \sigma\, |\log \sigma| .
\end{equation}
It does not depend
upon the Lipschitz exponent $\alpha$ and
only takes advantage of the fact that edges have a finite length, which limits the
number of large wavelet coefficients.
A similar error is obtained over a much larger set of bounded variation functions
\cite{Mallat1989ATF}. 
Figures \ref{fig:noisy_image} to \ref{fig:denoised_threshold_image} gives an example of denoising by thresholding the wavelet orthogonal coefficients of a geometrically regular image. As expected, the thresholding retains wavelet coefficients in the neighbourhood of edges and hence retains noise in this neighbourhood.
The soft thresholding attenuates the amplitude of the noise at the cost of restoring edges which are smoothed out. Figure \ref{fig:wavelet_trans_inv_slopes} shows the mean-squared error $\log \epsilon_{ms} (\sigma)$ as a function of $\log \sigma$ for these $\Ca$ geometrically regular images. The slope does not depend upon $\alpha$ and remains equal to $1$. The implementation was done with a nearly translation invariant thresholding estimator obtained by averaging translated orthogonal wavelet estimators \cite{Coifman1995TranslationInvariantD} (see \ref{appendix:denoisers_and_optim} for details). This averaging reduces the error by a multiplicative factor but does not modify the asymptotic slope.
Numerical errors are computed on average over a random process whose realizations are $\Ca$ geometrically regular functions, whose edge length concentrates as explained in \ref{appendix:discretising_geometry}. Estimation errors thus remain of the same order of magnitude as the maximum error for all typical realizations.

\begin{figure}
    \centering
    \subfigure[]{\label{fig:wavelet_trans_inv_slopes}\includegraphics[width=73mm]{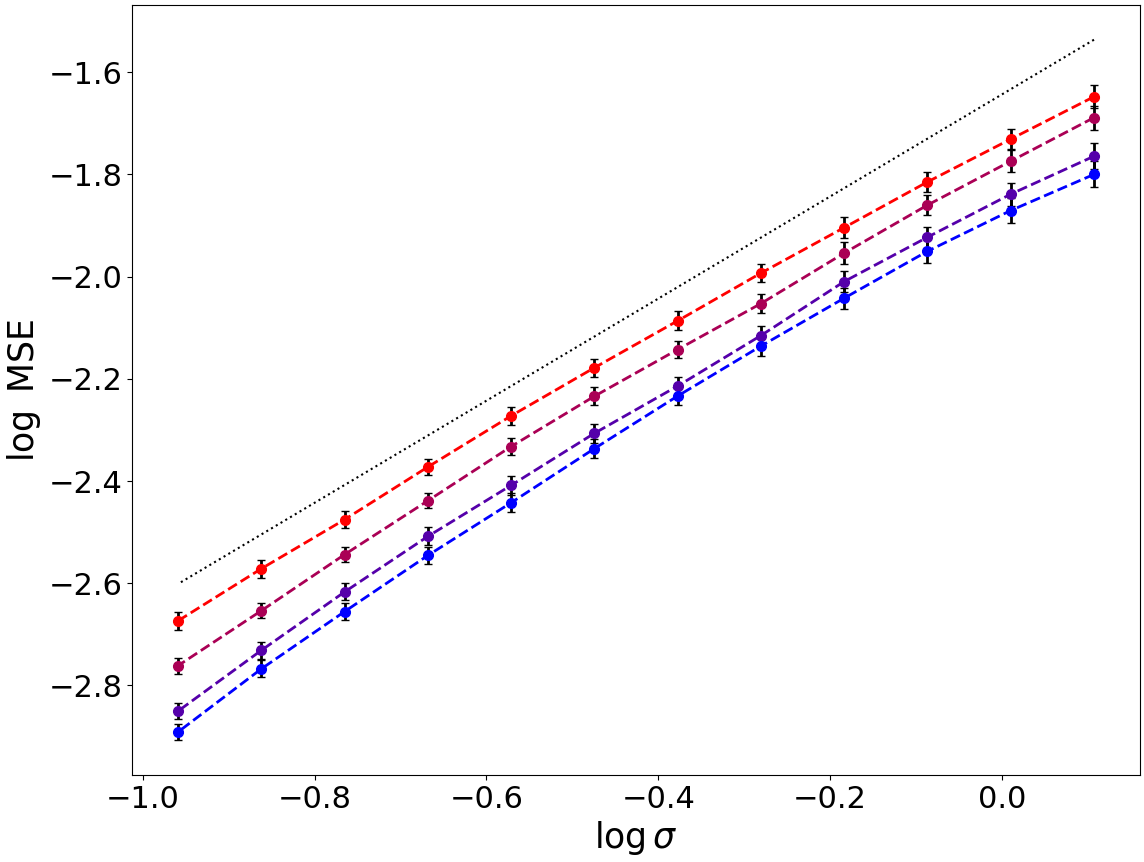}}
    \hspace{1cm}
    \subfigure[]{\label{fig:UNet_slopes}\includegraphics[width=73mm]{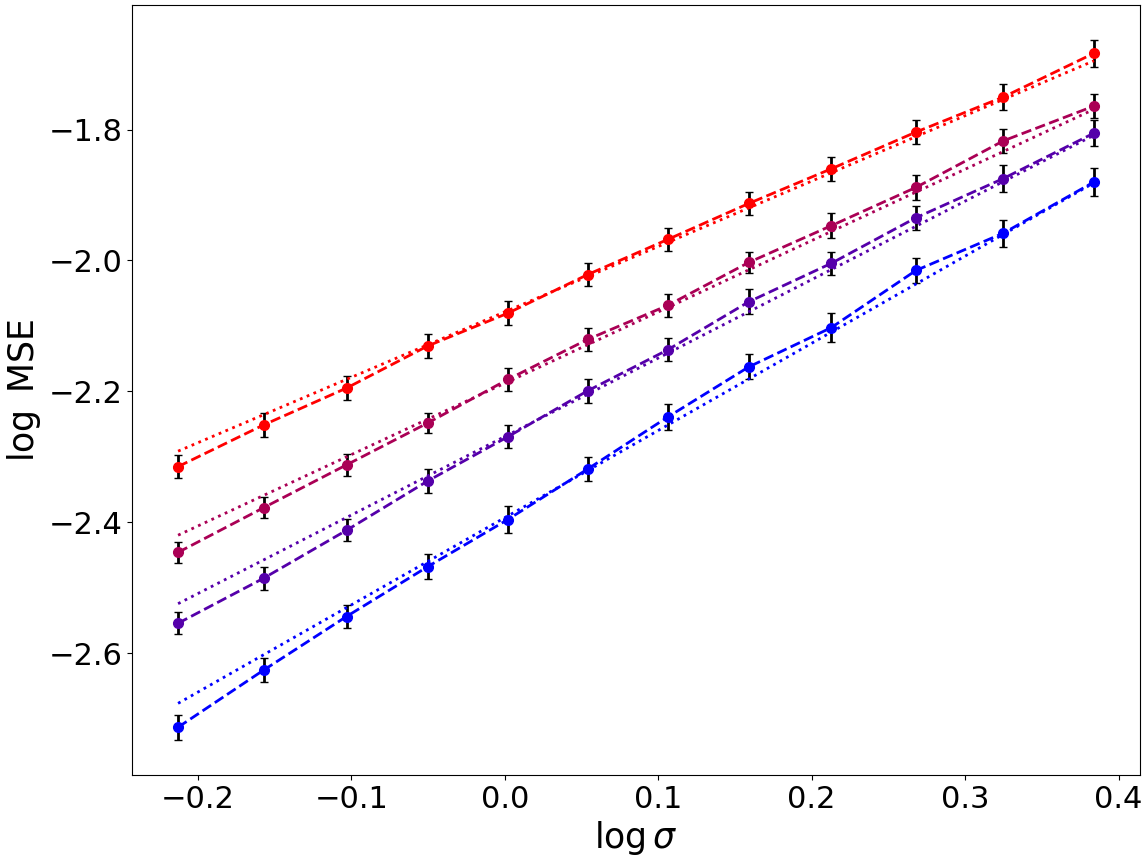}}
    \hspace{1cm}
    \caption{(a): Mean-squared error $\epsilon_{ms} (\sigma)$ of a wavelet thresholding estimator, as a function of the noise variance $\sigma^2$, computed on average over $\Ca$ geometrically regular images defined in \ref{appendix:discretising_geometry}.
    Red to blue curves correspond to Lipschitz exponent $\alpha \in \{2, 1.5, 1.2, 1\}$. The slope of $\log \epsilon_{ms}$ remains equal to $1$ as a function of $\log \sigma$. (b): same error calculations for a convolutional neural network estimator computed with a UNet. For each Lipschitz exponent $\alpha$, $\log \epsilon_{ms}$ nearly superimposes with a straight line of slope $2 \alpha / (\alpha + 1)$ which corresponds to the minimax rate.}
    \end{figure}

\subsection{Optimal denoising of geometrically regular images}
\label{sec:Xlets}

To reduce the error of wavelet thresholding estimators we must take advantage of edges regularity, these edges being $\Ca$ curves.
\citet{Korostelev1993MinimaxTO} proved that the minimax
rate over $\Ca$ geometrically regular images remains
\begin{equation}
\label{minimax-calpha}
\epsilon_m \sim \sigma^{2\alpha/(\alpha + 1)} ,
\end{equation}
which is identical to the minimax rate for functions which are uniformly Lipschitz $\alpha$ without edges. In other words, the presence of edges does not affect the minimax
denoising rate if these edges are themselves $\Ca$ regular curves which intersect over a finite number of junctions. However, this result was not constructive.
An important research effort has been devoted to 
reach this minimax rate through sparse decompositions in appropriate bases or frames \cite{Cands1999CurveletsAS, Peyr2008OrthogonalBB, Do2002ContourletsAD, Labate2005SparseMR}. However, much better results have been obtained with deep convolutional neural networks. 
We briefly review these results to understand where are the difficulties.

Edges of $\Ca$ geometrically regular images have two important properties. They correspond to pointwise discontinuities and these discontinuities propagate along regular $\Ca$ curves. To avoid retaining noise in the neighbourhood of edges, an edge is covered with few elongated vectors with a support which is as narrow as possible. The length of these vectors depends upon the edge regularity as illustrated in Figure \ref{fig:bandlet_filter}. This strategy reduces the number of vectors needed to cover the edge compared to the number of wavelets
illustrated in Figure \ref{fig:wavelet_filter}, but it requires to adapt the support
of the frame vector to the edge regularities. Curvelets, shearlets and bandlets adopt this strategy.

\paragraph{Curvelet frames}
Curvelet frames \cite{Cands1999CurveletsAS, NewCurvelets} have been constructed to obtain optimal denoising of images whose edges are closed $\C^2$ contours, without corners. Curvelets have an elongated rectangular support of width $2^j$ and length $2^{j/2}$. A curvelet frame is constructed by rotating this support and translating it on a rotated anisotropic grid whose intervals is proportional to $2^j$ and $2^{j/2}$.
Whereas $\C^2$ edge curves create of the order of $2^{-j}$ large wavelet coefficients at each scale, they only produce of the order of $2^{-j/2}$ large curvelet coefficients. A curvelet representation of such edge curves is therefore much sparser than a wavelet basis representation.
A curvelet denoising estimator is computed by thresholding the curvelet coefficients 
of a noisy image \cite{Cands2006FastDC}. The resulting estimation error over $\C^2$ edge curves reaches the minimax bound (\ref{minimax-calpha}) for $\alpha = 2$ if there is no corner. This important theoretical result has limited applications because curvelet become suboptimal when $\alpha \neq 2$ and when the edges join at corners.
Shearlets \cite{guo2006sparse, Labate2005SparseMR} provide efficient implementations of curvelets for discretised images, by replacing rotation operators by shearing operators along the horizontal and vertical directions.

\begin{figure}
    \centering
    \subfigure[]{\label{fig:wavelet_filter}\includegraphics[width=60mm]{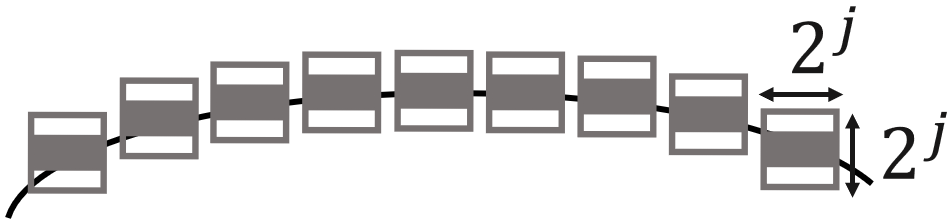}}
    \subfigure[]{\label{fig:bandlet_filter}\includegraphics[width=60mm]{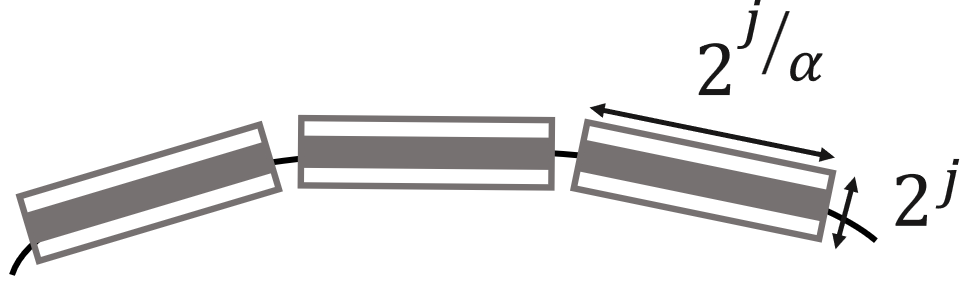}}
    \caption{(a) An edge creates large wavelet coefficients whenever the square wavelet supports intersect the edge and the wavelets oscillate across the edge. The support width is proportional to the scale $2^j$ so the total number of large wavelet coefficients along an edge is proportional to $2^{-j}$. (b) Curvelets, shearlets and bandlets have elongated supports whose width is proportional to $2^j$ but whose length is proportional to $2^{j/\alpha}$, with $\alpha = 2$ for curvelets and shearlets. If the edge is a $\C^{\alpha}$ curve then the number of non-negligible coefficients is proportional to $2^{-j/\alpha}.$}
\end{figure}

\paragraph{Adaptivity to geometric regularity with bandlets}
To build optimally sparse representations of images with edges, the support of basis vectors must be adapted to the regularity of edge curves.
If the edges are $\Ca$ curves then, at a scale $2^j$, a basis vector of width $2^j$ must have a length of the order of $2^{j/\alpha}$, as shown in Figure \ref{fig:bandlet_filter}. It means that different basis vectors must be used when the
geometric regularity changes. This problem was solved by bandlet dictionaries, which include an infinite number of orthonormal bases. These bases are obtained by transforming orthogonal wavelet coefficients with adapted unitary operators \cite{Peyr2008OrthogonalBB}.
A best basis, regrouping vectors having the appropriate geometry is selected with a best basis selection algorithm
\cite{bandletLePennec, Peyr2008OrthogonalBB}. A bandlet denoising estimator selects
such a basis from the noisy signal and thresholds the image coefficients in the selected best basis \cite{Dossal2008BandletIE}. It has a maximum error over $\Ca$ geometrically regular images which reaches the minimax bound (\ref{minimax-calpha})
for all $\alpha$, despite the presence of corners.
The difficulty of this approach is that the size of the dictionary grows quickly with the diversity of geometric image properties. 

\paragraph{Denoising with deep neural networks}
Despite interesting asymptotic results, the effort to denoise image with geometrically adaptive bases has not been so effective on natural images, which have a complex geometry. Non-linear estimations with deep neural networks provide much better results and seem to follow very different principles. These neural networks involve cascades of filters with a fixed short support, typically of size 3x3, and pointwise non-linearities such as $ReLU$. They do not have filters with elongated supports, adapted to the image geometry. Still, they produce much smaller denoising error.
Deep neural networks are trained to compute denoising estimators $\hat f(g)$ by minimising a mean-squared error 
$\EE_{P_\V f,g} (\| \hat f(g) - P_\V f\|^2)$ where the expected value is computed over the noise distribution and over a set of training data. Impressive denoising results are
obtained over images which do not belong to the training set, which considerably improve previous state of the art results \cite{elad2023image, thakur2021image}. 
Figure \ref{fig:MMS-denoising} shows denoising examples trained over a dataset of natural images (the LSUN bedroom dataset \cite{yu15lsun}) and over a dataset of piecewise regular images. The deep neural network has a UNet architecture \cite{Ronneberger2015UNetCN}. In both cases, these estimators retrieve the image geometry, even at very low signal to noise ratios. The network estimation restores sharp contours as soon as long enough portions of these contours emerge from the noise. Objects delimited by edges which are not long enough or not sufficiently contrasted (given the noise level) are not restored. This is the case for the painting in the top right corner of the LSUN bedroom example.
Figure \ref{fig:UNet_slopes} shows the mean-squared denoising error $\epsilon_{ms} (\sigma)$ of this UNet over a set of $\Ca$ geometrically regular images defined in \ref{appendix:discretising_geometry}. This mean-squared error reaches the optimal minimax rate as a function of $\sigma$, for all $1 \leq \alpha \leq 2$. This had already been observed in
\cite{kadkhodaie2024generalization}. However, there is currently no mathematical understanding of the performance of deep neural network estimators.

\begin{figure}
    
    \centering
    
    \includegraphics[width=0.2\textwidth]{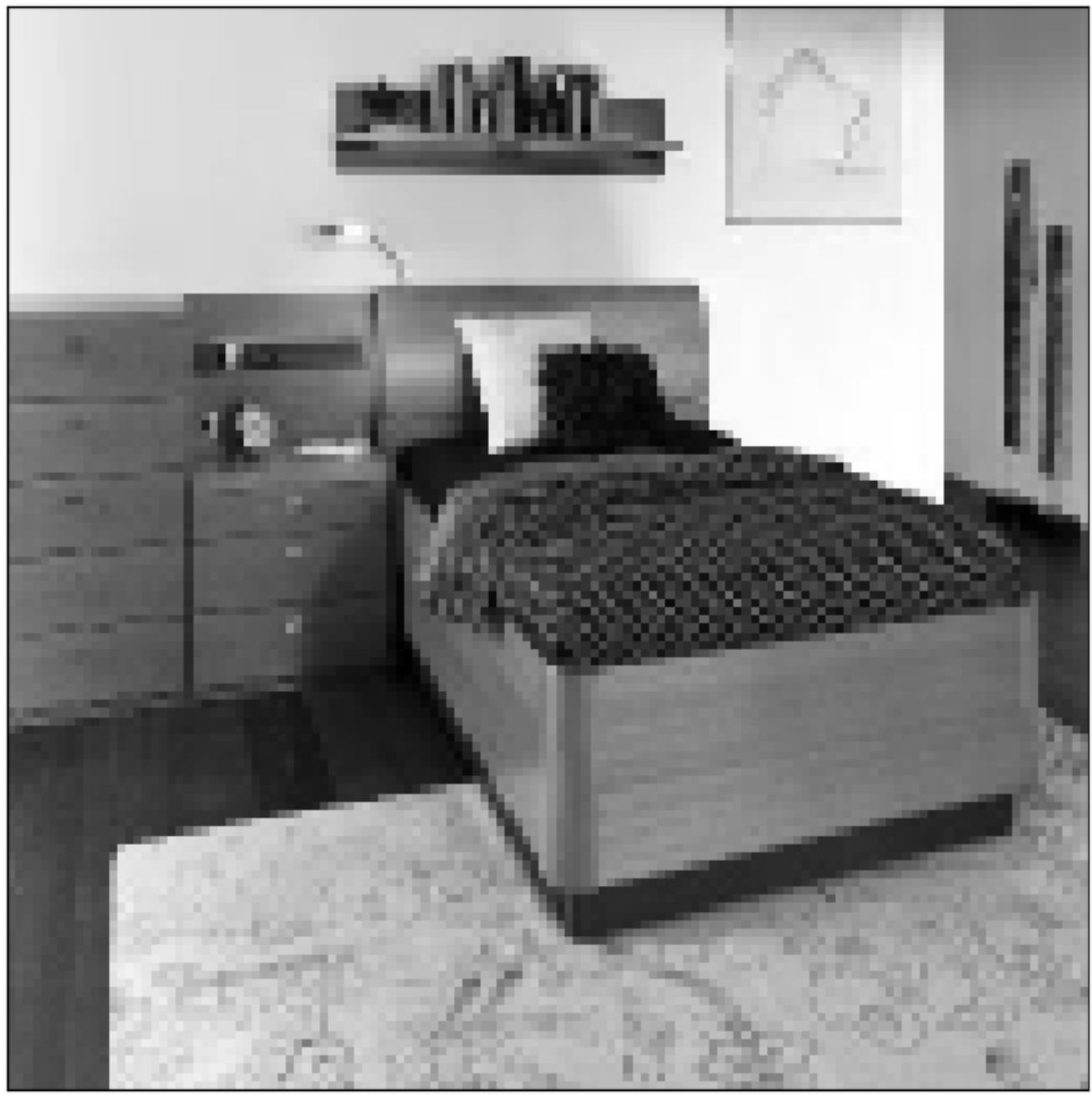}
    \includegraphics[width=0.2\textwidth]{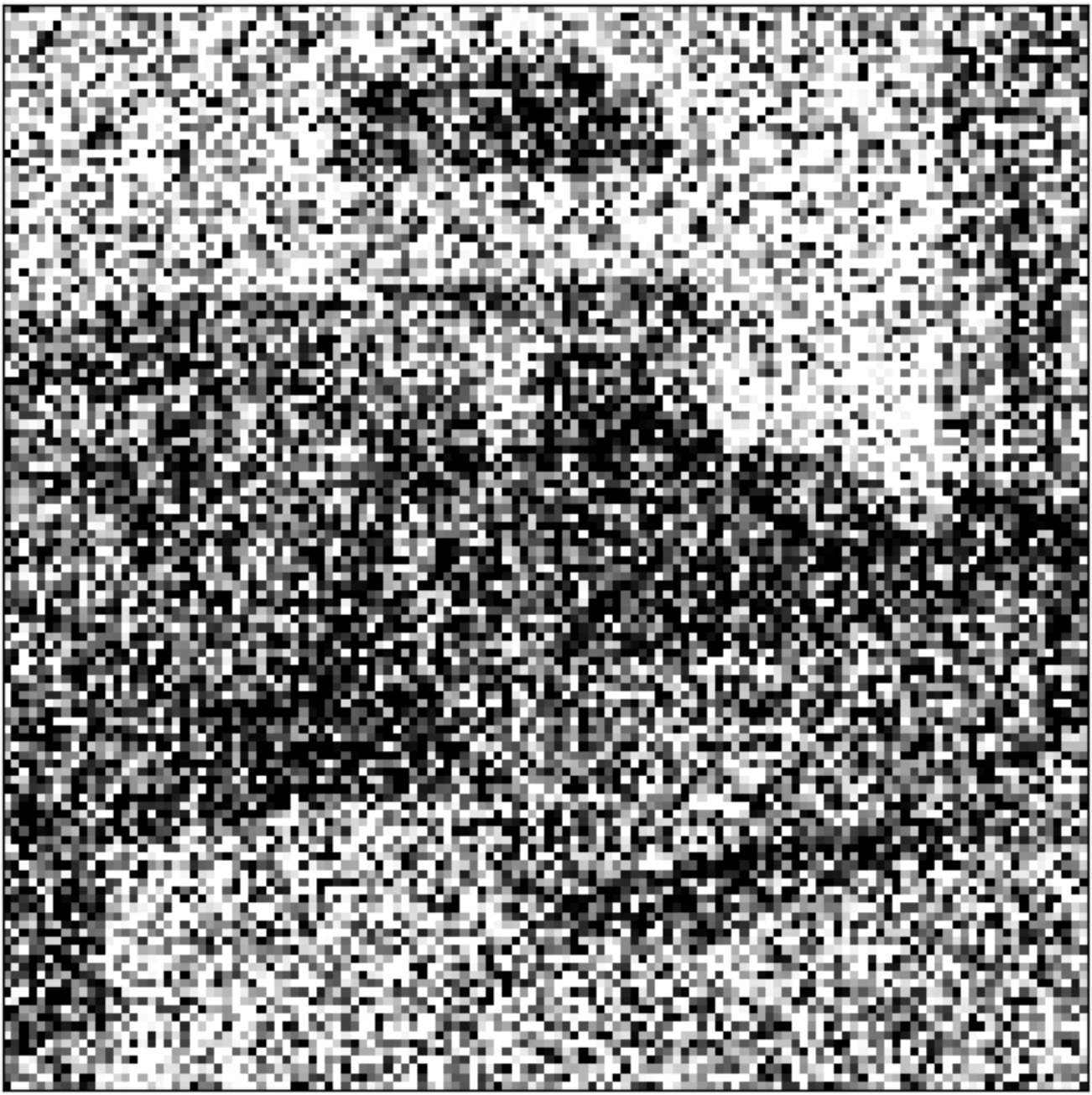}
    \includegraphics[width=0.2\textwidth]{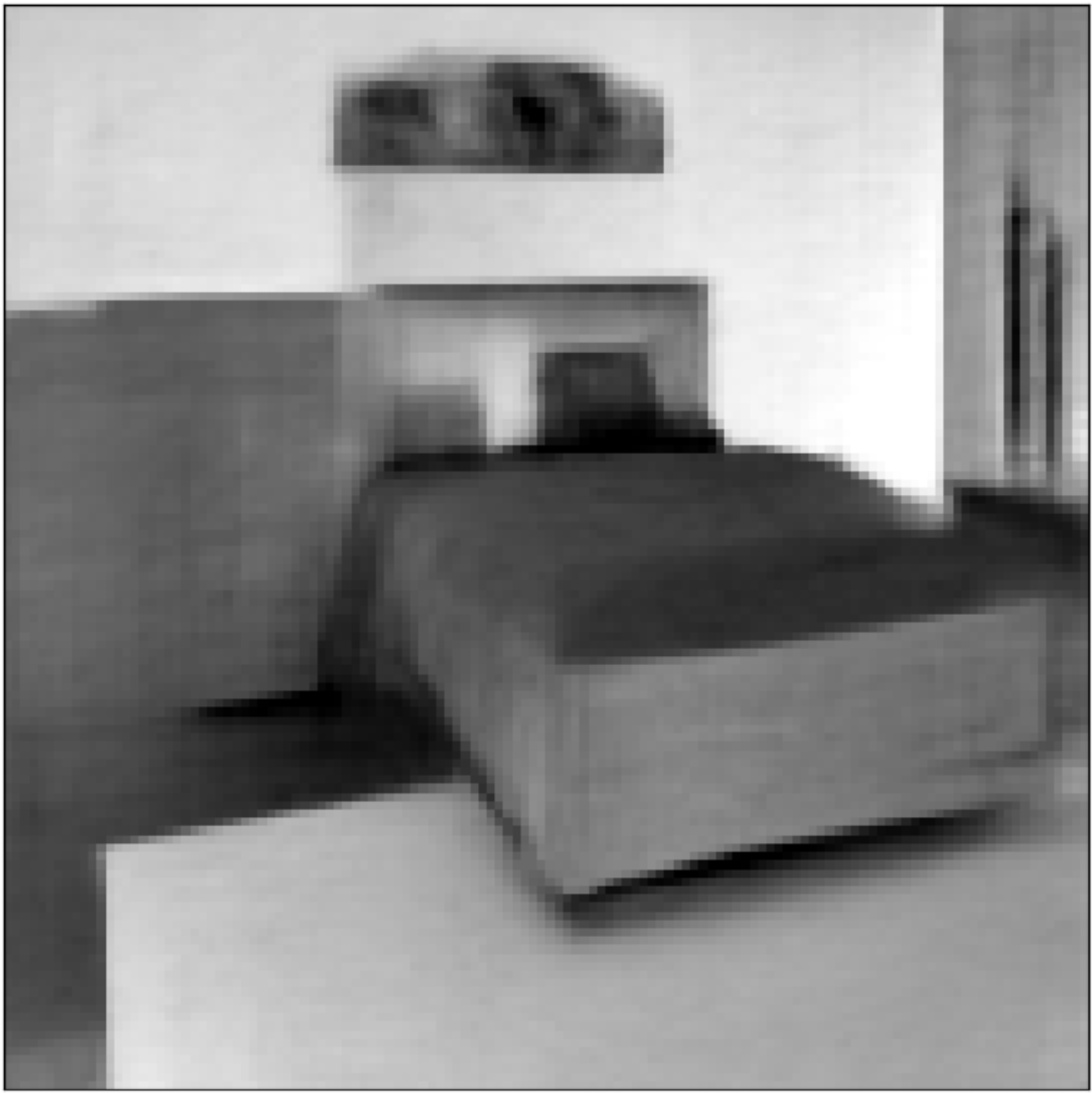}
    \includegraphics[width=0.2\textwidth]{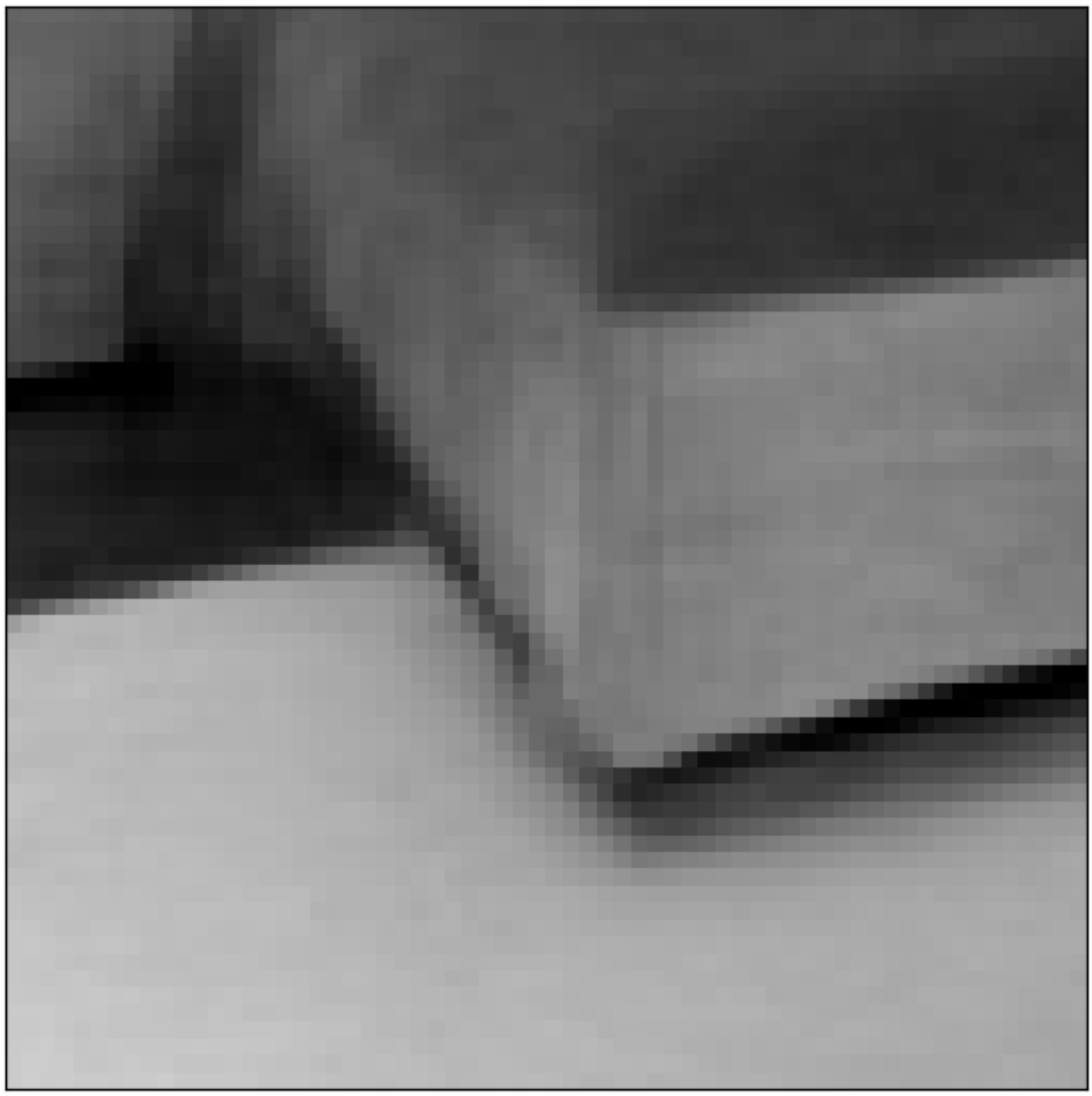}\\
    \includegraphics[width=0.2\textwidth]{Figure_1.png}
    \includegraphics[width=0.2\textwidth]{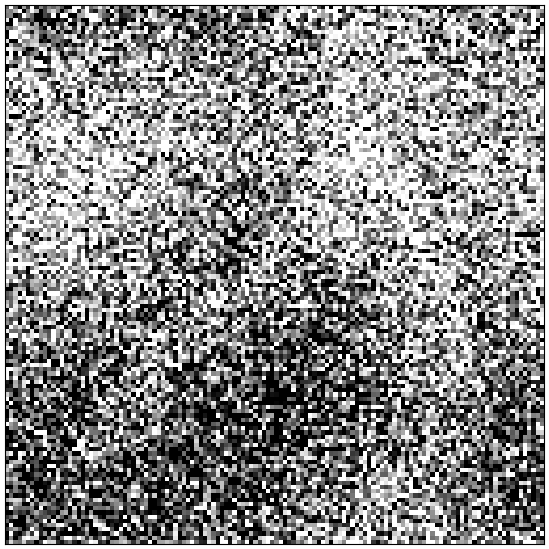}
    \includegraphics[width=0.2\textwidth]{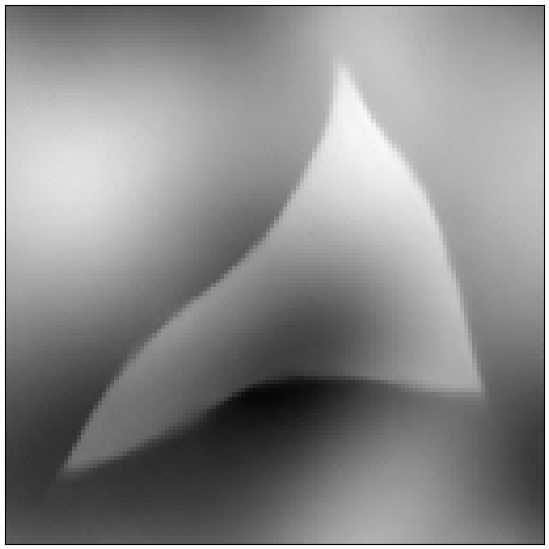}
    \includegraphics[width=0.2\textwidth]{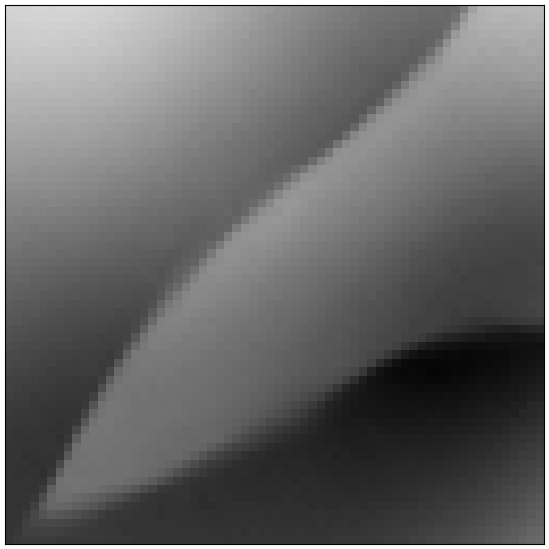}
    
    \caption{This figure shows the denoising performance of a UNet trained on two datasets. For the top row image, the UNet was trained on the LSUN bedrooms dataset \cite{yu15lsun} in dimension $d=128^2$ pixels.
    On the second row it was trained on $\Ca$ geometrically regular images of same size. The image amplitudes are normalized in $[-1,1]$. The first column shows an original test image which does not belong to the training set. The second column shows a noisy image with a noise variance $\sigma^2=1$, which corresponds to a PSNR of $6$dB. The third column displays the denoised image computed by the UNet. 
    The PSNR is $22.03$ dB for the bedroom image $30.03$dB for the $\Ca$ geometrically regular image.
    The fourth column shows a zoom near edges. In both case the estimator reproduces very well the image edges and corners.
    }
    
    \label{fig:MMS-denoising}
\end{figure}

\section{Denoising with a scattering transform}
\label{chapter:scattering_geometry}

The previous section shows that curvelets, shearlets and bandlets can define optimal
estimators of $\Ca$ geometrically regular images by taking advantage of their directional regularity to produce a sparse representation. 
Numerical results also show that deep neural networks can define better estimators
having asymptotically optimal denoising performances over $\C^\alpha$ geometrically regular images. As opposed to curvelets, Xlets or bandlets, they do not involve linear filters having elongated support in multiple directions but they are computed with a cascade of linear filters and pointwise non-linearities.
A scattering transform has been introduced in \cite{mallat2012group, ScatteringBruna} as a simplified model of a two layers convolutional neural network, with complex wavelet filters and a modulus non-linearity. Section \ref{sec:numerics} shows numerically that an estimator regularized by scattering $\ell^1$ norms can achieve an asymptotically minimax
denoising over $\C^\alpha$ geometrically regular images, which adapts to $\alpha$. 
We state a mathematical conjecture which formalises these numerical observations,
opening a mathematical bridge with deep neural network estimators.
Although we do not prove the conjecture, we give intermediate mathematical results which partly explains these observations.
Section \ref{sec:dyadic_wavelet} begins by a brief review of complex directional wavelet transforms. 
Section \ref{sec:geometric_regularity_scattering} proves that the image regularity is partly captured by a sparse set of scattering coefficients.
However, scattering coefficients do not involve elongated filters such as curvelets or bandlets, whose support are adapted to the edge regularity. We show that this is replaced by another subset of scattering coefficients, which capture the edge singularity with large $\ell^1$ norms.

\subsection{Directional complex wavelet transform}
\label{sec:dyadic_wavelet}

Wavelets can capture anisotropic regularities if they have directional 
vanishing moments aligned with the direction where the image is regular. 
Such wavelets can be created by rotating a single mother wavelet \cite{Mallat1989ATF}.
In the following we review the properties of a dyadic directional wavelet transform and 
prove that the regularity of a contour can be captured by a subset of wavelets coefficients.
We write $\|f \|_1 = \int |f(u)| du$ the $\Lu$ norm of a function. For simplicity, we shall sometime loosely address these $\Lu$ norms as $\ell^1$ norms because they are 
numerically computed as $\ell^1$ norms of functions discretised on a sampling grid.

\paragraph{Dyadic directional wavelet transform} We define $4$ wavelets
$\psi^k(u) = \psi(r_k u)$ for $u \in \R^2$ which are rotations of a single complex wavelet $\psi$,
where $r_k$ is a rotation in $\R^2$ by an angle $-k \pi/4$ for $0 \le k < 4$. 
A dyadic wavelet transform of $f$ is computed as convolutions with dilated wavelets 
$\psi^k_j (u) = 2^{-2j} \psi^k(2^{-j} u)$ for $2^j \leq 2^\JJ$, without subsampling.
The multiplicative factor $2^{-2j}$ normalises the $\Lu$ norm of all wavelets
\[
\|\psi^k_j \|_1 = \int |\psi^k_j (u)|\, du =  \int |\psi (u)|\, du = 1.
\]
At the largest scale $2^\JJ$,
the lower frequencies are retained by a convolution with a scaling function $\phi_\JJ (u) = 2^{-2\JJ} \phi(2^{-\JJ} u)$, where $\phi \geq 0$
is positive and $\int \phi(u)\, du = 1$:
\begin{equation}
\label{undec-wave-cof2}
W f = \Big(f * \phi_\JJ \, , \, f *  \psi_{j}^k \Big)_{0 \le k < 4\,,\, j \leq j_M }.
\end{equation} 
Such a dyadic wavelet transform is invertible and has a stable inverse \cite{Mallat1989ATF} if the wavelet satisfies
Littlewood-Paley inequalities: 
\begin{equation}
\label{eq:Littlewood-Paley}
\exists \delta < 1 ~,~\forall \omega \neq 0~,~ 1 - \delta \le |\hat \phi_\JJ(\omega)|^2+ 
\sum_{j=-\infty}^{\JJ} \sum_{k=0}^{3} |\hat \psi_{j}^{k}(\omega)|^2 \le 1 + \delta.
\end{equation}
We define wavelet transforms in $[0,1]^2$ with a standard periodization of wavelets in $\R^2$ \cite{mallat1999wavelet}. In numerical computations, the image is known at a finite resolution $2^{j_m}$ over a discrete grid with an orthogonal projection on a space $\V$. Section \ref{sec:wavelet} explains that it is equivalent to limit all wavelet scales to $2^j \geq 2^{j_m}$. All convolutions are performed on this sampling grid, without subsampling. Figure \ref{fig:first_order_scattering} shows the real and imaginary parts of $f * \psi^k_j $ along $4$ orientations and $3$ dyadic scales, for $f$ being a geometrically regular function shown in Figure \ref{fig:original}.

\begin{figure}
\begin{minipage}[c]{0.24\textwidth}
  \centering
  \subfigure[]
    {\label{fig:original}\includegraphics[width=\linewidth]{Figure_1.png}}

\end{minipage}\hspace{2cm}
\begin{minipage}[c]{.6\linewidth}
  \centering

  \subfigure[]
    {\label{fig:first_order_scattering}\includegraphics[width=\linewidth]{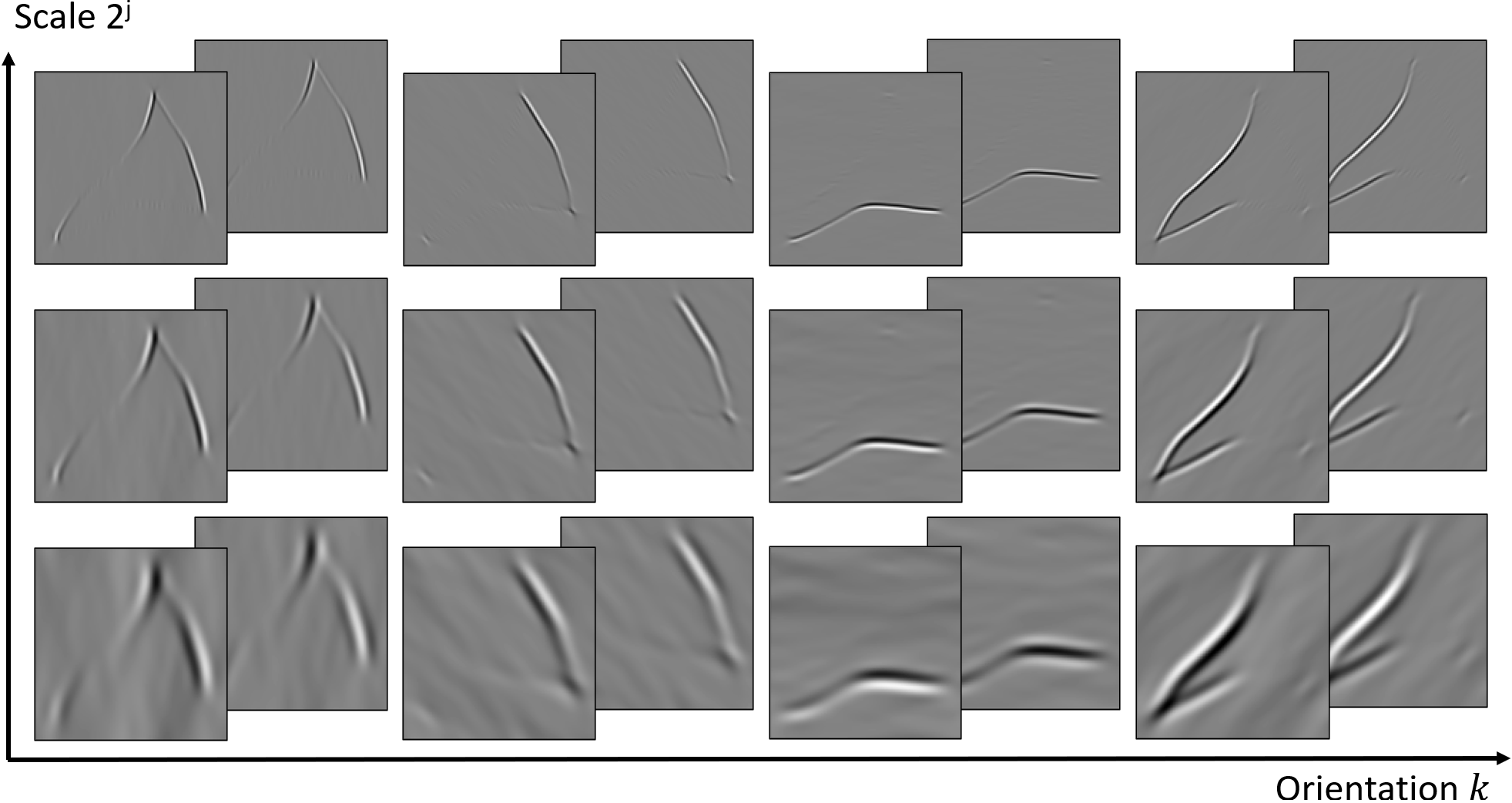}}

    \vspace{-2mm}

    \subfigure[]
  {\label{fig:first_order_scattering_modulus}\includegraphics[width=\linewidth]{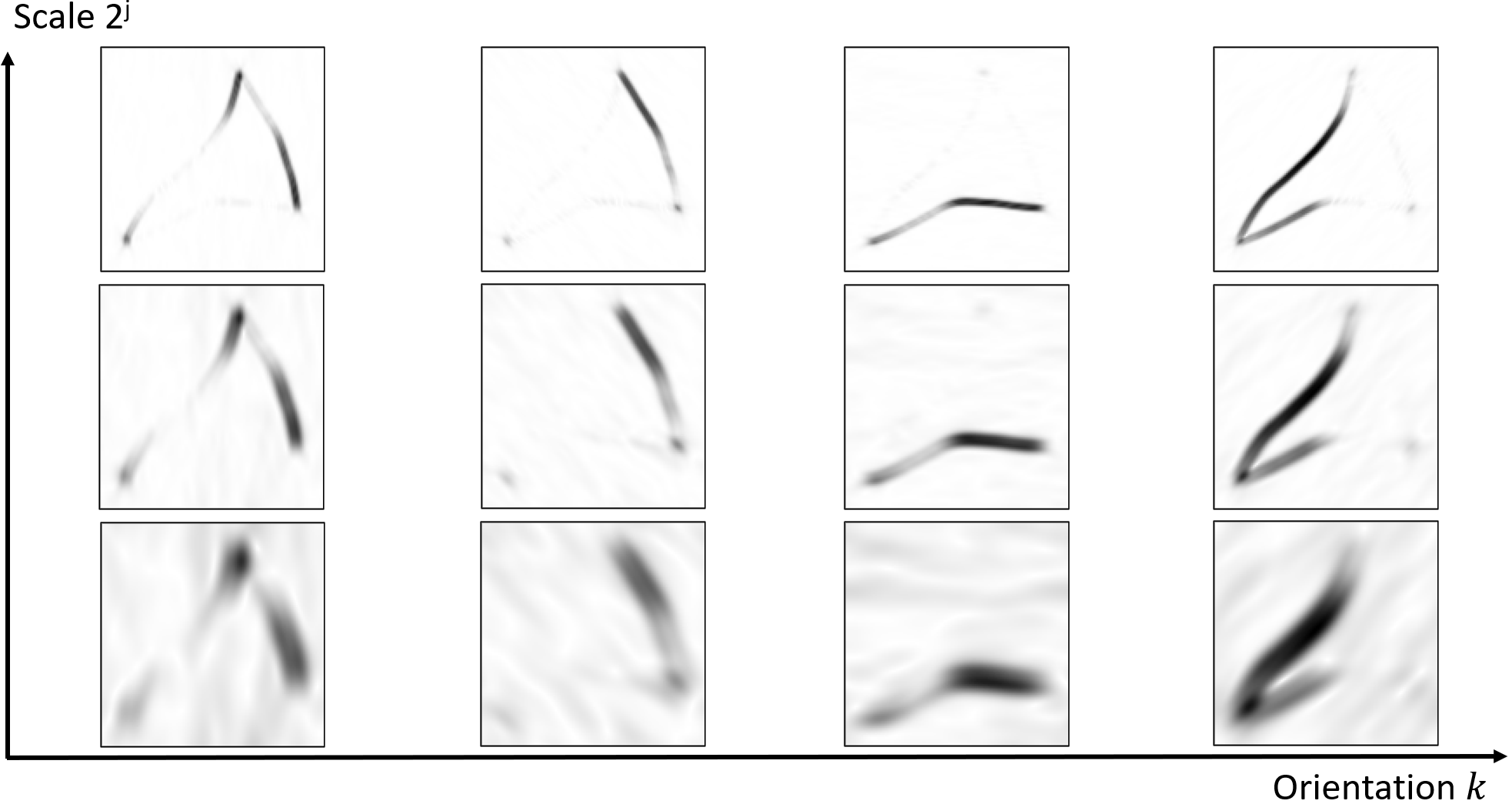}}
\end{minipage}
\caption{Dyadic directional wavelet transform of a $\mathbf C^2$ geometrically regular image, discretised in dimension $d=128^2$. (a): Original image. (b): Real and imaginary parts of wavelet coefficients $f * \psi^k_j$ of image (a), for scales $2^j$ and orientations $k$ which respectively vary vertically and horizontally. (c): Modulus 
$|f * \psi^k_j|$ of these wavelet coefficients. They are large when the vanishing moments of $\psi^k_j$ are not aligned with the tangent to the edge. Plots (b) and (c) are normalized such that images on a same row share the same minimum and maximum value.}
\end{figure}

To capture directional image regularities, we shall impose that $\psi$ has $m$ directional vanishing moments in directions $\theta \in [-\pi/4 , \pi / 4]$.
 It means that wavelets rotated in these directions are orthogonal to any one-dimensional polynomial function $u_1 \mapsto q(u_1)$ of degree $m-1$:
\[
\forall \theta \in [-\pi/4 , \pi / 4]~~,~~\forall u_2 \in [0,1]~,~ \int q(u_1) \psi(r_\theta(u_1,u_2)) du_1 = 0 .
 \]
\ref{app:wavelet-design} gives a sufficient condition on the Fourier transform of wavelets to obtain directional vanishing moments. 
Figure \ref{fig:wavelets_numerics} shows the real and imaginary parts of a wavelet having $m=2$ vanishing moments in directions $\theta \in [-\pi/4 , \pi / 4]$, along with its modulus and phase. This latter oscillates in the vanishing moments directions and remains nearly constant in the orthogonal direction.

\begin{figure}
    \centering
    \subfigure[]
    {\includegraphics[width=30mm]{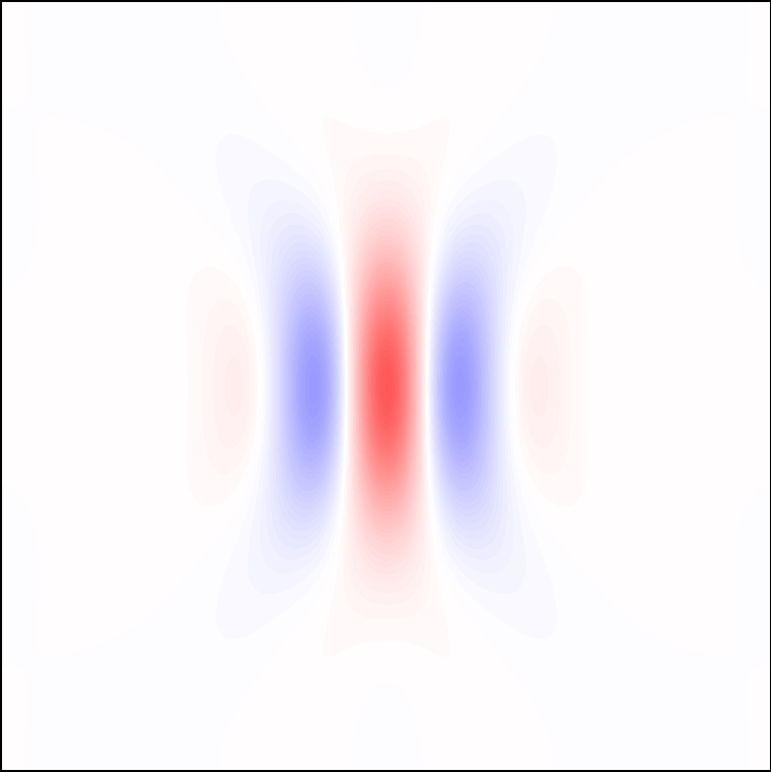}}
    \subfigure[]
    {\includegraphics[width=30mm]{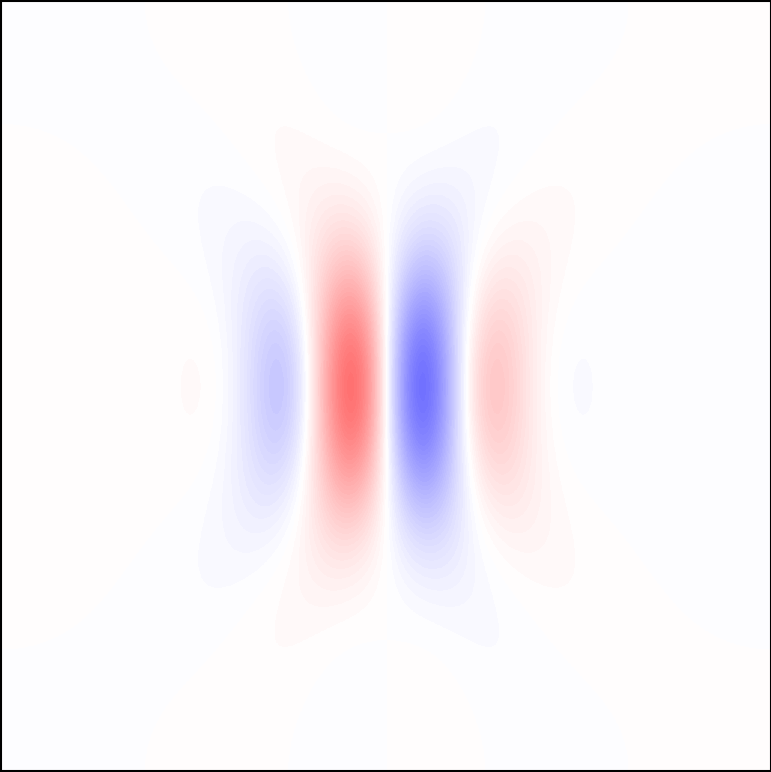}}\hspace{1cm}
    \subfigure[]
    {\includegraphics[width=30mm]{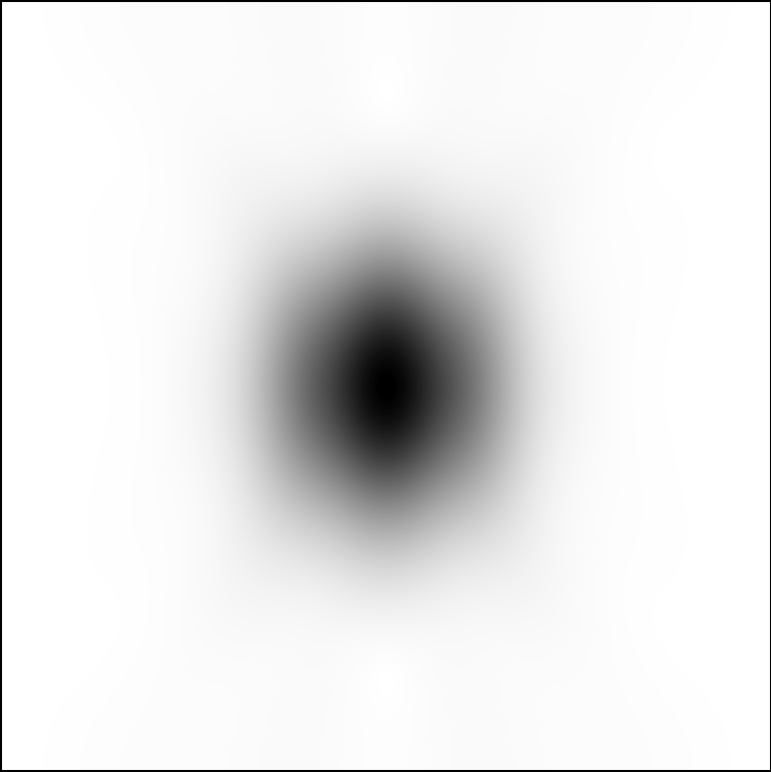}}
    \subfigure[]
    {\includegraphics[width=30mm]{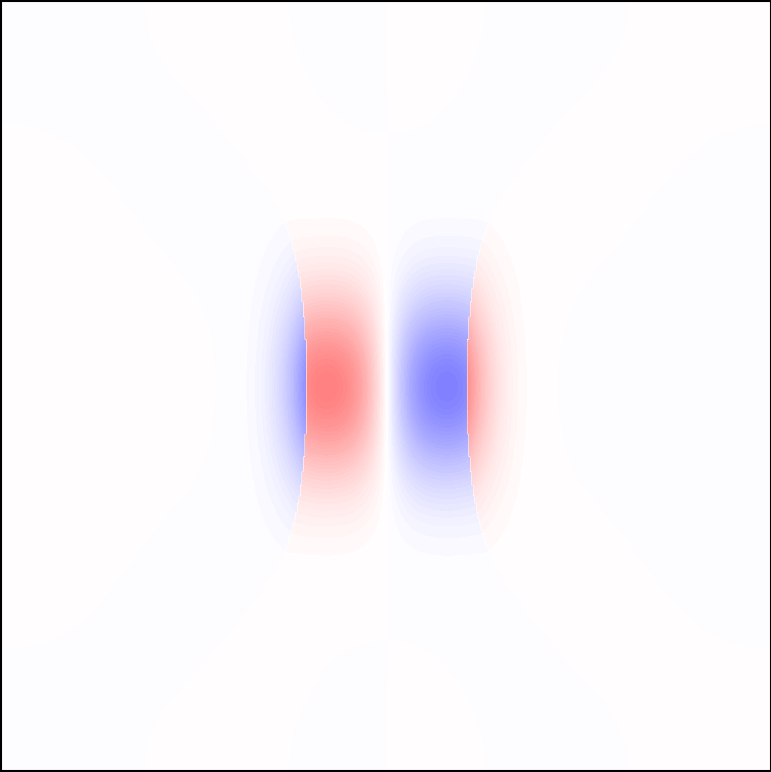}}
    \caption{Real (a) and imaginary parts (b) of a wavelet $\psi$ having two vanishing moments in directions $\theta \in [-\pi/4 , \pi / 4]$. Figures (c) and (d) show respectively $\psi$'s modulus and phase (where $|\psi|$ is non negligible). One should notice that lines of constant phase are nearly vertical.}
    \label{fig:wavelets_numerics}
\end{figure}

If $f$ is uniformly Lipschitz $\alpha \leq m$, since each wavelet $\psi^k_j$ has $m$ vanishing moments, one can verify as in (\ref{decayCalpha}) (modulo the difference of normalisation) that there exists $C > 0$ such that
\begin{equation}
\label{eq:isotropic-lipsch}
\forall u \in [0,1]^2, ~ |f * \psi^k_j (u) | \leq C\, 2^{ j \alpha } .
\end{equation}
For $\Ca$ geometrically regular images, Section \ref{sec:wavelet} explains that wavelet coefficients are large along the edge where $f$ is discontinuous. Standard wavelets thus fail to exploit the regularity of contours, yielding representations that are less sparse than theoretically possible and consequently suboptimal denoising performance. In what follows, we demonstrate that this geometric regularity can be effectively captured using wavelets with vanishing moments in the direction tangent to the edge. Since the image intensity varies regularly in this direction, the corresponding wavelet coefficient is small. This is formalized by the following theorem, which considers a periodic image having a single edge whose tangents have an angle in $[(k-1) \pi / 4 , (k+1) /4 ]$.

\begin{theorem}
\label{th:first_order}
Suppose that $\psi$ is a $\C^2$ function with two vanishing moments and two directional vanishing moments in the cone of angles $[- \pi/4, \pi/4]$, such that $|\psi|$ decays faster than any rational function. Let $\alpha \in [1,2]$. Let $f$ be a periodic image on $[0,1]^2$ which is uniformly Lipschitz $\alpha$ above and below an edge curve, which is itself Lipschitz $\alpha$ and whose tangents have an angle in $[(k-1) \pi / 4 , (k+1) \pi / 4 ]$. 
For all $s < \alpha$, there exists $C$ such that:
\begin{equation}
\label{eq:first_order0}
\forall j \le \JJ,~ \forall u \in [0,1]^2,~~| f * \psi^k_j (u)| \leq C\, 2^{(s-1) j} .
\end{equation}
\end{theorem}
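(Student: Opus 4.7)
The plan is to combine three ingredients: rotation-covariance of the setup to reduce to $k=0$, a Whitney-type decomposition of $f$ that isolates the jump across the edge, and the directional vanishing moments of $\psi$ applied to the polynomial part of a Taylor expansion at $u$, with the tangent line to $\gamma$ at $u_1$ playing the role of a ``model halfplane''.

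Since $\psi^k = \psi \circ r_k$ and both the hypothesis on the edge tangent and the directional vanishing moments of $\psi$ rotate together, I would first assume without loss of generality that $k=0$. The edge is then locally the graph $w_2 = \gamma(w_1)$ of a Lipschitz $\alpha$ function with $|\gamma'(w_1)| \leq 1$, and $\psi$ has two directional vanishing moments along every line whose direction lies in $[-\pi/4,\pi/4]$. I would then write $f = \tilde f + h\,\mathbf{1}_\Omega$, where $\Omega = \{w_2 < \gamma(w_1)\}$, $\tilde f$ is a global Lipschitz $\alpha$ Whitney extension of $f|_{\Omega^c}$, and $h$ is globally Lipschitz $\alpha$. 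The standard Taylor/vanishing-moments argument immediately yields $|\tilde f * \psi^0_j(u)| \leq C\,2^{j\alpha} \leq C\,2^{j(s-1)}$, since $j \leq \JJ \leq 0$ and $s < \alpha$.

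The heart of the proof is to bound $(h\,\mathbf{1}_\Omega) * \psi^0_j(u)$. I would Taylor expand $h(w) = p_u(w) + R(w)$ at $u$ with $p_u$ an affine polynomial and $|R(w)| \leq C|w-u|^s$; the remainder against $\mathbf{1}_\Omega$ contributes $\leq \int |v|^s |\psi^0_j(v)|\,dv = O(2^{js})$. For the main term $(p_u\,\mathbf{1}_\Omega) * \psi^0_j(u)$, I would introduce the halfplane $\Omega' = \{w_2 < \ell(w_1)\}$ defined by the tangent line $\ell(w_1) = \gamma(u_1) + \gamma'(u_1)(w_1 - u_1)$ at $u_1$. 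The \emph{key lemma} is that $(p_u\,\mathbf{1}_{\Omega'}) * \psi^0_j(u) = 0$: after the change of variable $w = u - v$ this reduces to showing $\int_H q(v)\,\psi^0_j(v)\,dv = 0$ for every affine polynomial $q$ and every halfplane $H$ whose boundary has slope $\gamma'(u_1) \in [-1,1]$. Rotating the $v$-plane by $\theta = -\arctan\gamma'(u_1) \in [-\pi/4,\pi/4]$ turns $H$ into $\{y_2 > s_0\}$ and the dilated-rotated wavelet into $\psi \circ r_{-\theta}$ composed with a dilation; the inner $y_1$-integral of $q$ against the rotated wavelet then vanishes at every $y_2$ by the directional vanishing moments of $\psi$ along lines with direction in $[-\pi/4,\pi/4]$.

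The remaining term $(p_u\,(\mathbf{1}_\Omega - \mathbf{1}_{\Omega'})) * \psi^0_j(u)$ is controlled as follows. By Lipschitz $s$ regularity of $\gamma$, $\Omega \triangle \Omega' \subset \{w : |w_2 - \ell(w_1)| \leq L|w_1 - u_1|^s\}$. Writing $|p_u(u-v)| \leq |h(u)| + |\nabla h(u)|\,|v|$, integrating $|\psi^0_j|$ first over the $v_2$-strip of width $\sim L|v_1|^s$ and then against the fast decay in $v_1$, and rescaling $w = 2^{-j}v$ yields a bound $O(2^{j(s-1)})$ uniformly in the offset $a_0 = u_2 - \gamma(u_1)$. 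The hard step will be the halfplane identity of the key lemma: the bound $|\gamma'| \leq 1$ is precisely what keeps the derotation angle inside the cone $[-\pi/4,\pi/4]$ where $\psi$ has directional vanishing moments. A secondary technicality is to handle the case where $u$ lies far from the edge and to justify at sufficiently small scale $2^j$ that the edge restricts to a single graph on the essential support of $\psi^0_j$; both follow from the fast decay of $\psi$.
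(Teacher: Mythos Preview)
Your proposal is correct and takes a genuinely different route from the paper. Both arguments hinge on the same mechanism---rotating to align the integration direction with the edge tangent so that the directional vanishing moments of $\psi$ kill the leading term---but the surrounding decomposition is different. The paper first truncates to a ball of radius $\sim 2^{(1-\epsilon)j}$ using the fast decay of $\psi$, then applies Fubini: for each $v_2$ the slice $v_1 \mapsto f(u-v)$ is treated as a one-dimensional wavelet coefficient, and the ``bad'' strip of $v_2$-values where this slice crosses the edge has height $O(2^{(1-\epsilon)\alpha j})$ by the Lipschitz $\alpha$ bound on $\gamma - \tilde\gamma$. Your approach instead Whitney-decomposes $f = \tilde f + h\,\mathbf{1}_\Omega$, Taylor-expands $h$, and replaces the Fubini count by the halfplane cancellation lemma $(p_u\,\mathbf{1}_{\Omega'}) * \psi^0_j(u) = 0$ together with a direct estimate on the crescent $\Omega \triangle \Omega'$. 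Your argument is arguably cleaner---it makes the role of the tangent-angle constraint $|\gamma'| \leq 1$ completely explicit and avoids the $\epsilon$-truncation---but the paper's Fubini scheme is what carries over to the second-order bound of Theorem~\ref{theorem:scattering_bounds}, where one must estimate $\||f*\psi^k_j|*\psi^{k^\perp}_{j'}\|_1$: there the inner object $|f*\psi^k_j|$ is only Lipschitz, not $\mathbf C^\alpha$, so the Whitney/Taylor step is unavailable and one is forced back to slicing and counting.
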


This theorem proves that a wavelet having vanishing moments in the direction of an edge tangent can take advantage of the edge regularity to produce small amplitude coefficients. 
On the contrary, wavelets having no directional vanishing moments in these directions produce large amplitude wavelet coefficients. It
appears in Figure \ref{fig:first_order_scattering_modulus} which
shows that $|f * \psi_j^k|$ has a large amplitude along edges
where the tangents have an angle sufficiently different from $k \pi/4$.

\subsection{Scattering transform for geometric regularity}
\label{sec:geometric_regularity_scattering}

To define minimax denoising estimators for $\Ca$ geometrically regular functions, we need to precisely capture the regularity of these functions. It has three components. Outside edges, the functions are uniformly Lipschitz $\alpha$, which is captured by the fast decay of wavelet coefficients $|f * \psi^k_j|$. Theorem \ref{th:first_order} proves that anisotropic regularity of $f$ along edges is partly captured by the fast decay of $|f * \psi^k_j|$ when the wavelet directional vanishing moments are aligned with the edge tangent. However, the other wavelet coefficients are unable to characterize the sharp profile of the edge, which is a pointwise discontinuity when we move in the direction orthogonal to the edge curve. This issue is addressed by a scattering transform, which applies a second wavelet transform on $|f * \psi^k_j |$ to simultaneously capture the edge regularity and its sharp profile. A wavelet scattering transform applies a second wavelet transform to each $|f * \psi^k_j |$
\[
S f = \Big( f * \phi_{j_M} , W |f * \psi_{j,k}| \Big)_{j \leq j_M , k < 4} 
\]
with
\[
W|f * \psi^k_j | = \Big(|f * \psi^k_j|* \phi_\JJ \, , \, |f *  \psi_{j}^k| * \psi^{k'}_{j'} \Big)_{j' \le j_M, k' < 4} .
\]
Real and imaginary parts of such coefficients are shown in Figure \ref{fig:second_order_scattering} for a fixed $|f * \psi^k_j |$, shown in Figure \ref{fig:selected_first_order_modulus}, along $4$ different orientations $k'$ and 3 dyadic scales $j' > j$.
We compute the $\bf L^1$ norm of each of these second order wavelet coefficients. Since $\phi_\JJ \geq 0$ and $\int \phi_\JJ (u) du = 1$, it results that
$\| | f * \psi_j^k | * \phi_\JJ \|_1 = \|f * \psi_j^k \|_1 $. The scattering coefficients $\Lu$ norm can thus be written
\[
 \|W |f * \psi_j^k |\,\|_1  = \Big( \|f *  \psi_j^k \|_1 \,,\,  \||f * \psi_j^k| * \psi_{j'}^{k'}\|_1 \Big)_{j' \le j_M,k'<4} .
\]
Each $|f * \psi_j^k| * \psi_{j'}^{k'}$ computes the variations of $|f * \psi_j^k|$ 
in a direction $\pi k' / 4$, in neighbourhoods of size proportional to $2^{j'}$.
If $\psi$ is regular with $m$ derivatives then \citet{mallat2012group} prove that 
$\||f * \psi_j^k| * \psi_{j'} ^{k'}\|_1$ becomes negligible 
when $2^{j'} < 2^j$ because $|f * \psi_j^k|$ has a Fourier transform concentrated at lower frequencies than $\psi_{j'}^{k'}$. We thus only compute scattering coefficients for $j' > j$ as \citet{ScatteringBruna}. These scattering
coefficients are shown in Figure \ref{fig:second_order_modulus}.

\begin{figure}
\begin{minipage}[c]{0.24\textwidth}
  \centering
  \subfigure[]
    {\label{fig:selected_first_order_modulus}\includegraphics[width=\linewidth]{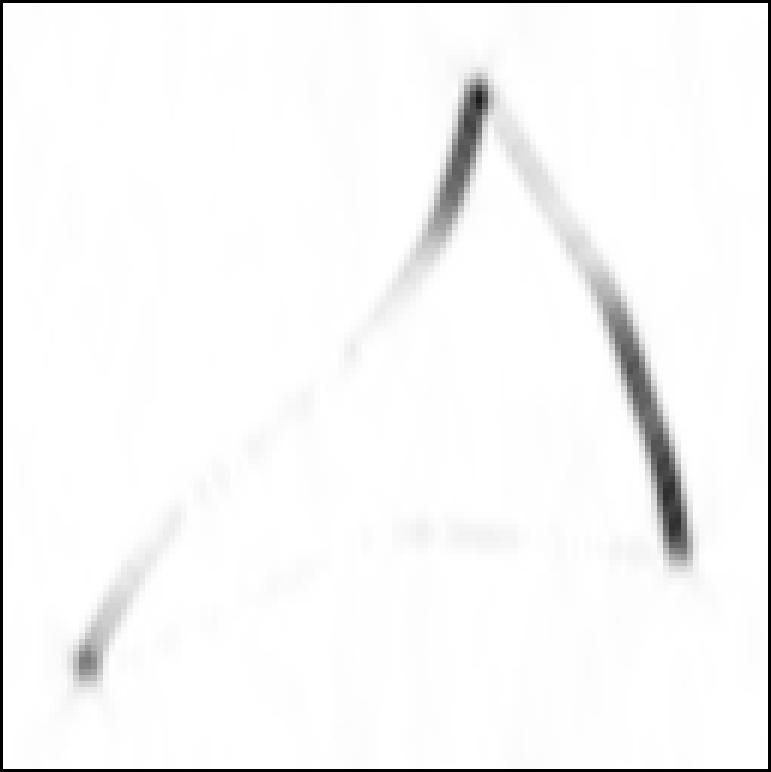}}

\end{minipage}\hspace{2cm}
\begin{minipage}[c]{.6\linewidth}
  \centering

  \subfigure[]
    {\label{fig:second_order_scattering}\includegraphics[width=\linewidth]{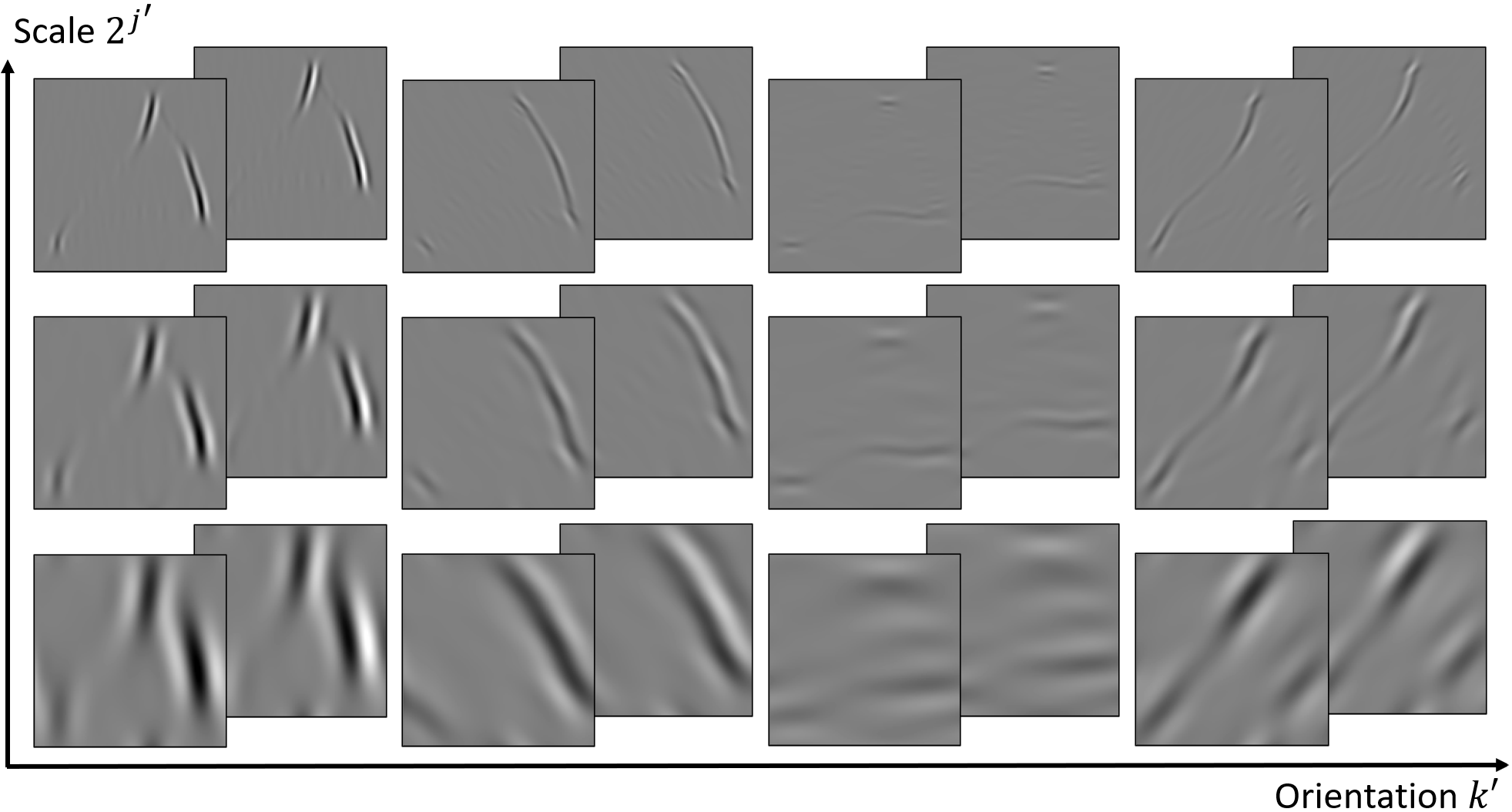}}

  \vspace{-2mm}

    \subfigure[]
  {\label{fig:second_order_modulus}\includegraphics[width=\linewidth]{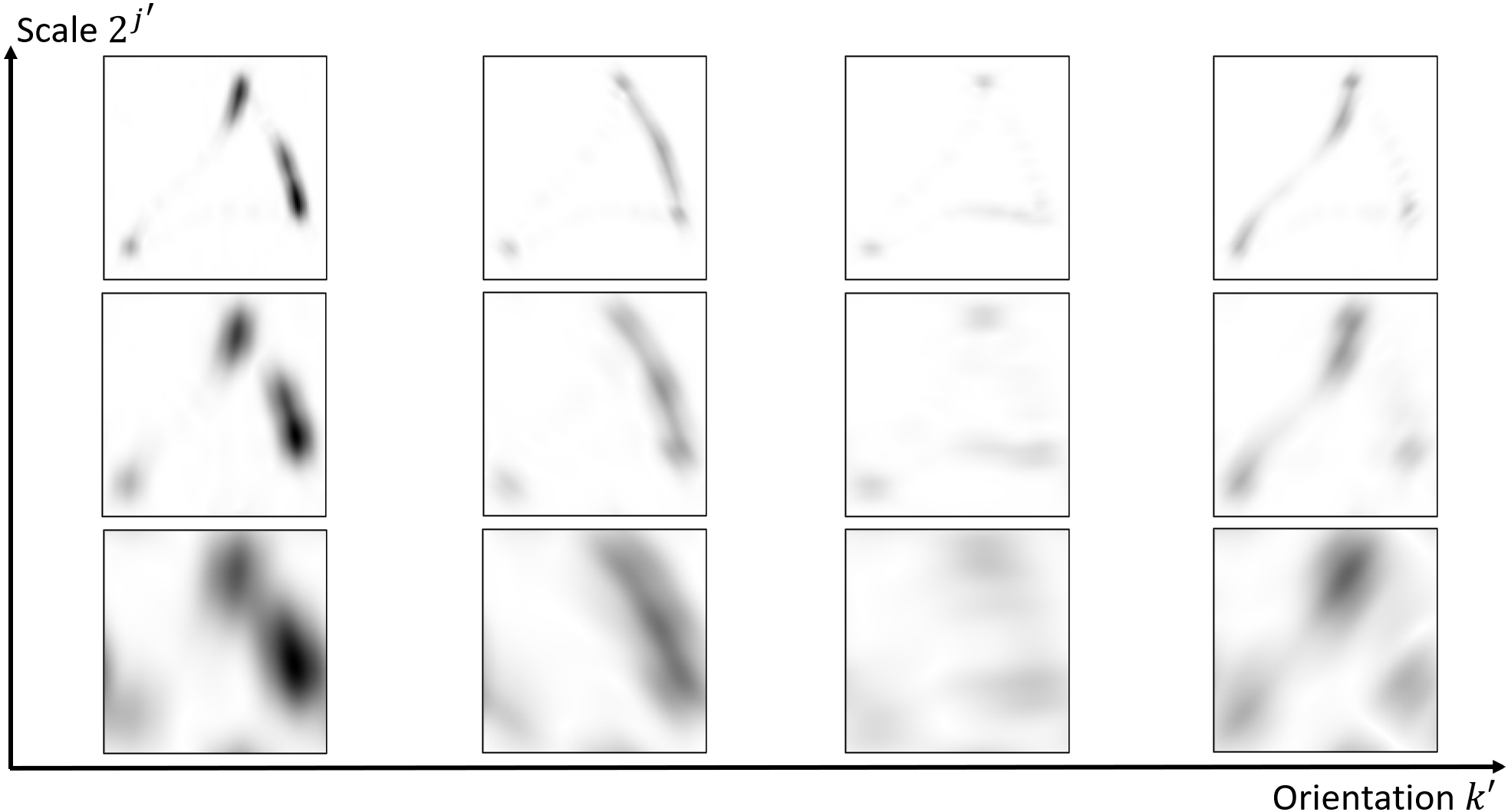}}
\end{minipage}
\caption{(a): Wavelet transform modulus $|f * \psi_j^k|$ for $j=j_m$ and $k = 0$. (b): Scattering coefficients $|f * \psi_j^k| * \psi_{j'}^{k'}$ for $j' > j$ varying over three dyadic scales and 4 orientations indexed by $k'$. (c): Modulus $||f * \psi_j^k| * \psi_{j'}^{k'}|$ of the previous scattering coefficients. Large amplitudes are obtained near edges which are oriented neither along $k \pi/4$ nor $k' \pi/4$. Coefficient maps from the third column, corresponding to $k' = k^{\perp}$, exhibit large amplitudes only in the vicinity of corners. Plots (b) and (c) are normalized so that images on a same row share the same minimum and maximum value.}\label{fig:1}
\end{figure} 

\paragraph{Directional regularity appears through small ${\bf L}^1$ norms}

For $\Ca$ geometrically regular images, the amplitude of $|f * \psi^k_j|$ has a fast decay at small scales $2^j$ besides the neighbourhoods of edges whose tangents are not aligned with $\psi^k_j$ vanishing moments, i.e. when the edge tangent has an angle which is not in $[(k-1) \pi / 4, (k+1) \pi / 4]$.
Applying $\psi^k_j$ therefore only partially sparsifies $f$, since on average half of the large wavelet coefficients remain (as contours are equally distributed across all orientations). In this paragraph, we show how to overcome this alignment issue by applying a wavelet $\psi^{k^\perp}_{j'}$, with $k^\perp = (k + 2) \mod 4$, to $|f * \psi^k_j|$.
Indeed, large amplitude values of $|f * \psi^k_j|$ are located at edge points where the direction of the tangent is in the complementary interval $[(k+1) \pi/4, (k+3) \pi / 4]$ modulo $\pi$. They vary regularly along the edge curve whose tangent 
is within this angular interval. However, this interval corresponds to the directions where $\psi^{k^\perp}_{j'}$ has vanishing moments, as illustrated in Figure \ref{fig:scattering_intuition}. 
With the same argument as in Theorem \ref{th:first_order}, a convolution with
$\psi^{k^\perp}_{j'}$ will produce small amplitude coefficients everywhere outside junctions between edges, which define corners.
The following theorem derives an upper bound on $\| |f * \psi_j^k| * \psi_{j'}^{k^\perp} \|_1$ which decreases to zero when the
second scale $2^{j'}$ decreases, for all $1 \leq \alpha \leq 2$. 
This result is obtained by proving that large coefficients at junctions do not
affect the asymptotic decay of the $\Lu$ norm.

\begin{figure}
    \centering
\includegraphics[width=50mm]{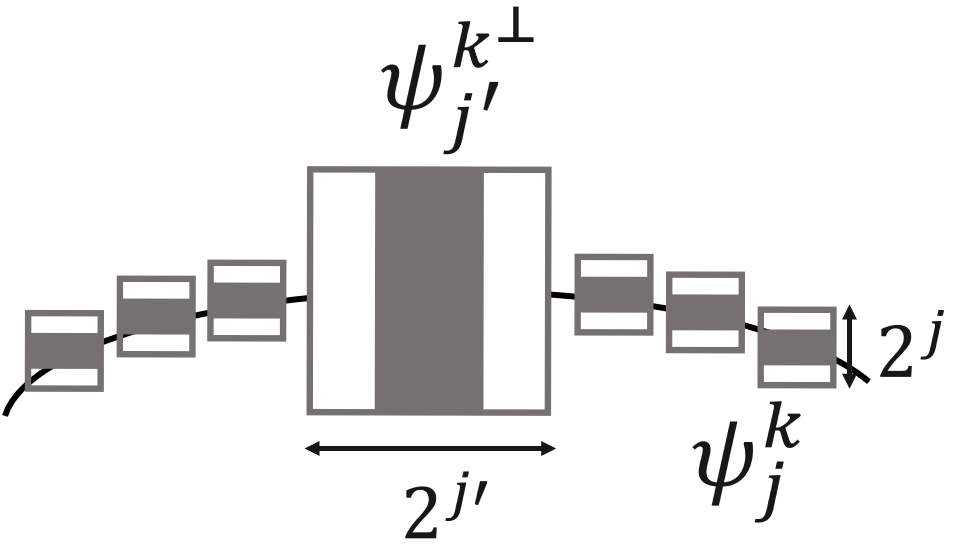}
    \caption{Wavelet coefficients $|f * \psi_j^k|$ are large at edge locations where $\psi_j^k$, shown as small squares, oscillate across the edge discontinuity. These coefficients have the same geometric regularity as the edge. Their wavelet transform $|f * \psi_j^k| * \psi_{j'}^{k^\perp}$ have a small amplitude because the larger scale wavelet $\psi^{k^\perp}_{j'}$, shown as a large square, oscillates along the perpendicular direction, i.e. along the edge tangent, where the edge is regular.
    }
    \label{fig:scattering_intuition}
\end{figure}

\begin{theorem}
\label{theorem:scattering_bounds}
Suppose that $\psi$ satisfies the same hypothesis as in Theorem \ref{th:first_order}.
If $f$ is a $\mathrm C^{\alpha}$ geometrically regular function with $1 \le \alpha \le 2$ then, for all $s < \alpha$, there exists $C$ such that:
\begin{equation}
\label{eq:theorem_nosum}
\forall j < j' \le \JJ,~ \forall k < 4, ~~ \| |f * \psi_j^k| * \psi_{j'}^{k^{\perp}} \|_1 \leq C 2^{ s j'},
\end{equation}
and
\begin{equation}
\label{eq:theorem}
    \forall s<\alpha,~ \forall k < 4, ~~ \sum_{j=-\infty}^\JJ \sum_{j'=j+1}^{[j/\alpha]} 2^{-s j'} \| |f * \psi_j^k| * \psi_{j'}^{k^{\perp}} \|_1  < + \infty.
\end{equation}

\end{theorem}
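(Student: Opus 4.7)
My plan is to reduce both (\ref{eq:theorem_nosum}) and (\ref{eq:theorem}) to a pointwise estimate on $|g * \psi_{j'}^{k^\perp}(u)|$, where $g := |f * \psi_j^k|$, combined with a control on the effective support of $g * \psi_{j'}^{k^\perp}$. The guiding picture is Figure \ref{fig:scattering_intuition}: the non-negative function $g$ is concentrated in an $O(2^j)$-tube around the edge set $\Gamma = \cup_i \gamma_i$; on the portion of $\Gamma$ where $g$ is not already small, i.e.\ where the tangent lies in the $k^\perp$-cone $[(k+1)\pi/4, (k+3)\pi/4]$, the function $g$ inherits a $C^\alpha$-type regularity along the tangent direction from the $C^\alpha$-regularity of $\gamma_i$ and of $f$ on each side. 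Since $\psi_{j'}^{k^\perp}$ has two directional vanishing moments precisely in this cone, it annihilates the smooth part of $g$.

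Concretely, I partition $[0,1]^2$ into: (i) a bulk region $R_B = \{u : \mathrm{dist}(u,\Gamma) \geq C\, 2^{j'}\}$; (ii) a smooth-edge tube $R_E$ of points within $C\, 2^{j'}$ of $\Gamma$ but at positive distance from the finite set of corners; and (iii) a corner neighbourhood $R_C$ of total area $O(2^{2j'})$. On $R_B$, the uniform Lipschitz-$\alpha$ regularity of $f$ outside $\Gamma$ together with the two vanishing moments of $\psi_j^k$ yields (see (\ref{eq:isotropic-lipsch})) the pointwise bound $|g(w)| \leq C\, 2^{\alpha j}$ for $w$ at distance $\geq C\, 2^j$ from $\Gamma$, so using $\|\psi_{j'}^{k^\perp}\|_1 = 1$ and the fast decay of $|\psi|$ to control far-field contributions gives $\int_{R_B}|g * \psi_{j'}^{k^\perp}| \leq C\, 2^{\alpha j} \leq C\, 2^{sj'}$ since $j < j'$ and $s < \alpha$. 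On $R_C$, the $L^\infty$-bound $|g * \psi_{j'}^{k^\perp}| \leq \|f\|_\infty$ combined with $|R_C| = O(2^{2j'})$ gives an $O(2^{2j'})$ contribution, likewise absorbed for $s \leq 2$.

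The heart of the argument is the estimate on $R_E$. For $u \in R_E$, choose the local orthonormal frame $(e_1, e_2)$ with $e_1$ along the tangent to $\Gamma$ at the nearest edge point $p(u)$; by construction $e_1$ lies in the $k^\perp$-cone, so $\psi_{j'}^{k^\perp}$ has two vanishing moments in the $v_1$-direction of this frame. Writing
\[
g * \psi_{j'}^{k^\perp}(u) = \int\!\!\int g(u - v_1 e_1 - v_2 e_2)\, \psi_{j'}^{k^\perp}(v_1 e_1 + v_2 e_2)\, dv_1\, dv_2
\]
and integrating in $v_1$ first, I establish that $v_1 \mapsto g(u - v_1 e_1 - v_2 e_2)$ is uniformly Lipschitz $s$, for every $s < \alpha$, with a constant uniform in $v_2$ and in $j$. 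The two vanishing moments in $v_1$ then convert this regularity, via Taylor expansion and the scaling $v_1 = 2^{j'} w_1$, into the pointwise bound $|g * \psi_{j'}^{k^\perp}(u)| \leq C\, 2^{sj'}$ on $R_E$. Integrating over $[0,1]^2$ yields (\ref{eq:theorem_nosum}). For (\ref{eq:theorem}), I sharpen the $L^1$-estimate by observing that $g * \psi_{j'}^{k^\perp}$ is, modulo fast-decaying tails, supported in a tube of area $O(2^{j'})$, so the pointwise bound on $R_E$ upgrades to $\|g * \psi_{j'}^{k^\perp}\|_1 \leq C\, 2^{(s+1)j'}$. Then $\sum_{j'=j+1}^{[j/\alpha]} 2^{-sj'}\, 2^{(s+1)j'}$ is a geometric series in $2^{j'}$ bounded by $C\, 2^{j/\alpha}$, and $\sum_{j \leq \JJ} 2^{j/\alpha}$ converges since $2^{1/\alpha} > 1$.

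The main obstacle is establishing the Lipschitz-$s$ regularity of $g$ along the tangent direction uniformly in $j$. While $f * \psi_j^k$ is $C^\alpha$ along the tangent --- combining the $C^\alpha$-regularity of $\gamma_i$, of $f$ on either side, and the smoothness of $\psi$ --- the modulus can create cusps at zero-crossings of $f * \psi_j^k$. One has to show that such crossings occur either off the ridge (where $g$ is already small by the bulk argument) or only in a thin transition layer between the regimes where $\psi^k$ and $\psi^{k^\perp}$ dominate, of measure absorbed into $R_B \cup R_C$, while keeping all constants uniform in $(j, j', u)$.
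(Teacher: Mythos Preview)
Your decomposition into bulk, edge, and corner regions and the treatment of the bulk and corner contributions are essentially the paper's approach. The genuine gap is in the edge region $R_E$: the assertion that $v_1 \mapsto |f * \psi_j^k|(u - v_1 e_1 - v_2 e_2)$ is uniformly Lipschitz $s$ for every $s<\alpha$, with a constant \emph{independent of $j$}, is false on both counts. First, the modulus cannot preserve H\"older regularity above Lipschitz~$1$; your zero-crossing heuristic does not rescue this, since even away from zeros the modulus of a $C^\alpha$ function is generically only Lipschitz. Second, and more damagingly, even the Lipschitz~$1$ constant of $f*\psi_j^k$ along the tangent is \emph{not} uniform in $j$: a direct computation (differentiating under the integral and using $|\gamma'|\le C\,2^{(\alpha-1)j'}$ on a $2^{j'}$-neighbourhood) gives a tangential derivative of order $2^{-j}\,2^{(\alpha-1)j'}$, which blows up as $j\to -\infty$. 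So the two directional vanishing moments of $\psi_{j'}^{k^\perp}$ cannot by themselves deliver the pointwise bound $C\,2^{sj'}$ with a $j$-free constant.

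The paper closes this gap by abandoning the hope of Lipschitz~$s$ regularity and working instead with Lipschitz~$1$ regularity of $|f*\psi_j^k|$ along the tangent, accepting the $j$-dependent constant, and then \emph{compensating} through the fact that the ridge where $|f*\psi_j^k|$ is of order one has transversal width only $O(2^j)$. Concretely, for $j'<j/\alpha$ one integrates the Lipschitz~$1$ bound over the thin strip $I_{j,j'}$ of width $O(2^j+2^{\alpha j'})$, and the factor $2^j$ from the strip width cancels the $2^{-j}$ in the Lipschitz constant; for $j'\ge j/\alpha$ one simply uses that the ridge occupies area $O(2^{j+j'})$ inside the $2^{j'}$-ball and the trivial $L^\infty$ bound. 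This case split on $j'\lessgtr j/\alpha$, together with the width-$2^j$ localisation of the ridge, is the missing mechanism in your argument; without it, no amount of vanishing moments produces a bound uniform in $j$.
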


The proof is in \ref{chap:thm_proof}. It shows that 
$||f * \psi_j^k| * \psi_{j'}^{k^{\perp}}|$ is small where
$f$ is uniformly Lipschitz $\alpha$ because $|f * \psi_j^k|$ is small. This quantity remains small in the neighbourhood of edge points which are not corners, because
either $\psi_j^k$ or $\psi_{j'}^{k^{\perp}}$ can take advantage of the directional
image regularity along the edge tangent. These coefficients may however become large at corners where the image has no directional regularity, but since there is
a finite number of such corners and since wavelets have a square support, they do not
affect the asymptotic decay of the $\Lu$ norm. 
This is illustrated by
Figure \ref{fig:second_order_modulus} which shows that the only
large amplitudes of $||f * \psi_j^k| * \psi_{j'}^{k^{\perp}}|$ correspond to edge junctions.
The result and the proof 
remain valid if the modulus $|f * \psi^k_j|$ is replaced by 
a rectification $ReLU(f * \psi^k_j)$ of the real and imaginary parts, where $ReLU(a) = \max(a,0)$ is the rectifier used in most convolutional networks.  

As explained in Section \ref{sec:sparse_denoising}, the decay rate of decomposition coefficients in a frame is directly related to the denoising error obtained with a sparse prior (\ref{sparse-prior}). Theorem \ref{theorem:scattering_bounds} specifies the decay of second order wavelet coefficient amplitudes, which depends on $\alpha$ despite the presence of discontinuities along contours. Nevertheless, this decay is not sufficient to get an optimal denoiser by replacing the sparse prior of equation (\ref{sparse-prior}) with a combination of scattering $\Lu$ norms $(\| |f * \psi_j^k| * \psi_{j'}^{k^{\perp}} \|_1)_{j\le j'\le \JJ, ~k<4}$. Indeed, 
we are missing the other scattering coefficients $(|f * \psi_j^k| * \psi_{j'}^{k'})_{j \le j' \le \JJ, ~k' \neq k^\perp}$ to get an invertible representation of $(|f * \psi_j^k|)_{j \le \JJ, ~k<4}$.
For a $\Ca$ geometrically regular function,
these missing coefficients are not sparse but are concentrated in the neighbourhood of edge curves. To obtain an optimal denoising estimator, we must enforce this property with appropriate conditions on these scattering coefficients.

\paragraph{Edge profile captured by large $\Lu$ norms}
The edges of $\Ca$ geometrically regular functions correspond to sharp pointwise discontinuities if we move in the direction perpendicular to their tangents.
Section \ref{sec:Xlets} explains that it requires to adapt the support of curvelets, shearlets and bandlets to obtain a sparse representation. Wavelet coefficients $|f * \psi_j^k|$ have a large amplitude along the edge, with a narrow support when moving away from the edge, which is equal to the support of $\psi_j^k$ and hence proportional to $2^j$.
We shall see that this property is captured by imposing that scattering
coefficients $|f * \psi_j^k| * \psi^{k'}_{j'}$  have a sufficiently large $\Lu$ norm
for $k'\neq k^\perp$.
It appears through Young's inequality, which implies that
\begin{equation}
\label{eq:young}
\||f * \psi_j^k| * \psi_{j'}^{k'}\|_1  \leq \|f * \psi_{j}^k\|_1  \, \|\psi_{j'}^{k'} \|_1. 
\end{equation}
Since $\|\psi_{j'}^{k'} \|_1 = 1$ and $|f * \psi_j^k| \ge 0$, 
\begin{equation}
\label{eq:young-diff}
\|f * \psi_{j}^k\|_1 \|\psi_{j'}^{k'} \|_1 - \||f * \psi_j^k| * \psi_{j'}^{k'}\|_1  = 
\int  \left( \int |f * \psi_j^k (u)|\,  |\psi_{j'}^{k'} (v-u)| du  -
 \left | \int |f * \psi_j^k(u)|\, \psi_{j'}^{k'}(v-u) du \right| \right) dv
\geq 0 .
\end{equation}
This difference is minimised if, for all $v$, $|f * \psi_j^k|(u)$ is small except for values of $u$ such that $\psi_{j'}^{k'}(v-u)$ has a nearly constant phase or is negligible.
This property is studied by \citet{bruna2019multiscalesparsemicrocanonicalmodels} to characterize point processes.
For directional wavelets shown in Figure \ref{fig:wavelets_numerics}, $|\psi|$ is nearly isotropic and the lines of constant phase are nearly vertical.
Since $k'$ is a rotation index, 
the domain where $|\psi^{k'}_{j'}|$ is non-negligible 
does not depend upon $k'$. Over this domain the lines of constant phase 
are nearly vertical straight lines rotated by ${k'} \pi/4$ and thus
have an angle $k' \pi/4 + \pi/2$.
Remember that $|f * \psi^k_j|$ has a large amplitude near edge points where the edge tangent has an angle close to $k \pi / 4 + \pi/2$. 
If $k' = k$, lines of constant phase have thus an angular discrepancy of, at most, $\pi/8$ with edges' tangents. 

Given a fixed $\|f * \psi_j^k\|_1$, maximising $\||f * \psi_j^k| * \psi_{j'}^{k}\|_1$ requires to concentrate the support of $|f * \psi_j^k|$ on regions where the phase of $\psi_{j'}^{k}$ varies as little as possible.
Section \ref{sec:numerics} shows empirically that such regions could adapt to the tangent, despite the potential misalignment between lines of constant phase and edge tangents. Indeed, as explained in Section \ref{sec:numerics}, $\||f * \psi_j^k| * \psi_{j'}^{k}\|_1$ is jointly optimized with a data fidelity term which forces $f$ to be close from the noisy observation. In particular, for $j'$ big enough compared to the noise variance $\sigma$, tangents are constrained by this observation.

For a fixed edge tangent close to $k \pi/4 + \pi/2$, $\||f * \psi_j^k| * \psi_{j'}^{k}\|_1$ provides a measure of $|f * \psi_j^k|$'s support width.
If $k' = k \pm 1$, $||f * \psi_j^k| * \psi_{j'}^{k'}|$ is small when the edge tangent has an angle close to $k' \pi/4$, thanks to $\psi_{j'}^{k'}$ vanishing moments (the proof is nearly identical to that of Theorem \ref{theorem:scattering_bounds}). If not, $||f * \psi_j^k| * \psi_{j'}^{k'}|$ behaves as $||f * \psi_j^k| * \psi_{j'}^{k}|$, providing big coefficients whose amplitudes measure $|f * \psi_j^k|$'s support width. Such coefficients dominate in $\||f * \psi_j^k| * \psi_{j'}^{k'}\|_1$.
Maximising these $\Lu$ norm thus still concentrates the support width of $|f * \psi_j^k|$, but not as strongly as $\||f * \psi_j^k| * \psi_{j'}^{k}\|_1$ since the lines of constant phase are further away of the edge tangent.

These properties show that the amplitude of $\||f * \psi_j^k| * \psi_{j'}^{k'}\|_1$, for $k' \neq k^{\perp}$,
provides a measure of the concentration of wavelet coefficients $|f * \psi_j^k|$ around the edge curve, by taking advantage of cancellation effects due to the phase of $\psi_{j'}^{k'}$.
A parallel can be drawn with the phenomenon of activation maximization observed in neural networks. 
Whereas the first layer essentially encodes wavelet filters \cite{luan2018gabor}, in deeper layers it has been shown that the maximisation of neuron activations corresponds to the detection of important image patterns \cite{zeiler2014visualising, qin2018convolutional}. Scattering coefficients correspond to neurons in the second hidden layer, computed with wavelet filters. The maximisation of
$||f * \psi_j^k| * \psi_{j'}^{k'} (u)|$ for $k' \neq k^\perp$ corresponds to the detection of an edge point at $u$.
An outstanding issue is to relate the values of $\|f * \psi_j^k\|_1 - \| |f * \psi_j^k| * \psi_j^{k'} \|_1$ to precise mathematical properties of edge profiles.

\subsection{Numerics and conjecture on the minimax optimality of scattering denoising}
\label{sec:numerics}

This section shows numerically that the scattering $\Lu$ norms do provide asymptotically minimax estimations of $\Ca$ geometrically regular images contaminated by additive noise and gives a mathematical conjecture.
A noisy discretised signal is defined in equation (\ref{eq:discrete-noise})
by an orthogonal projection $P_\V g = P_\V f + Z$ of a function $f \in \Lambda$, 
where $Z$ is a a Gaussian white noise of variance $\sigma^2$. An estimation $\hat f(g)$ of $f$ is calculated by finding the solution of a variational problem
\begin{equation*}
    \hat f(g) = \argmin_{h \in \V} \frac 1 {2} \|h - P_\V g \|^2 + \sigma^2 U(h) .
\end{equation*}
If $U$ is defined by $\Lu$ norms of wavelet coefficients, then Section \ref{sec:wavelet} shows that it is suboptimal to estimate $\Ca$ geometrically regular images. This section verifies it numerically with a dyadic wavelet transform. We then show numerically that if this denoising estimator is also regularized by $\Lu$ norms of scattering coefficients, then it reaches the asymptotic minimax rate for all $1 \leq \alpha \leq 2$.  We state a mathematical conjecture on the minimax optimality of such scattering denoisers. 

\paragraph{Dyadic wavelet transform denoising}
A dyadic wavelet transform denoising is computed from the 
$\Lu$ norms of wavelet coefficients, limited to the finest discretisation scale $2^{j_m}$, up to a larger scale $2^{j_M}$:
\begin{equation}
\label{eq:scattering_first_order_energy}
U (h) =  \lambda \sum_{j=j_m}^{j_M} \sum_{k = 0}^{3} 2^{-j} \|h * \psi_j^k \|_1 .
\end{equation}
In numerical computations, $h * \psi^k_j$ is calculated with a discrete convolution
over the discretisation grid and the $\Lu$ norm is computed with an $\ell^1$ norm over this sampling grid. 
This estimator can be interpreted as a translation invariant wavelet denoising estimator \cite{Coifman1995TranslationInvariantD}.
The normalisation factor $2^{-j}$ is different from wavelet orthonormal bases because dyadic wavelets $\psi^k_j$ have an $\Lu$ normalisation and the wavelet transform is not subsampled.

The mean-squared error $\epsilon_{ms} (\sigma) = {\mathbb E}_{f,g}(\|\hat f(g) - f \|^2)$ 
is computed on average over a set $\Lambda$ of $\Ca$ geometrically regular images, defined in \ref{appendix:discretising_geometry}, for $\alpha = 2$. Typical images in this set have a fixed Lipschitz bound and an edge length which concentrates so that the mean-squared error is of the order of the maximum error. 
Images are discretised over a grid of dimension $d=128^2$.
The $\Ca$ images in $\Lambda$ are generated following the algorithm proposed by \citet{kadkhodaie2024generalization} to sample random edges which are $\Ca$ with
a bounded Lipschitz constant. 
The algorithm is further described with examples of images in \ref{appendix:discretising_geometry}. For each value of $\alpha$, the mean-squared error
$\epsilon_{ms}$ is computed on average over $100$ realizations in $\Lambda$.

We optimize numerically $\lambda$ in order to minimise $\epsilon_{ms}$, which gives
$\lambda = 1.2$.
Figure \ref{fig:mse_first_second_order} shows in red the value of the mean-squared error $\log \epsilon_{ms} (\sigma)$ as a function of $\log \sigma$, which has a slope of $1$. 
Although $\epsilon_{ms} \leq \epsilon_M$ it remains of the same order because it is computed over images having edges of nearly the same length and which therefore yield an estimation
error which is close to the maximum error $\epsilon_M(\sigma)$. The slope
is compatible with the theoretical bound computed over
orthogonal wavelet estimators (\ref{eq:waveletorth-estim}), where $\epsilon_M (\sigma) \sim \sigma\, |\log \sigma|$.
The effect of $\log \sigma$ term is difficult to detect numerically. 
However, the numerics shows that the error rate is well above
the minimax error $\epsilon_m (\sigma) \sim \sigma^{4/3}$ for $\alpha = 2$.

\paragraph{Scattering denoising}
A scattering transform denoising is computed from the $\Lu$ norms of scattering
coefficients. According to the analysis of Section \ref{sec:geometric_regularity_scattering}, the geometric regularity of $f$ is specified by minimising the $\Lu$ norms $\|f * \psi_j^k\|_1$ which depend on the total length of the edge curves, by minimising $\| |f * \psi_j^k| * \psi^{k^\perp}_{j'}\|_1$ which specifies the directional image regularity parallel to the edge curves, and by maximising $\| |f * \psi_j^k| * \psi^{k'}_{j'}\|_1$ for $k' \neq k^\perp$ which specifies the edge profile across the edge. This joint optimization is computed by defining a regularity metric which is a weighted sum of these 3 terms, with a negative sign for the $\Lu$ norms that must be maximised
\begin{equation}
\label{eq:scattering_energy}
U (h) =  \sum_{j=j_m}^{j_M} \sum_{k = 0}^{3} \left(
\lambda  \,2^{-j} \|h * \psi_j^k \|_1 
+ \gamma  
 \sum_{j' > j}^{0} 2^{-j'}  \||h *  \psi_j^k| * 
 {\psi}_{j'}^{k^\perp} \|_1
 - \sum_{k' \neq k^\perp} \eta_{k'-k} 
 \sum_{j' > j}^{0} 2^{-j}  \||h *  \psi_j^k| * 
 {\psi}_{j'}^{k'} \|_1 \right)
\end{equation}
As in the dyadic wavelet transform case, experiments are performed over sets of $\Ca$ geometrically regular images with corners, for different $1 \leq \alpha \leq 2$.
We optimize numerically the constants $\lambda$, $\gamma$, $\eta_0$, $\eta_{\pm 1}$
in order to minimise the mean-squared error $\log \epsilon_{ms}$ over a range of
noise values $\sigma$,
with an optimization procedure discussed in \ref{appendix:denoisers_and_optim}.
We impose that $\eta_1 = \eta_{-1}$ because $\psi^{k+1}_{j'}$ and $\psi^{k-1}_{j'}$ have a symmetric role relatively to $|f * \psi_j^k|. $
The computed optimized constants are independent of $\sigma$ and $\alpha$:
\[
\lambda = 1.9~~,~~\gamma = 1.4~~,~~\eta_0 = 0.52~~,~~\eta_{\pm 1} = 0.10 .
\]
Figure \ref{fig:mse_first_second_order} shows in blue the value of the mean-squared error $\log \epsilon_{ms} (\sigma)$ as a function of $\log \sigma$ for $\alpha = 2$, computed on average over
$\Ca$ geometrically regular images. This error is below the error produced by the dyadic wavelet transform estimator, which shows the importance of the scattering $\Lu$ norms. More important, it has a different slope equal to $4/3$. It indicates that
$\epsilon_{ms} (\sigma) \sim \sigma^{4/3}$ up to a $|\log \sigma|$ term that cannot be detected due to numerical uncertainties, which means that it has an asymptotic minimax rate.
Figure \ref{fig:mse_varying_alpha} gives the value of $\log \epsilon_{ms} (\sigma)$ as a function of $\log \sigma$ for $4$ Lipschitz exponents $ \alpha \in \{2, 1.5, 1.2, 1\}$. We verify numerically that $\log \epsilon_{ms} (\sigma)$ has a linear
increase in $\log \sigma$ which superimposes with lines of slope
$2 \alpha / (\alpha + 1)$ shown in dashed line.  It indicates that
$\epsilon_{ms} (\sigma) \sim \sigma^{2 \alpha / (\alpha + 1)}$ up to a $|\log \sigma|$ term, which is the asymptotic minimax rate for all $\alpha$. This shows that these scattering denoisers reaches
the minimax rate for all $1 \leq \alpha \leq 2$, despite the presence of corners, and
can thus adapt to unknown $\alpha \leq 2$.

The noise range on which the experiment is performed is identical for dyadic wavelet transform and scattering but differs from the one used for the UNet of Figure \ref{fig:UNet_slopes}. Indeed, the UNet outperforms both estimators and is able to exploit the global shape of the deformed triangular foreground to denoise its contour.

\begin{figure}
    \centering
    \subfigure[]{\label{fig:mse_first_second_order}\includegraphics[width=75mm]{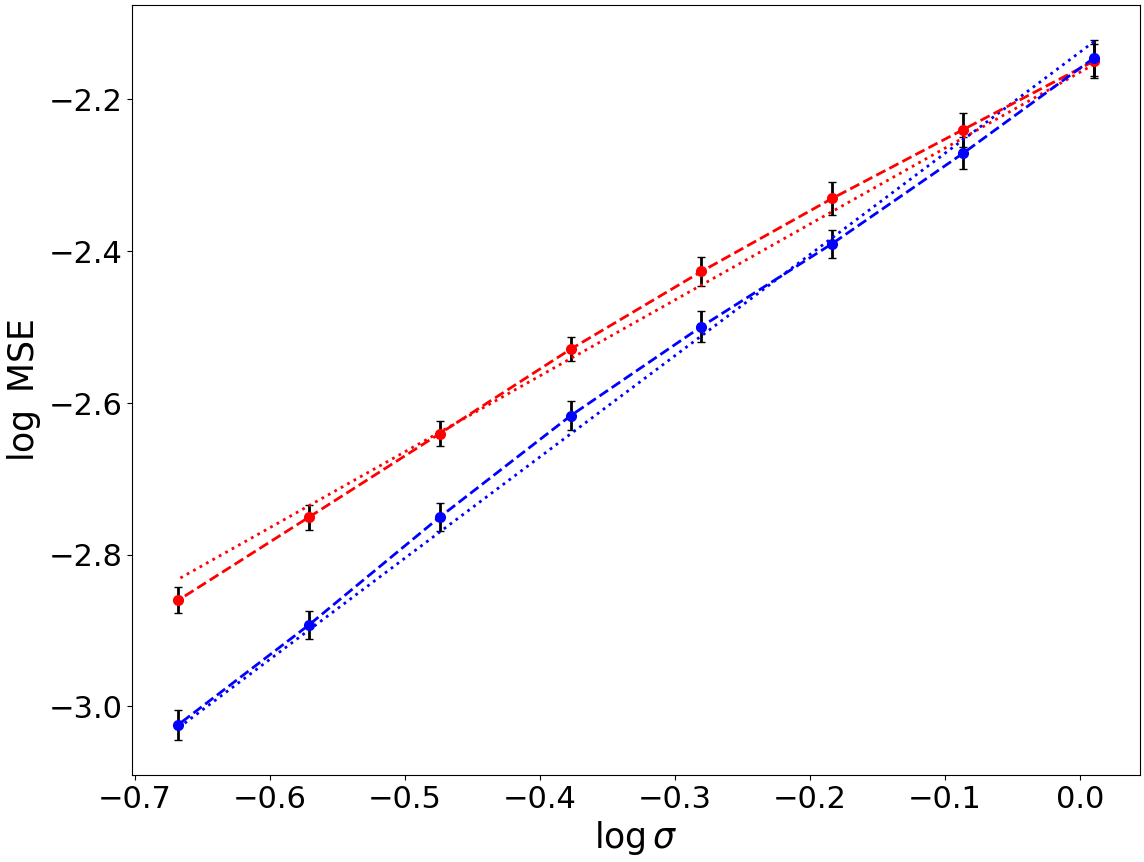}}
    \hspace{1cm}
    \subfigure[]{\label{fig:mse_varying_alpha}\includegraphics[width=75mm]{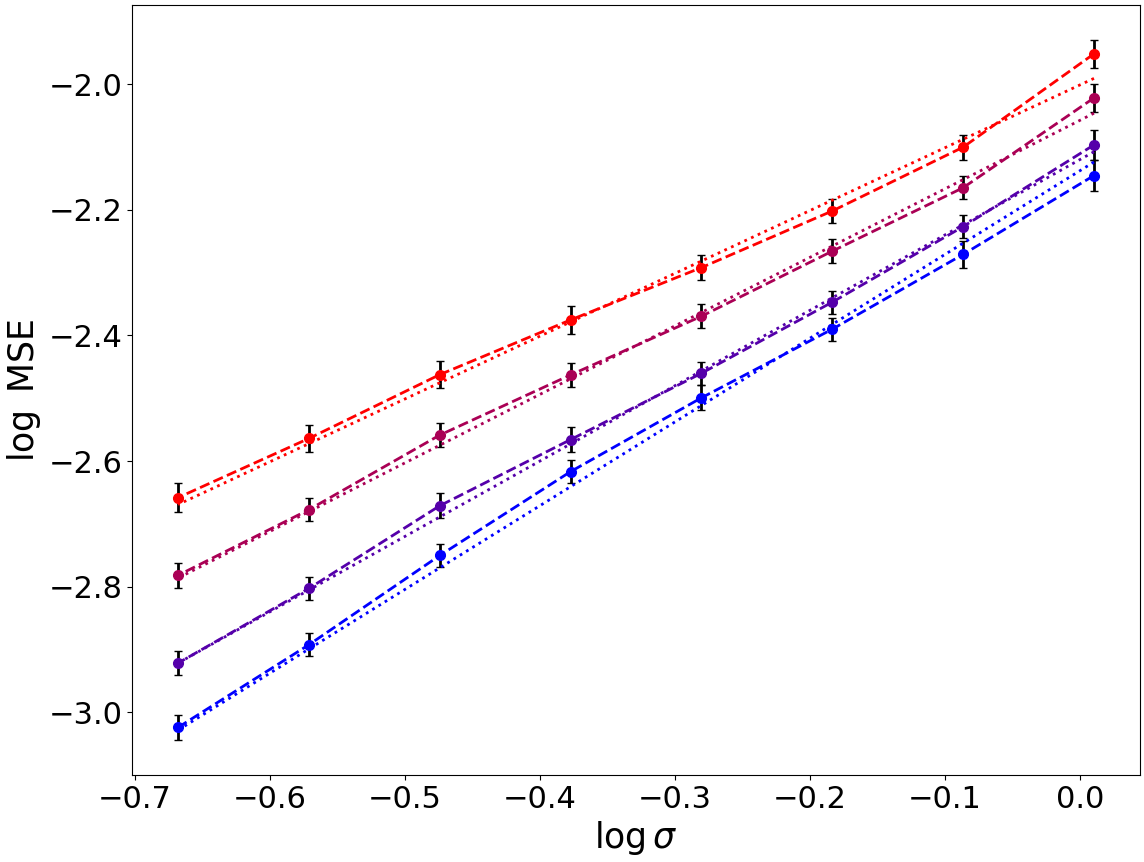}}
    \caption{(a): Comparison between the mean-squared errors $\log \epsilon_{ms} (\sigma)$ of a dyadic wavelet estimator (red dashed curve) and a scattering estimator (blue dashed curve), computed on average over $\bf C^2$ geometrically regular images defined in \ref{appendix:discretising_geometry}. The blue and red dotted lines have a slope of $1$ and $4/3$. (b): mean-squared errors $\log \epsilon_{ms} (\sigma)$ for a scattering denoiser and $C^{\alpha}$ geometrically regular images, for different $\alpha$. The different dashed coloured curve from red to blue corresponds to $\alpha \in \{2, 1.5, 1.2, 1\}$. These curves superimpose with the dotted lines having a slope $2 \alpha/(\alpha+1)$.}
    \label{fig:scattering_denoising_slopes}
\end{figure}

\begin{figure}
    
    \centering
    \label{fig:denoised_scattering}
    \subfigure[Original]{\includegraphics[width=40mm]{Figure_1.png}}
    \subfigure[Noisy]{\includegraphics[width=40mm]{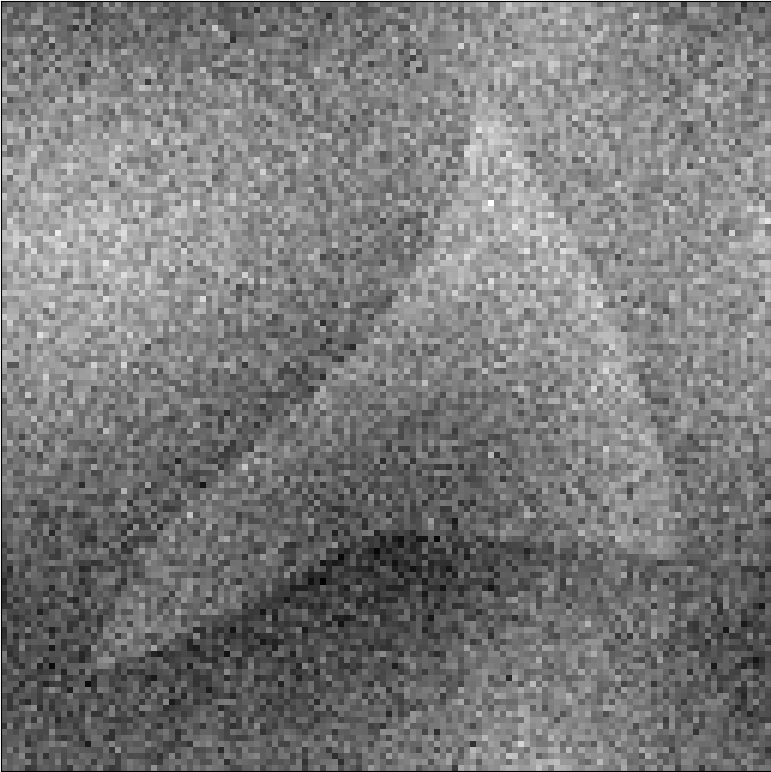}}
    \subfigure[Dyadic wavelet estimator]{\includegraphics[width=40mm]{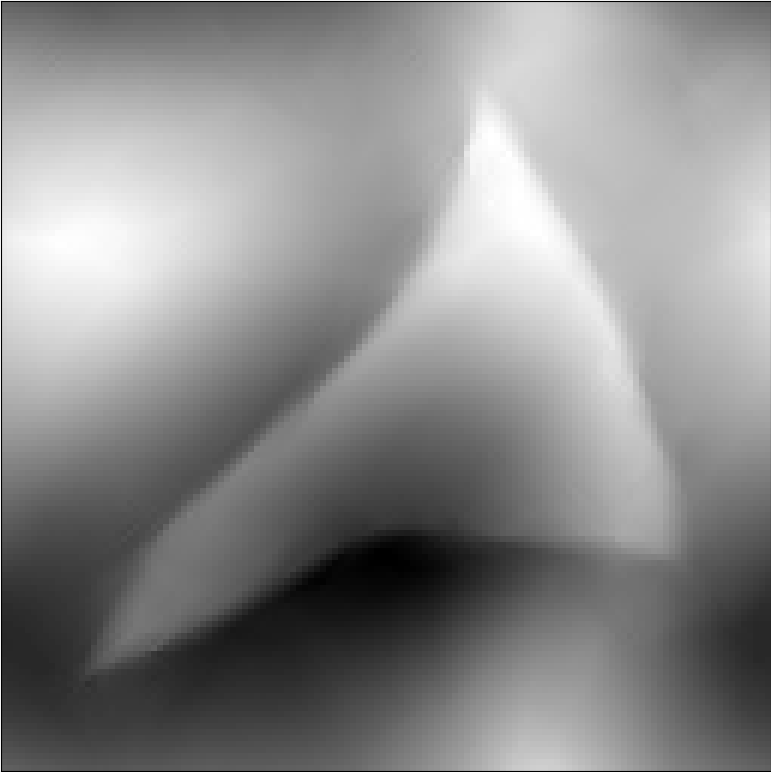}}
    \subfigure[Scattering estimator]{\includegraphics[width=40mm]{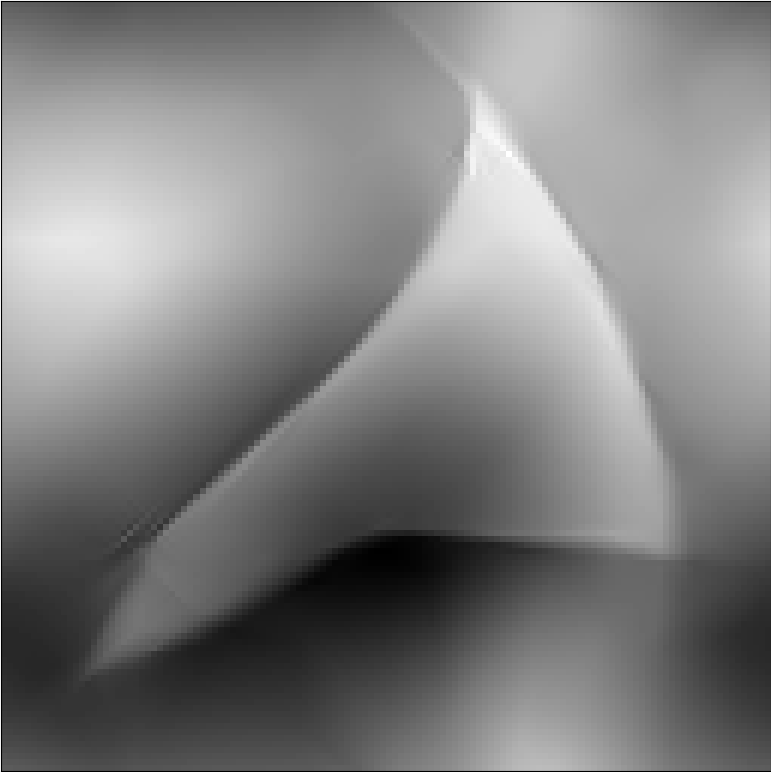}}
    \subfigure[Original, zoom]{\includegraphics[width=40mm]{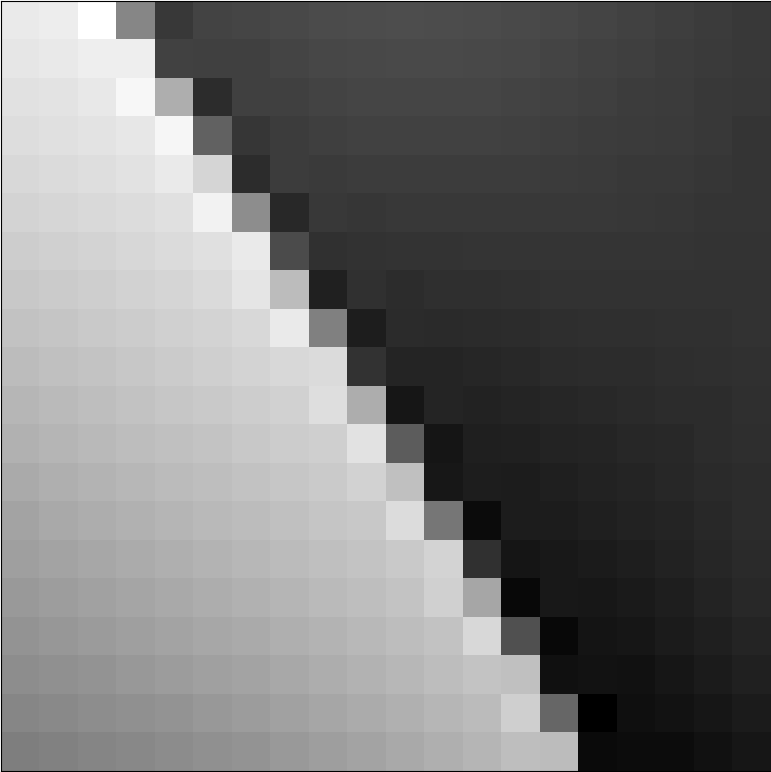}}
    \subfigure[Noisy, zoom]{\includegraphics[width=40mm]{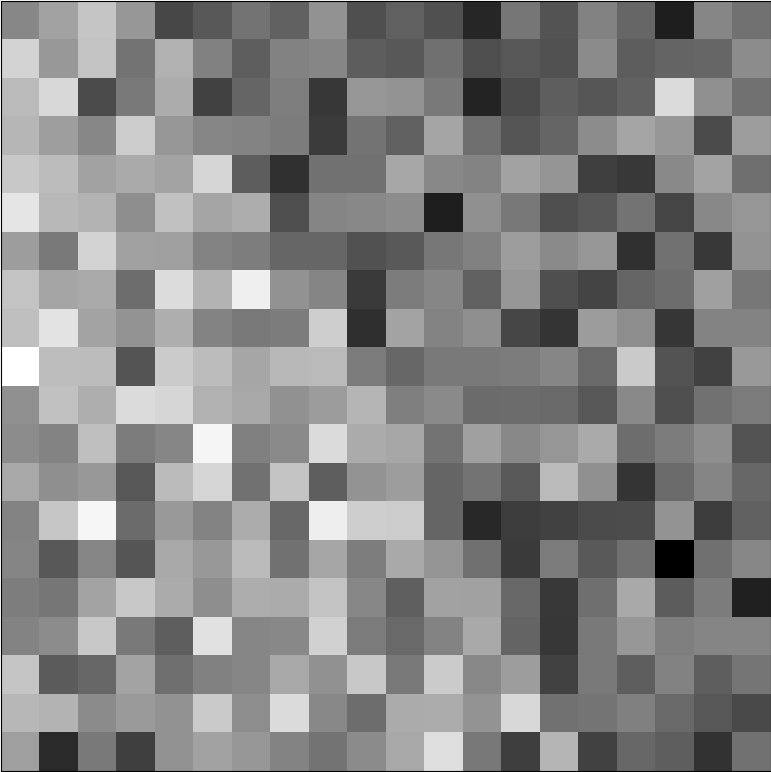}}
    \subfigure[Dyadic wavelet estimator, zoom]{\includegraphics[width=40mm]{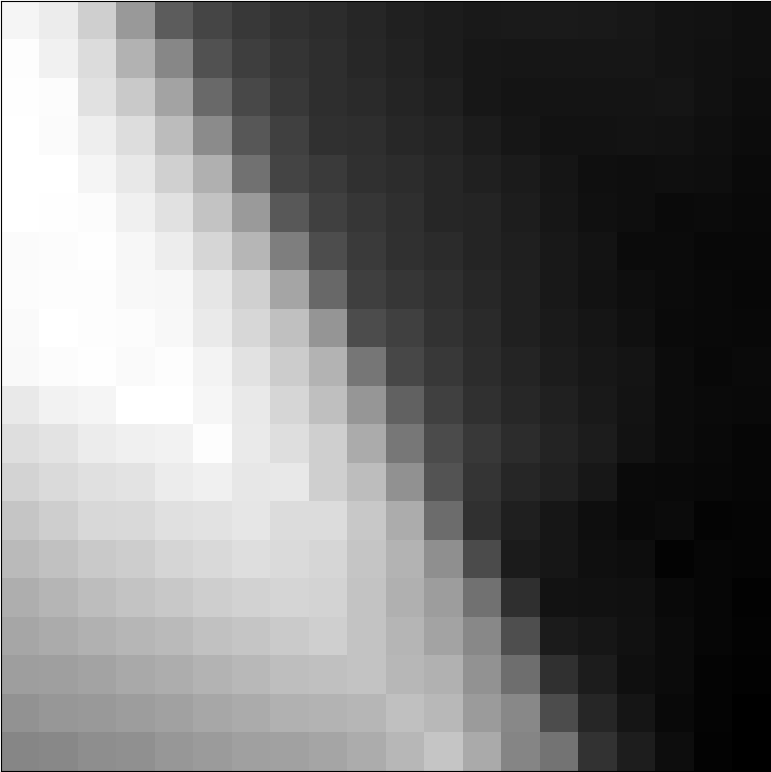}}
    \subfigure[Scattering estimator, zoom]{\includegraphics[width=40mm]{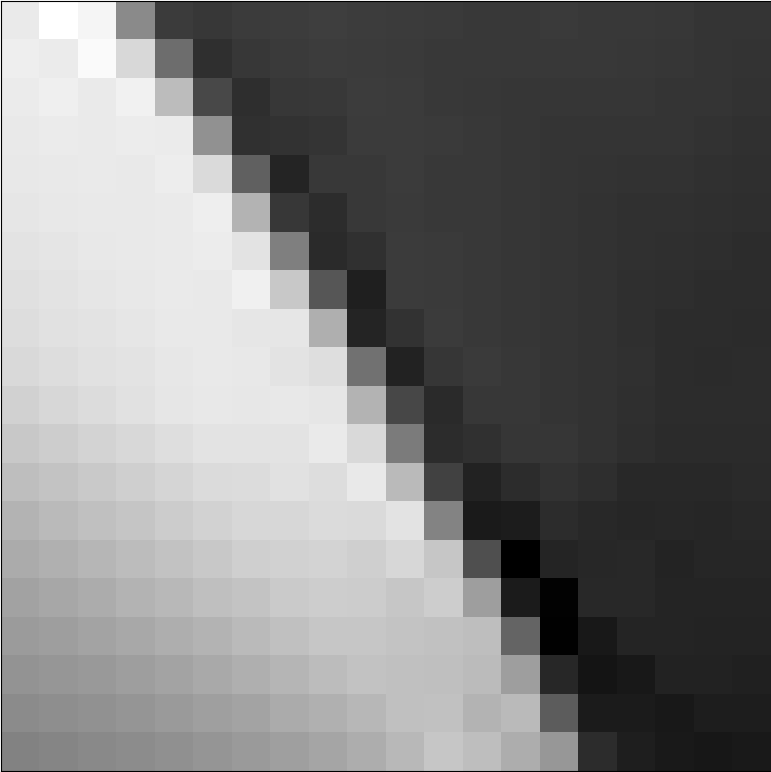}}

    \caption{Top row: (a) uncorrupted $\mathrm C^2$-geometrically regular image discretised in dimension $d=128^2$, ranging in $[-1,1]$, (b) noisy version of this image, corrupted with an additive Gaussian white noise of variance $\sigma^2 = 0.1$, (c) denoised image obtained with a dyadic wavelet estimator, (d) denoised image obtained with a scattering estimator. Bottom row: images from the top row are zoomed on a region with a corner and edges ((e) original, (f) noisy, (g) dyadic wavelet estimator, (h) scattering estimator). The PSNR of the noisy image is improved from 16.01dB to 32.21 dB by the dyadic wavelet estimator and to 34.55 dB by the scattering estimator.
    }

    \label{fig:scattering_denoising}
\end{figure}

Figure \ref{fig:scattering_denoising} illustrates the denoising results of 
dyadic wavelet transform and scattering transform for
a particular image $f \in \Lambda$, for $\alpha =2 $. It is normalized between -1 and 1 and corrupted with a noise of variance $\sigma^2=0.1$. As expected, 
the mean-squared error improvement of the scattering estimator corresponds to the
recovery of sharper edges, because the $\Lu$ norm of scattering coefficients give information on the edge profile. At corners, the scattering transform does not do any better than the wavelet estimators, but it has no impact on the asymptotic performances of the estimator because there is a finite number of corners which have
no contribution to the asymptotic rate of the error. The UNet estimation shown in Figure \ref{fig:MMS-denoising} provides a much better recover of corners. 
A scattering transform estimator is much simpler but its performances are still far from the performances of a deep neural network estimator.

\paragraph{Conjecture on minimax scattering optimality}
Numerical denoising experiments indicate that a denoising estimator regularized by
scattering $\Lu$ norms has a maximum error over $\Ca$ geometrically regular images which reaches the asymptotic minimax rate. This is formalized by the following conjecture.

\begin{conjecture}
Let $\Lambda$ be a set of $\Ca$ geometrically regular functions with bounded
Lipschitz constants, for $1 \leq \alpha \leq 2$.
Let $\hat f(g)$ be a scattering estimator of $f$ contaminated by a Gaussian white
noise of variance $\sigma$. 
There exists a constant $C$ which does not depend upon $\alpha$
and $\sigma$ such that
\begin{equation}
\sup_{f \in \Lambda} {\mathbb E_g}(\|\hat f(g) - f \|^2)
\leq C \,|\log \sigma|\, \sigma^{2 \alpha/(\alpha+1)} .
\end{equation}
\end{conjecture}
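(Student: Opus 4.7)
The plan is to mimic the harmonic analysis template for oracle inequalities, but adapted to the non-convex scattering functional $U$ defined in (\ref{eq:scattering_energy}). First, I would exploit the variational optimality of $\hat f(g)$: since $\hat f(g)$ minimises $\tfrac12\|h-P_\V g\|^2+\sigma^2 U(h)$ over $\V$, comparing with the candidate $P_\V f$ and expanding $P_\V g=P_\V f+Z$ yields
\begin{equation*}
\tfrac12\|\hat f - P_\V f\|^2 \le \lb \hat f - P_\V f , Z \rb + \sigma^2\big(U(P_\V f) - U(\hat f)\big).
\end{equation*}
The right-hand side is the central object. The first term is the stochastic error; the second is the bias term, and this is where the mixed-sign structure of $U$ creates the main difficulty, since the standard trick of bounding $U(\hat f)\ge 0$ is unavailable.

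Next, I would treat the bias term by splitting scales at a critical index $2^{j^*}\sim \sigma^{2/(\alpha+1)}$. For the scales $j\ge j^*$, Theorem \ref{theorem:scattering_bounds} controls the minimised terms: $\|f*\psi_j^k\|_1$ and $\||f*\psi_j^k|*\psi_{j'}^{k^\perp}\|_1$ both decay like $2^{sj'}$ for $s<\alpha$, giving a summable bound compatible with the target rate $\sigma^{2\alpha/(\alpha+1)}$. The maximised terms (the $\eta_{k'-k}$ contributions with $k'\ne k^\perp$) must be handled differently: I would use Young's inequality (\ref{eq:young}) to show that the maximised $\Lu$ norms of $f$ are close to $\|f*\psi_j^k\|_1$ up to a "phase defect" that quantifies the concentration of $|f*\psi_j^k|$ around the edge, of order $2^j$ away from the curve. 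The key lemma to prove is that for a $\Ca$ geometric image, this defect vanishes at a rate matching the target, so that at the optimum of the variational problem the maximisation constraint prevents the candidate $\hat f$ from diffusing $|f*\psi_j^k|$ transversely to the edge — this is the mechanism which replaces the elongated curvelet/bandlet supports.

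The stochastic term $\lb \hat f - P_\V f , Z\rb$ would be controlled by a dual-norm argument. Writing $\hat f - P_\V f$ as an expansion on the dyadic wavelet frame and using the Lipschitz property of the modulus and of $\psi\mapsto |f*\psi|*\psi'$, the noise contribution reduces to a maximum of $O(\log d)=O(|\log\sigma|)$ many Gaussian linear forms in $Z$ with variance controlled by the frame bound of (\ref{eq:Littlewood-Paley}). This produces the $|\log\sigma|$ factor in the conjectured bound, together with a $\sigma^2$ prefactor that is absorbed in the approximation/estimation balance at $2^{j^*}$.

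The main obstacle, and the reason the statement remains a conjecture, is the analysis of the maximisation mechanism: one must prove that the joint minimisation/maximisation of different scattering $\Lu$ norms provides an \emph{effective} sparsity prior equivalent to a bandlet-like adaptive support. Formally, this requires a quantitative lower bound of the form $\|f*\psi_j^k\|_1 - \||f*\psi_j^k|*\psi_{j'}^{k'}\|_1 \gtrsim 2^{j'-j}\,\mu_{\mathrm{edge}}(f,j,k)$ with a sharp constant, relating the Young defect to the transverse width of $|f*\psi_j^k|$ around the edge curve. Without such a quantitative "reverse Young" estimate adapted to edge profiles — which seems to require a delicate phase-cancellation analysis along lines of constant phase of $\psi_{j'}^{k'}$ — the negative-weight terms in $U$ cannot be turned into a usable upper bound on $\|\hat f - P_\V f\|^2$, and the oracle inequality does not close. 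Establishing this quantitative edge-concentration lemma is, in my view, the crux of the conjecture.
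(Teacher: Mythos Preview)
The paper does not prove this statement: it is explicitly labelled a \emph{conjecture}, supported only by numerical experiments (Figure~\ref{fig:scattering_denoising_slopes}) and the partial regularity results of Theorems~\ref{th:first_order} and~\ref{theorem:scattering_bounds}. There is therefore no ``paper's own proof'' to compare your proposal against. The authors themselves write that ``the main difficulty to prove this conjecture is to prove the impact of the scattering $\Lu$ norms $\||\hat f * \psi_j^k| * \psi_{j'}^{k'}\|_1$ for $k' \neq k^\perp$ on the denoising error,'' which is precisely the gap you isolate.

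Your outline is a sensible oracle-inequality scaffold, and you are right that the crux is the ``reverse Young'' estimate quantifying edge concentration through the phase of $\psi_{j'}^{k'}$. Without it, the negative-weight terms in $U$ make $U(P_\V f)-U(\hat f)$ uncontrollable from above, and the argument cannot close --- you identify this correctly and honestly. One additional gap you understate: in the stochastic term $\lb \hat f - P_\V f, Z\rb$, the vector $\hat f - P_\V f$ depends on $Z$ through a non-convex, non-Lipschitz variational problem, so it is not a fixed linear form and a simple maximum over $O(\log d)$ Gaussian functionals does not apply directly. One would need either a uniform bound over a suitable function class (e.g.\ via metric entropy of the scattering-regularised set) or a decoupling argument, and the non-convexity of $U$ makes both routes substantially harder than in the convex $\ell^1$ case. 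In short, your diagnosis of the main obstruction matches the paper's, and your proposal is a coherent plan of attack rather than a proof.
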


The main difficulty to prove this conjecture is to prove the impact of the
scattering $\Lu$ norms $\| |\hat f * \psi_j^k| * \psi_{j'}^{k'} \|_1$ for $k' \neq k^\perp$ on the denoising error. These $\Lu$ norms are maximised and not minimised as it is usually done in sparse representations, and it is the lines of constant phase of $\psi_{j'}^{k'}$ which regularize $|\hat f * \psi_j^k|$ and hence the edge profile of $\hat f$. If the modulus is replaced by a rectifier, then the phase is replaced by the sign of real coefficients. Clearly, the maximisation of these $\Lu$ norms provide a very different approach than minimisations of $\Lu$ norms of frame coefficients, to specify the geometric regularity of a function. The adaptivity of the resulting scattering estimator shows that it opens the possibility to create much more flexible estimators, as observed with deep neural networks. 
This conjecture gives a mathematical framework and a first step to better understand the mathematics underlying the performances of deep neural network estimators.

\paragraph{Acknowledgements} This work was supported by PR[AI]RIE-PSAI-ANR-23-IACL-0008 and the DRUIDS projet ANR-24-EXMA-0002. It was granted access to the HPC resources of IDRIS under the allocations 2024-AD011014371R1 and 2025-AD011016159R1 made by GENCI. The authors thank Etienne Lempereur for his valuable feedback on the manuscript.

\appendix

\section{Wavelet with directional vanishing moments}
\label{app:wavelet-design}

In the following we explain how to define a wavelet having directional vanishing moments in
an angular cone. Lemma \ref{lemma:cone} gives a sufficient condition to define the
Fourier transform of such wavelets. We then detail how wavelets used in experiments of Section \ref{sec:numerics} are built.

\begin{lemma}
\label{lemma:cone}

If the Fourier transform $\hat \psi$ of $\psi$ is $\C^{m-1}$ and satisfies $\hat \psi(r_{-\theta}(\om_1,\om_2)) = O(|\om_1|^m)$ for all $\theta \in [-\pi/4 , \pi / 4]$, where $r_{\varphi}$ denotes the rotation of angle $\varphi$, then for any
one-dimensional polynomial function $u_1 \mapsto q(u_1)$ of degree $m-1$:
\begin{equation}
\label{eq:vanish}
\forall \theta \in [-\pi/4 , \pi / 4]~~,~~\forall u_2 \in \R~,~ \int q(u_1) \psi(r_\theta(u_1,u_2)) du_1 = 0 .
\end{equation}
Moreover, $\psi$ has $m$ vanishing moments, i.e. for any
two-dimensional polynomial function $(u_1, u_2) \mapsto q(u_1, u_2)$: $\int  q(u_1, u_2) \psi(u_1,u_2) du_1 du_2 = 0 $.
\end{lemma}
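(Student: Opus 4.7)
The plan is to transport the problem to the Fourier side, where the pointwise vanishing of polynomial moments against $u_1^p$ becomes the vanishing of $\partial_{\omega_1}^p$ of the transform on the axis $\omega_1 = 0$. To start, I would compute the Fourier transform of the rotated wavelet $\tilde\psi_\theta(u) := \psi(r_\theta u)$. The change of variable $v = r_\theta u$ together with the identity $(r_{-\theta})^T = r_\theta$ gives $\widehat{\tilde\psi_\theta}(\omega) = \hat\psi(r_\theta \omega)$. Since the range $[-\pi/4,\pi/4]$ is symmetric about $0$, the hypothesis applied at $-\theta$ reads
\[
\widehat{\tilde\psi_\theta}(\omega_1,\omega_2) = O(|\omega_1|^m) \qquad \text{as } \omega_1 \to 0.
\]

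The next step uses that $\hat\psi \in C^{m-1}$ implies $\omega_1 \mapsto \widehat{\tilde\psi_\theta}(\omega_1,\omega_2)$ is $C^{m-1}$ for each fixed $\omega_2$. A Taylor expansion at $\omega_1 = 0$, compared against the bound above, forces the first $m$ Taylor coefficients to be zero: that is, $\partial_{\omega_1}^p \widehat{\tilde\psi_\theta}(0,\omega_2) = 0$ for all $\omega_2$ and $0 \le p \le m-1$. Now for each fixed $u_2$, the expression $\int u_1^p \tilde\psi_\theta(u_1,u_2)\,du_1$ is a function of $u_2$ whose 1D Fourier transform (in $u_2$) is, up to the factor $i^p$, exactly $\partial_{\omega_1}^p \widehat{\tilde\psi_\theta}(0,\omega_2)$. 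Since this Fourier-side quantity is identically zero, so is the function of $u_2$, proving (\ref{eq:vanish}) for every monomial $u_1^p$ with $p \le m-1$, and hence by linearity for every polynomial of degree at most $m-1$.

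For the ``moreover'' part (read with the usual convention of polynomials of total degree at most $m-1$), take $\theta = 0$ in the first part to obtain $\int u_1^{p_1} \psi(u_1,u_2)\,du_1 = 0$ for every $u_2$ and $0 \le p_1 \le m-1$. Any polynomial $q(u_1,u_2)$ of total degree $\le m-1$ decomposes as $q(u_1,u_2) = \sum_{p_1=0}^{m-1} u_1^{p_1}\, r_{p_1}(u_2)$ with polynomial coefficients $r_{p_1}$, so Fubini yields $\int q\,\psi\, du_1\, du_2 = 0$.

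The main subtlety is bookkeeping rather than a deep obstacle: justifying differentiation under the integral and the interchange of integration order. Both are immediate from the rapid decay of $\psi$ assumed in the surrounding text (faster than any rational function, as invoked in Theorem \ref{th:first_order}). The only genuine piece of content is the short Taylor argument showing that a $C^{m-1}$ function that is $O(|x|^m)$ near zero must have its first $m$ derivatives vanishing at $0$.
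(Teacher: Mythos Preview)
Your proof is correct and follows essentially the same route as the paper: both transport to the Fourier side, use the Taylor argument that a $C^{m-1}$ function which is $O(|\omega_1|^m)$ has its first $m$ $\omega_1$-derivatives vanishing on $\{\omega_1=0\}$, and read this back as the vanishing of $\int u_1^n \psi(r_\theta u)\,du_1$. The only minor difference is in the ``moreover'' part: the paper argues that all partial derivatives of $\hat\psi$ of order $\le m-1$ vanish at the origin (differentiating the identity $\partial_{\omega_1}^n\hat\psi(0,\omega_2)\equiv 0$ in $\omega_2$), whereas you bypass this and apply Fubini directly to $q(u_1,u_2)=\sum_{p_1} u_1^{p_1} r_{p_1}(u_2)$, which is arguably cleaner and in fact yields the slightly stronger conclusion that moments vanish for any polynomial of $u_1$-degree $\le m-1$ regardless of its $u_2$-degree.
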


\begin{proof}

Assume that $\hat \psi$ is $\C^{m-1}$.
We first show that $\psi$ has $m$ directional vanishing moments in directions $\theta \in [-\pi/4 , \pi / 4]$.
If $\hat \psi (\om_1,\om_2) = O(\om_1^m)$ then $\partial^n_{\om_1} \hat \psi (0,\om_2) = 0$ for all $n \leq m-1$.
It results that $\int u_1^n\psi(u_1,u_2) du_1 = 0$. Performing a rotation of angle $\theta$ proves that $\int u_1^n\psi(r_{\theta}(u_1,u_2)) du_1 = 0$ if
$\hat \psi (r_{-\theta}(\om_1,\om_2)) = O(|\om_1|^m)$, which is the first statement of the lemma.
We prove that $\psi$ has $m$ vanishing moments using, again, that $\partial^n_{\om_1} \hat \psi (0,\om_2) = 0$ for all $n \leq m-1$. Indeed, it implies that all partial derivatives of $\hat \psi$ of order $n \leq m-1$ vanish at $(0,0)$, from which we deduce the result.

\end{proof}

Numerical experiments of Section \ref{sec:numerics} are performed with Morlet/Gabor wavelets \cite{gabor1946theory, morlet1982wavea, morlet1982waveb}, modified to fulfil conditions of Lemma \ref{lemma:cone} with $m=2$. The mother wavelet $\psi$ is defined in the Fourier space as a product between the Fourier transform $\hat \psi_{\text{Morlet}}$ of a Morlet wavelet with two vanishing moments and a mask $\mu$, defined in equation (\ref{eq:mask}), satisfying $\mu(\om_1,\om_2) = 0$ for all $(\om_1,\om_2)$ having a polar coordinate angle greater than $\pi / 4$:

\begin{equation}
\label{eq:wavelet}
    \hat \psi = \hat \psi_{\text{Morlet}} \times \mu.
\end{equation}

The resulting wavelet $\psi$ is $\C^{\infty}$, has two vanishing moments and two directional vanishing moments in the cone of angles $[-\pi/4, \pi/4]$.
Figure \ref{fig:wavelets_numerics} shows the real and imaginary parts of $\psi$, as well as its modulus and phase. Contrary to Theorem \ref{theorem:scattering_bounds}'s hypothesis, $\psi$ does not have a fast decay, $\hat \psi$ being only $\C^1$. This slowest asymptotic decay rate has no numerical impact, since we observe empirically that $\psi$ inherits the spatial localization properties of $\psi_{\text{Morlet}}$ for dimension $d=128^2$ used in experiments of Section \ref{sec:numerics}.

In what follows, we review the definition of Morlet wavelets and we define the mask $\mu$. We then prove that the resulting wavelet $\psi$ satisfies Lemma's \ref{lemma:cone} hypothesis for $m=2$.

\paragraph{Morlet wavelets} A Morlet wavelet $\psi_{\text{Morlet}}$ is defined as a product between a plane wave and a Gaussian envelop:
$$\psi_{\text{Morlet}}(u) = c_{\sigma} e^{-\|u\|^2/(2 \sigma^2)} (e^{i \xi \cdot u} - k_{\sigma, \xi}),$$
where $x \cdot y$ denotes the standard scalar product in $\mathbb R^2$.
The parameter $k_{\sigma, \xi}$ is a scalar, adjusted such that $\psi$ has one vanishing moment: $\int \psi(u) du = 0$. 
Numerically, we use $\sigma = 0.7$ and $\xi = (1.05 \pi, 0)$.
A Morlet wavelet with two vanishing moments, still denoted $\psi_{\text{Morlet}}$, is defined by:
$$\psi_{\text{Morlet}}(u) = c_{\sigma} e^{-\|u\|^2/(2 \sigma^2)} (e^{i \xi \cdot u} - k_{\sigma, \xi} - i \tilde k_{\sigma, \xi} \cdot u ),$$
where $\tilde k_{\sigma, \xi}$ is a vector of $\mathbb R^2$ adjusted such that $\int u_1 \psi(u) du = 0$ and $\int u_2 \psi(u) du = 0$. Adjusting $\tilde k_{\sigma, \xi} = (\tilde k_{\sigma, \xi}^1, \tilde k_{\sigma, \xi}^2)$ is sufficient to impose vanishing partial derivatives of $\hat \psi_{\text{Morlet}}$ at $(0,0)$ since for $i \in \{1,2\}$:
$$\partial_{\om_i} \hat \psi_{\text{Morlet}} (0,0) = \sigma^2 \xi_1 e^{- \sigma^2 \|\xi \|^2/2} - \frac{\sigma^2}{2 \pi} \tilde k_{\sigma, \xi}^i.$$
The corresponding wavelet transform $W$, used in Section \ref{sec:numerics}, satisfies a Littlewood-Paley formula for $\delta \in (0,1)$, guaranteeing its invertibility:
$$\forall \omega \neq 0, 1 - \delta \le |\hat \phi_{J_M}(\omega)|^2+ 
\sum\limits_{j \le J_M,k} |\hat \psi_{j}^k|^2(\omega) \le 1 + \delta.$$

\paragraph{Constraint on the support of $\hat \psi$}

The mask $\mu$ from equation (\ref{eq:wavelet}) used in our numerical experiments is defined in polar coordinates as follows:

\begin{equation}
\label{eq:mask}
\forall r > 0, ~\mu(r,\varphi) = \tilde \mu( \varphi) \mathbb 1_{|\varphi|<\pi/4}= \sin \left(\frac{\pi(1+\sin(4\varphi+\pi/2))}{4} \right) \mathbb 1_{|\varphi|<\pi/4} ~~ \text{and} ~~ \forall \varphi \in [-\pi , \pi), ~\mu(0,\varphi) = 0.
\end{equation}
It is $\C^1$ everywhere except at the origin and satisfies $\mu(r,\varphi) = 0$ for all $r>0$ and $\varphi \notin [-\pi/4 , \pi / 4]$. Moreover, it has vanishing angular derivatives on the borders of the cone of angles $[-\pi/4, \pi/4]$: $$\forall r>0,~ \partial_\varphi \mu(r, \pi/4) = \partial_\varphi \mu(r, - \pi/4) = 0.$$
Denoting $(\mu_k)_k$ rotated versions of $\mu$ in directions $(k \pi/4)_{k < 4}$, this mask verifies: $\sum\limits_k \mu_{k}^2 = 1$.

\paragraph{Fulfillment of Lemma's \ref{lemma:cone} hypothesis}

To show that $\hat \psi$, defined through equation (\ref{eq:wavelet}), satisfies Lemma's \ref{lemma:cone} hypothesis for $m=2$, we need to prove that it is $\C^1$ and that $\hat \psi(r_{-\theta}(\om_1,\om_2)) = O(\om_1^2)$ for all $\theta \in [-\pi/4 , \pi / 4]$.

The Fourier transform $\hat \psi_{\text{Morlet}}$ and $\mu$ being respectively $\C^{\infty}$ on $\mathbb R^2$ and $\C^1$ on $\mathbb R^* \times \mathbb R^*$, $\hat \psi$ is $\C^1$ on $\mathbb R^* \times \mathbb R^*$. To prove that $\hat \psi$ is $\C^1$ on $\mathbb R^2$, one has to verify that its partial derivatives exist and are continuous in $(0,0)$. Since $\hat \psi_{\text{Morlet}}(0,0) = 0$ and since $\hat \psi_{\text{Morlet}}$ is a $\C^{\infty}$ function whose first order derivatives vanish at the origin, it satisfies $\hat \psi_{\text{Morlet}} (r, \varphi) = O(r^2)$ in polar coordinates. $\mu$ being bounded, this implies that $\hat \psi$'s partial derivatives exist and vanish at $(0,0)$. $\partial_{\varphi} \mu$ being bounded too, it also implies that $\lim\limits_{\om \rightarrow (0,0)} \partial_{\om_1} \hat \psi(\om_1, \om_2) = \lim\limits_{\om \rightarrow (0,0)} \partial_{\om_2} \hat \psi(\om_1, \om_2) = 0$, which is proven expressing these partial derivatives in polar coordinates. $\hat \psi$ is thus $\C^1$ on $\mathbb R^2$.

Since $\mu$'s support is included in the cone of angles $[-\pi/4 , \pi / 4]$, it is sufficient to show that $\hat \psi(r_{-\theta}(\om_1,\om_2)) = O(\om_1^2)$ for $\theta = \pm \pi/4$ to get this result for all $\theta \in [-\pi/4 , \pi / 4]$. The indicator function $\mathbb 1_{|\varphi|<\pi/4}$ does not play any role in the decay rate of $\hat \psi(r_{-\theta}(\om_1,\om_2))$ for $\omega_1 \rightarrow 0$. The result can thus be obtained studying the partial derivatives along $\omega_1$ of the function $\hat {\tilde \psi} (\om_1,\om_2)\mapsto (\hat \psi_{\text{Morlet}} \times \tilde \mu) (r_{-\theta}(\om_1,\om_2))$, where $\tilde \mu$ is defined in equation (\ref{eq:mask}). Expressing these derivatives in polar coordinates and using the fact that $\tilde \mu$ has vanishing angular derivatives for $|\varphi| \rightarrow \pi/4$ and that $\hat \psi_{\text{Morlet}} (r, \varphi) = O(r^2)$, one shows that $\partial_{\om_1} \hat {\tilde \psi} (\om_1, \om_2) = 0$ for all $\om_2$ and that $\partial_{\om_1}^2 \hat {\tilde \psi} (\om_1, \om_2) \le C$ for all $\om_2 \neq 0$. We thus have $\hat \psi(r_{-\theta}(\om_1,\om_2)) = O(\om_1^2)$ for all $\om_2 \neq 0$. The case $\om_2 = 0$ is given by $\hat \psi_{\text{Morlet}}$ decay for $\om \rightarrow 0$.

\section{Proof of Theorem \ref{th:first_order}}

\begin{theorem*}
Suppose that $\psi$ is a $\C^2$ function with two vanishing moments and two directional vanishing moments in the cone of angles $[- \pi/4, \pi/4]$, such that $|\psi|$ decays faster than any rational function. Let $\alpha \in [1,2]$. Let $f$ be a periodic image on $[0,1]^2$ which is uniformly Lipschitz $\alpha$ above and below an edge curve, which is itself Lipschitz $\alpha$ and whose tangents have an angle in $[(k-1) \pi / 4 , (k+1) \pi / 4 ]$. 
For all $s < \alpha$, there exists $C$ such that:
\begin{equation}
\forall j \le \JJ, ~ \forall u \in [0,1]^2,~~| f * \psi^k_j (u)| \leq C 2^{(s-1) j} .
\end{equation}
\end{theorem*}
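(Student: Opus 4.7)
The strategy is to split according to whether $u$ is far from the edge, where the standard vanishing moment argument applies, or near the edge, where we exploit the directional vanishing moments along the tangent. Let $d = \mathrm{dist}(u,\gamma)$ where $\gamma$ is the edge curve. When $d \gtrsim 2^j$, the wavelet $\psi^k_j$ is essentially supported (modulo its super-polynomial tail) in a region where $f$ is uniformly Lipschitz $\alpha$. Subtracting the local polynomial approximation $q_u$ of degree $\le 1$ and using the two full vanishing moments of $\psi^k$ gives $|f * \psi^k_j(u)| \lesssim 2^{j\alpha}$ by the classical argument, the contribution of the complement of the smooth region being controlled by the super-polynomial decay of $\psi$. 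Since $j \le j_M \le 0$ and $s < \alpha$ imply $2^{j\alpha} \le 2^{(s-1)j}$, this case is done.

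Assume now $d \lesssim 2^j$. Without loss of generality take $k=0$; let $u_0$ be the projection of $u$ onto $\gamma$ and introduce rotated coordinates $(t,n)$ centered at $u_0$ with $t$-axis tangent to $\gamma$ at $u_0$. By hypothesis this rotation angle lies in $[-\pi/4,\pi/4]$, so Lemma~\ref{lemma:cone} still guarantees that the rotated wavelet has two directional vanishing moments along the (now horizontal) $t$-axis, that is $\int s_1^a\,\psi^k_j(s_1,s_2)\,ds_1 = 0$ for $a\in\{0,1\}$ and every $s_2$. Parametrise the edge as $\{(t,\gamma_0(t))\}$ with $\gamma_0(0)=\gamma_0'(0)=0$ and $|\gamma_0(t)|\le L|t|^\alpha$ (valid for $\alpha>1$). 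Choose a uniformly Lipschitz $\alpha$ extension $\bar f$ of $f|_{\{n \ge \gamma_0(t)\}}$ to all of $\mathbb{R}^2$, and decompose $f = \bar f + g\,\mathbb 1_{n<\gamma_0(t)}$ with $g := f-\bar f$ uniformly Lipschitz $\alpha$. The contribution of $\bar f$ is $O(2^{j\alpha})$ by the far case, so it remains to estimate
\[
I(u) = \int g(t,n)\,\mathbb 1_{n<\gamma_0(t)}\,\psi^k_j\bigl(u-(t,n)\bigr)\,dt\,dn.
\]

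Write $g(t,n) = P(t,n)+R(t,n)$ with $P$ the affine Taylor polynomial of $g$ at $u_0$ and $|R(t,n)|\le C(|t|^\alpha+|n|^\alpha)$; the $R$-contribution is $O(2^{j\alpha})$. For the polynomial part, split $\mathbb 1_{n<\gamma_0(t)} = \mathbb 1_{n<0} + \mathbb 1_S$, where $S = \{(t,n):\mathrm{sgn}(n)=\mathrm{sgn}(\gamma_0(t)),\;|n|<|\gamma_0(t)|\}$ is the symmetric difference. Against the half-plane $\mathbb 1_{n<0}$, the change of variables $s=u-(t,n)$ reduces each contribution to an integral $\int s_1^a s_2^b\,\mathbb 1_{s_2>u_2}\,\psi^k_j(s_1,s_2)\,ds$ with $a+b\le 1$; by Fubini each of these \emph{vanishes exactly}, because the directional vanishing moments give $\int s_1^a\,\psi^k_j(s_1,s_2)\,ds_1 = 0$ for $a\in\{0,1\}$ uniformly in $s_2$. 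The remaining integral over $S$ is bounded using $|S\cap\{|t|\lesssim 2^j\}| \lesssim \int_{|t|\lesssim 2^j}|\gamma_0(t)|\,dt \lesssim 2^{j(\alpha+1)}$, $|P|=O(1)$ on the wavelet support, and $\|\psi^k_j\|_\infty \lesssim 2^{-2j}$, producing a contribution of order $2^{j(\alpha-1)}$; the tail $|t|\gg 2^j$ is absorbed by the super-polynomial decay of $\psi$. Summing, $|f*\psi^k_j(u)| \lesssim 2^{j(\alpha-1)}$ for $\alpha\in(1,2]$ and $j\le j_M$, implying the claimed bound for every $s<\alpha$. The endpoint $\alpha=1$ is trivial: $|f*\psi^k_j|\le\|f\|_\infty\,\|\psi^k_j\|_1 = O(1)$, while $2^{(s-1)j}\to+\infty$ as $j\to-\infty$ for $s<1$. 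The main subtlety is verifying that the half-plane cancellation is not destroyed by the curvature of $\gamma$ or by the non-compact support of $\psi$; the latter is where the hypothesis that $|\psi|$ decays faster than any rational function is essential, as it lets us replace wavelet-support estimates by kernel-weighted integrals without loss.
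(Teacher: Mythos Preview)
Your argument is correct and takes a genuinely different route from the paper's. The paper truncates the convolution to a ball of radius $C_\psi 2^{(1-\epsilon)j}$ and then slices the integral along the normal direction: for each fixed $v_2$ it regards $v_1\mapsto f(u-v)$ as a one-dimensional function, which is uniformly $\C^\alpha$ when the slice misses the edge and merely bounded when it crosses it (the latter happening only for $v_2$ in a strip of width $O(2^{(1-\epsilon)\alpha j})$). Summing the two contributions gives the bound with an unavoidable $\epsilon$-loss, which is why the paper states the result only for $s<\alpha$. Your decomposition $f=\bar f+g\,\mathbb 1_{\text{below}}$, followed by the affine Taylor expansion of $g$ and the exact half-plane cancellation via directional moments, is closer in spirit to curvelet-type edge analysis; it isolates the error in the thin lune $S$ between the curve and its tangent and, done carefully, yields the sharper endpoint bound $O(2^{j(\alpha-1)})$ rather than $O(2^{j(s-1)})$.

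Two points deserve tightening. First, $g:=f-\bar f$ is \emph{not} uniformly Lipschitz $\alpha$ as written (it jumps across the edge); what you really use is a Lipschitz $\alpha$ extension of $(f-\bar f)|_{\{n<\gamma_0\}}$, after which both the identity $f=\bar f+g\,\mathbb 1_{n<\gamma_0}$ and the Taylor bound $|R|\le C(|t|^\alpha+|n|^\alpha)$ hold. Second, after your rotation the curve's tangents may hit $\pm\pi/2$, so the graph parametrisation $|\gamma_0(t)|\le L|t|^\alpha$ is only local; for the tail in $S$ you should either work in the original coordinates (where the curve is a global graph with deviation $\le L|x_1-u_{0,1}|^\alpha$ from the tangent line) or note that the naive estimate $|S|\lesssim 1$ together with the wavelet decay only gives $O(1)$, which is not enough---the thinness of $S$ must be used in the tail as well. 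Both fixes are routine and do not affect the structure of your proof.
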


\vspace{.2cm}

\begin{proof}

If $\alpha=1$, the result is immediate, $f$ being bounded and $\psi_{j}^k$ being $\Lu$ normalized. In what follows we assume that $\alpha \in (1,2]$.
Thanks to $\psi$ fast decay, for all $\epsilon>0$, there is a constant $C>0$ such that:
    \begin{equation}
        \label{eq:fast_decay}
        \int_{\| u \| > C_{\psi} 2^{(1-\epsilon)j}} |\psi_j(u)| du \le C 2^{\alpha j},
    \end{equation}
    where $C_{\psi}$ only depends on $\psi$. In the following, we consider $\epsilon \le 1/2$. One has:
    \begin{equation}
        \label{eq:cutting_integral}
        | f * \psi^k_j (u)| \le \left| \int_{\| v \| \le C_{\psi} 2^{(1-\epsilon)j}} f(u-v) \psi_j^k(v) dv \right| + \|f\|_{\infty} \int_{\| v \| > C_{\psi} 2^{(1-\epsilon)j}} |\psi_j^k(v)| dv.
    \end{equation}
    The second integral is controlled thanks to equation (\ref{eq:fast_decay}), which shows that it is $O(2^{\alpha j})$. In the following, we prove that the first integral is $O(2^{(1-\epsilon)(\alpha-1)j})$. There are two cases depending whether the edge curve $\gamma$ intersects or not the ball $B(u, C_{\psi}2^{(1-\epsilon)j})$. If $\gamma$ does not intersect $B(u, C_{\psi}2^{(1-\epsilon)j})$, $f|_{B(u, C_{\psi}2^{(1-\epsilon)j})}$ is uniformly $\Ca$ and can be extended to a uniformly $\Ca$ function $\tilde f$ on $[0,1]^2$. Cutting the integral inside and outside $B(u, C_{\psi}2^{(1-\epsilon)j})$, one gets:
    \begin{equation}
        \label{eq:regular_coef}
        \left| \int_{\| v \| \le C_{\psi} 2^{(1-\epsilon)j}} f(u-v) \psi_j^k(v) dv \right| \le \left| \int \tilde f(u-v) \psi_j^k(v) dv \right| + \|f - \tilde f\|_{\infty} \int_{\| v \| > C_{\psi} 2^{(1-\epsilon)j}} |\psi_j^k(v)|dv \le C 2^{\alpha j},
    \end{equation}
    where the first integral is bounded using $\psi$ vanishing moments through equation (\ref{decayCalpha}) and the second one with equation (\ref{eq:fast_decay}). Finally, combining equations (\ref{eq:cutting_integral}) and (\ref{eq:regular_coef}):
    $$| f * \psi^k_j (u)| \le C 2^{\alpha j}.$$
    If $\gamma$ intersects $B(u, C_{\psi}2^{(1-\epsilon)j})$, the hypothesis on $\gamma$'s tangent imposes that $B(u, C_{\psi}2^{(1-\epsilon)j})$ contains only one connected edge portion, which can be parametrized along the direction $k\pi/4$. Without loss of generality, we assume a horizontal parametrization in what follows, i.e. $k=0$.
    Let $\tilde \gamma$ be the Taylor expansion of order 1 of $\gamma$ in $B(u,C_{\psi}2^{(1-\epsilon)j})$
    : $$\| \gamma - \tilde \gamma \|_{\infty} \le C 2^{(1-\epsilon) \alpha j}.$$
    The function $\tilde \gamma$ being linear, $(u_1, \tilde \gamma(u_1))$ is a straight line, whose angle with the horizontal axis is denoted $\theta$. Thanks to $\psi$ directional vanishing moments, one can consider without loss of generality the case $\theta = 0$. In this case, $|\gamma(\cdot)| = O(2^{(1-\epsilon) \alpha j})$ on $B(u, C_{\psi}2^{(1-\epsilon)j})$.
    One has:
    \begin{equation}
    \label{eq:th_first_order_1D}
        \left| \int_{\| v \| \le C_{\psi} 2^{(1-\epsilon)j}} f(u-v) \psi_j^0(v) dv \right| \le \int\limits_{v_2 =-C_{\psi}2^{(1-\epsilon)j}}^{C_{\psi}2^{(1-\epsilon)j}} \left| \int\limits_{v_1 =-C_{\psi}2^{(1-\epsilon)j}}^{C_{\psi}2^{(1-\epsilon)j}} f(u_1-v_1, u_2-v_2) \psi_j^0(v_1,v_2) dv_1 \right| dv_2.
    \end{equation}
    Thanks to $\psi_j^0$ directional vanishing moments along the horizontal axis, the integral on $v_1$ could be seen as a one-dimensional wavelet coefficient of $v_1 \mapsto f(u-v)$ for a fixed $v_2$. This integral is not taken on the whole space, but one could proceed as in equation (\ref{eq:regular_coef}) to exploit the vanishing moments.
    As illustrated in Figure \ref{fig:first_order_proof}, $v_1 \mapsto f(u-v)$ is discontinuous for $v_2 \in [-C 2^{(1-\epsilon) \alpha j}, C 2^{(1-\epsilon) \alpha j}]$, and $\Ca$ otherwise. The corresponding one-dimensional wavelet coefficients would be $O(1)$ and $O(2^{\alpha j})$ respectively, for an L1 normalized wavelet. $\psi_j^0$ being two dimensional, one should count an additional $2^{-j}$ normalisation factor:
    $$\left| \int\limits_{v_1 =-C_{\psi}2^{(1-\epsilon)j}}^{C_{\psi}2^{(1-\epsilon)j}} f(u_1-v_1, u_2-v_2) \psi_j^0(v_1,v_2) dv_1 \right| \le \left\{
    \begin{array}{ll}
        C \left(2^{\alpha j} + 2^{-j} \right) \text{ if } v_2 \in [-C 2^{(1-\epsilon) \alpha j}, C 2^{(1-\epsilon) \alpha j}], \\
        C \left(2^{\alpha j} + 2^{(\alpha-1)j} \right) \text{ otherwise,}
    \end{array}
    \right.$$
    where the $2^{\alpha j}$ term accounts for $v_1 \notin B(0,C_{\psi}2^{(1-\epsilon)j})$, using equation (\ref{eq:fast_decay}). Integrating on $v_2$ in equation (\ref{eq:th_first_order_1D}) thus leads to:
    $$\left| \int_{\| v \| \le C_{\psi} 2^{(1-\epsilon)j}} f(u-v) \psi_j^0(v) dv \right| \le C \left[2^{(1-\epsilon)j} 2^{(\alpha-1) j}. + 2^{(1-\epsilon)\alpha j}.2^{-j} \right] \le C 2^{[(1-\epsilon) \alpha-1]j}.$$
    Injecting this bound in equation (\ref{eq:cutting_integral}) and replacing $(1-\epsilon) \alpha -1$, $\epsilon>0$, by $s-1$, $s < \alpha$, gives equation (\ref{eq:first_order0}).
\end{proof}

\begin{figure}
        \centering
        \subfigure[]{\label{fig:first_order_proof}\includegraphics[width=75mm]{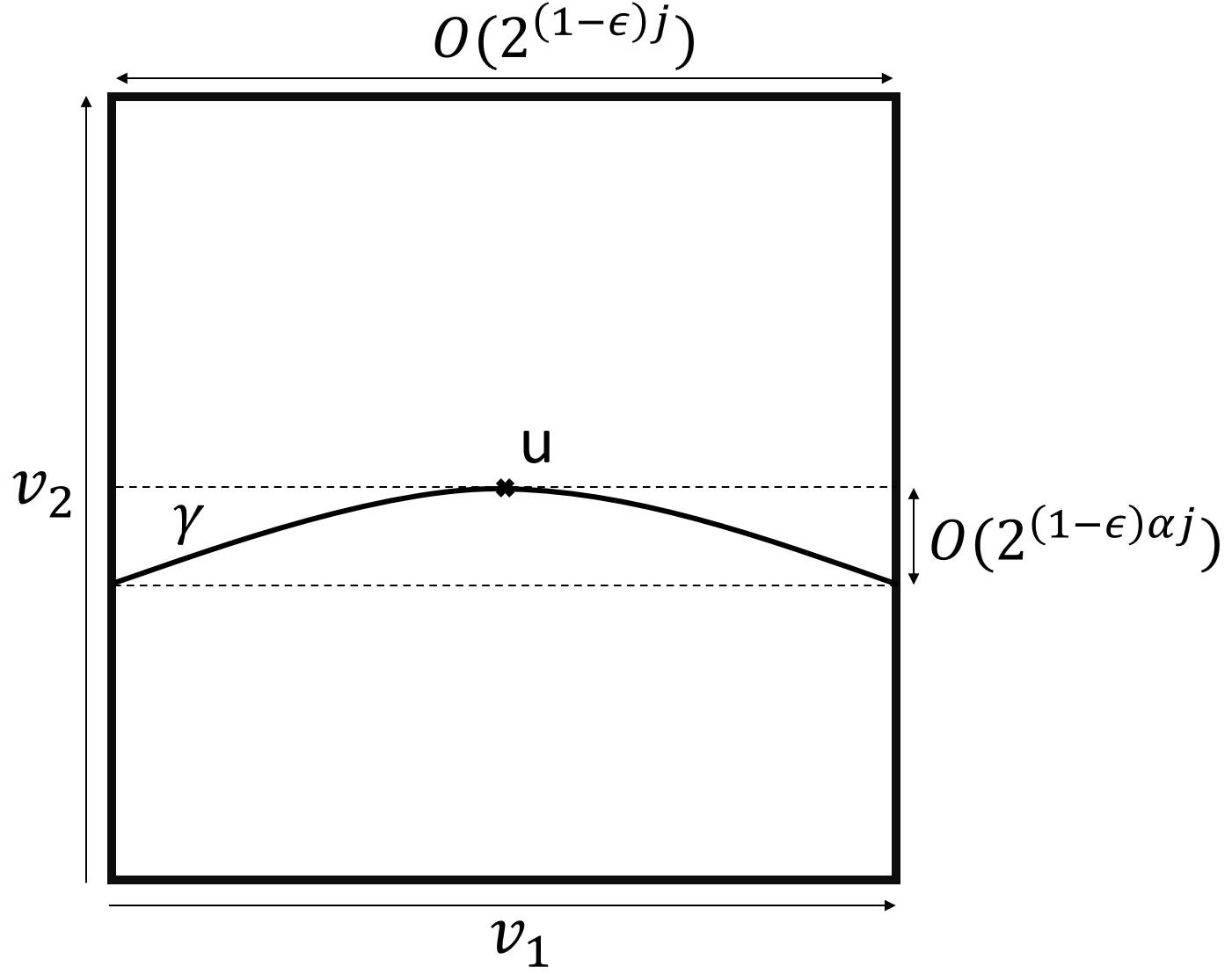}}
        \hspace{1cm}
        \subfigure[]{\label{fig:second_order_proof}\includegraphics[width=74mm]{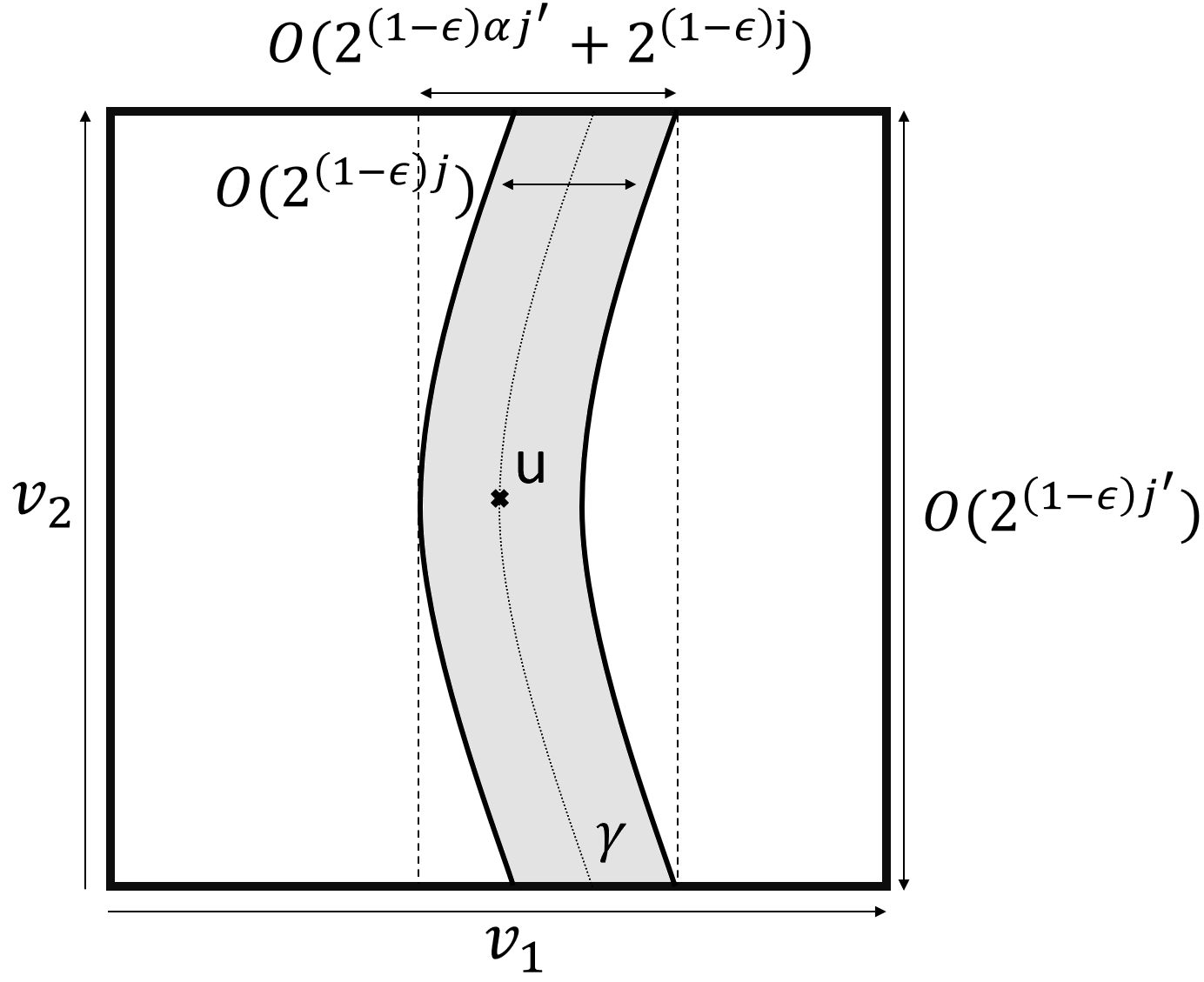}}
        \caption{(a): Representation of $f$ on $B(u, C_{\psi}2^{(1-\epsilon)j})$, for $u$ a point on the edge curve. The tangent of $\gamma$ at $u$ is assumed to be horizontal. Bounding $|f * \psi_j^0|(u)$ requires to exploit (1) the $\Ca$ regularity of $f_{u,v_2} : v_1 \mapsto f(u-v)$ for $v_2$ such that $f_{u,v_2}$ does not intersect $\gamma$, (2) the fact that $f_{u,v_2}$ is discontinuous only for a fraction $O(2^{(1-\epsilon)\alpha j})$ of $v_2$ possible values (between the two dashed lines). (b): Representation of $|f * \psi_j|$ on $B(u, C_{\psi}2^{(1-\epsilon)j'})$. The tangent of $\gamma$ at $u$ is assumed to be vertical. The gray area corresponds to $O(1)$ wavelet coefficients $|f * \psi_j|(v)$ located along the edge. This region has a width $O(2^{(1-\epsilon)j})$ and is $O(2^{(1-\epsilon) j'})$ long. Bounding $|\rho(f * \psi_j) * \psi_{j'}^{\perp}|(u)$ requires to exploit (1) $\rho(f * \psi_j)$ small amplitude far from the edge (outside the region coloured in gray), (2) the fact that $ v_2 \mapsto f(u-v)$ is Lipschitz 1 with a constant controlled thanks to $\gamma$ regularity for a fraction $O(2^{(1-\epsilon) \alpha j'}+2^{(1-\epsilon) j})$ of $v_1$ possible values (between the two dashed lines).}
    \end{figure}

\section{Proof of Theorem \ref{theorem:scattering_bounds}}
\label{chap:thm_proof}

Theorem \ref{theorem:scattering_bounds} states that the decay rate of scattering coefficients of the form $\| |f * \psi_j^k| * \psi_{j'}^{k^{\perp}} \|_1$ depends on the regularity of edge curves. It relies on two main ideas:
\begin{itemize}
    \item first, as shown by Theorem \ref{th:first_order}, the $\Ca$ regularity of an edge can be captured by a wavelet exhibiting enough directional vanishing moments along the edge tangent, if $\alpha \in [1, 2]$. Lemma \ref{lemma:cone} shows how to build wavelets with vanishing moments in a cone of directions, imposing a constraint on their Fourier transform support;
    \item second, since a wavelet cannot have vanishing moments in all directions, the $\ell^1$ norm $\|\cdot * \psi^k\|_1$ will not capture edges regularity due to edge tangents `misaligned' with $\psi^k$ vanishing moments. One can then apply a second wavelet $\psi^{k^{\perp}}$ with complementary directional vanishing moments to get rid of the constraint of tangent alignment.
\end{itemize}
For the sake of clarity, the theorem is proved for the mother wavelet $\psi$, which oscillates along the horizontal axis. This is done without loss of generality, since one can always get back from $\psi^k$ to $\psi$ applying a rotation of angle $-k \pi/4$.
The rotation of angle $\pi/2$ of $\psi$ is denoted $\psi^{\perp}$. Dilated versions of $\psi$ and $\psi^{\perp}$ at each scale $2^j$ are denoted $\psi_j$ and $\psi_j^{\perp}$ respectively. They are L1 normalized:
$$\forall j \le \JJ, ~\|\psi_j\|_1=1.$$
The unit square $[0,1]^2$ is mapped to the torus and periodized wavelets \cite{mallat1999wavelet} are considered. Unless explicitly stated otherwise, the Euclidean distance is used.
In what follows, $C$ is a generic constant which may change from line to line. Theorem \ref{theorem:scattering_bounds}'s proof is in three stages, which are summarized hereafter in a proof sketch.

\paragraph{Directional vanishing moments and edges' regularity}

Let $f$ be a $\Ca[0,1]^2$ geometrically regular function, $\alpha \in [1,2]$, and let $u \in [0,1]^2$. If $u$ is far from $f$'s edges, the wavelet coefficient modulus $|f * \psi_j| (u)$ is controlled thanks to $\psi$ vanishing moments, as exposed in Section \ref{sec:wavelet}. If $u$ is in the vicinity of an edge, one needs to exploit $\psi$ directional vanishing moments. Suppose that the edge tangent at $u$ forms an angle $\theta$ with the horizontal axis and that $\psi$ has more than $2$ vanishing moments along $\theta$. Then, Theorem \ref{th:first_order} proves that the corresponding edge coefficient $|f * \psi_j|(u)$ have a $O(2^{(1-\epsilon)(\alpha-1) j})$ decay rate, $\epsilon>0$, instead of the $O(1)$ obtained without directional vanishing moments. For $\psi$ having vanishing moments in the cone of directions $[-\pi/4,\pi/4]$, edge coefficients are split in two categories:
\begin{itemize}
    \item those for which the edge tangent is in the cone, which have an amplitude $O(2^{(1-\epsilon)(\alpha-1) j})$;
    \item the others, which have an amplitude $O(1)$.
\end{itemize}
Edge coefficients are thus partially sparsified. One should still treat `misaligned' edges.

\paragraph{Second wavelet transform}

Misaligned edge coefficients correspond to edges whose tangents are in the complementary of the cone of angle $[-\pi/4,\pi/4]$. The convolution $f * \psi$ inherits the geometric regularity of $f$, but it cannot be captured directly using the $\psi^\perp$'s directional vanishing moments. Indeed, such wavelets verify
$$\forall j \le j' \le j_M, ~ \psi_{j'}^{\perp} * \psi_j = 0.$$
A non-linearity $\rho$ should be added, leading to coefficients of the form $|\rho(\ccc * \psi_j) * \psi_{j'}^\perp|$, where $\rho$ must be chosen in order to ensure that $\rho(\ccc * \psi_j)$ is still directionally regular. Lemma \ref{lemma:second_order} proves that, for $\rho$ being Lipschitz 1 and verifying $\rho(0)=0$, the scaling of such second order coefficients is in $O(2^{(1-\epsilon)(\alpha-1) jj'})$ in the neighbourhood of edges. In particular, the result holds for $\rho(\cdot) = |\cdot|$ and $\rho(\cdot) = ReLU(\cdot)$.

\paragraph{Theorem's proof} For given $j \le j' \le \JJ$ and $\epsilon >0$, three types of coefficients should be considered:
\begin{itemize}
    \item coefficients scaling in $O(2^{\alpha j})$ in uniformly regular regions, which are of area $O(1)$;
    \item coefficients scaling in $O(2^{(1-\epsilon)(\alpha-1) j})$ in neighbourhoods of edges, which are of area $O(2^{(1-\epsilon) j})$;
    \item coefficients scaling in $O(1)$ in neighbourhoods of junctions, which are of area $O(2^{2 (1-\epsilon) j})$.
\end{itemize}
Thus, for all $\tilde \epsilon>0$, there exists $C>0$ such that $\| \rho(f * \psi_j) * \psi_{j'}^{\perp} \|_1 = C 2^{(1-\tilde \epsilon) \alpha j}$. Replacing $(1-\tilde \epsilon) \alpha$, $\tilde \epsilon>0$, with $s<\alpha$ and summing on $j,j'$ gives equation (\ref{eq:theorem}).

\subsection{Decay of second order scattering coefficients}

\begin{lemma} (partly inspired by \citet{Peyr2008OrthogonalBB}, proposition 1)
\label{lemma:second_order}
Suppose that $\psi$ satisfies the same hypothesis as in Theorem \ref{th:first_order}. Let $f$ be a $\Ca$ geometrically regular image with $\alpha \in [1,2]$ with Lipschitz constant $L$ and $\rho$ be a Lipschitz 1 function verifying $\rho(0)=0$. Let denote $\mathcal C_{\delta} \subset [0,1]^2$ the set of points being $\delta$-close from $f$'s corners.
For all $s < \alpha$ and $\delta > 0$, there exists $C$ such that:
\begin{equation}
\label{eq:lemma_second_order}
\forall j \le j' \le \JJ, ~ \forall u \in [0,1]^2 \backslash \mathcal C_{C_\psi 2^{(1-\delta) j'}},~~|\rho(f * \psi_j) * \psi_{j'}^{\perp}|(u) \le C 2^{(s-1)j'},
\end{equation}
where $C_\psi$ is defined in equation (\ref{eq:fast_decay}).

\end{lemma}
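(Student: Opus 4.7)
The plan is to adapt the strategy of Theorem~\ref{th:first_order} to this second-order setting by combining a localization argument with a case analysis on the edge geometry in a neighbourhood of $u$. Using the fast decay of $\psi$ and that $\rho$ is Lipschitz $1$ with $\rho(0) = 0$ (so $|\rho(f * \psi_j)(v)| \le |f * \psi_j(v)| \le \|f\|_\infty$), the same truncation as in equation~(\ref{eq:cutting_integral}) reduces the claim to controlling
\begin{equation*}
I(u) := \left| \int_B \rho(f * \psi_j)(u - v)\, \psi_{j'}^\perp(v)\, dv \right|,
\quad B := B(u, C_\psi 2^{(1-\delta) j'}),
\end{equation*}
up to a remainder of order $2^{\alpha j'}$ that is already compatible with the target bound since $j' \ge j$ and $\alpha > s-1$. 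Because $u$ lies at distance at least $C_\psi 2^{(1-\delta) j'}$ from every corner, $B$ contains no junction and meets $\gamma$ along at most one connected arc.

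I would then dispatch two easy sub-cases. If $B$ does not meet $\gamma$, then $f$ extends to a uniformly $\Ca$ function in a slight enlargement of $B$, and the vanishing moments argument from equation~(\ref{eq:regular_coef}) gives $|f * \psi_j(v)| \le C\, 2^{\alpha j}$ on $B$; since $\rho$ preserves this pointwise bound, $I(u) \le C\, 2^{\alpha j} \le C\, 2^{(s-1)j'}$. If $B$ meets $\gamma$ and the tangent at $u$ lies in $[-\pi/4, \pi/4]$, so that $\psi$ has directional vanishing moments aligned with this tangent, then Theorem~\ref{th:first_order} directly furnishes $|f * \psi_j(v)| \le C\, 2^{(s-1)j}$ for every $v \in B$, and integrating this pointwise estimate against $\psi_{j'}^\perp$ yields the claim using $j \le j'$.

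The substantive sub-case is when $B$ meets $\gamma$ and the tangent at $u$ lies in $[\pi/4, 3\pi/4]$, where $|f * \psi_j|$ may be of order $1$ in a tube of width $\sim 2^j$ around $\gamma$. Here I would exploit the directional vanishing moments of $\psi_{j'}^\perp$, which are aligned with the (approximately vertical) edge tangent. Working in coordinates $(v_\perp, v_\parallel)$ adapted to this tangent and applying Fubini, I would integrate first in $v_\parallel$ and subtract the best first-degree polynomial in $v_\parallel$ from $\rho(f * \psi_j)(u-v)$ inside the inner integral, exploiting the two directional vanishing moments of $\psi_{j'}^\perp$ along $v_\parallel$. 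For values of $v_\perp$ outside a strip of width $O(2^{(1-\delta)\alpha j'} + 2^{(1-\delta) j})$ around the tangent line, the line $v_\perp = \mathrm{const}$ misses the edge tube and the integrand is pointwise small by the same argument as in the first easy sub-case. Inside this strip, the function $v_\parallel \mapsto \rho(f * \psi_j)(u-v)$ is Lipschitz with constant $O(2^{-j})$ coming from the smoothness of $\psi_j$ and from $\rho$, and its $\Ca$ defect along $v_\parallel$ is controlled by the deviation of $\gamma$ from its tangent line. Combining these estimates with the two vanishing moments and the scale-$2^{j'}$ support of $\psi_{j'}^\perp$ yields, after integration in $v_\perp$, the claimed $O(2^{(s-1)j'})$ bound.

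The step I expect to be the main obstacle is the quantitative analysis of this last sub-case: translating the $\Ca$ regularity of $\gamma$ into sufficient one-dimensional regularity of $v_\parallel \mapsto \rho(f * \psi_j)(u-v)$ (as illustrated by Figure~\ref{fig:second_order_proof}), carefully tracking the interaction of the two scales $2^j$ and $2^{j'}$ when splitting into the strip and its complement, and verifying that the non-linearity $\rho$ does not destroy the cancellations relied upon by the vanishing moments. The overall structure parallels the proof of Theorem~\ref{th:first_order}, but it is now applied to $\rho(f * \psi_j)$ rather than to $f$ itself, with the edge tube of width $2^j$ playing the role of the edge curve in the first-order argument.
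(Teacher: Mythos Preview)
Your overall architecture---truncation via fast decay, then the three sub-cases (no edge, aligned edge, misaligned edge)---matches the paper's proof exactly, and your treatment of the two easy sub-cases is correct. The gap is entirely in the misaligned sub-case, where two of your proposed steps would fail as stated.

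First, you propose to ``subtract the best first-degree polynomial in $v_\parallel$'' and exploit the \emph{two} directional vanishing moments of $\psi_{j'}^\perp$. But $\rho$ is only Lipschitz~$1$, so $\rho(f*\psi_j)$ need not be differentiable and cannot be approximated to second order in $v_\parallel$; only \emph{one} vanishing moment is usable. The paper makes this explicit: it uses only the Lipschitz regularity of $v_\parallel \mapsto \rho(f*\psi_j)(u-v)$. Second, and more seriously, your estimate of the tangential Lipschitz constant as $O(2^{-j})$ is the generic bound coming from $\|\nabla\psi_j\|_\infty$, and it is far too weak: combined with one vanishing moment and the strip width $O(2^{(1-\delta)\alpha j'}+2^{(1-\delta)j})$ it gives a term of order $2^{-\delta j}$, which diverges as $j\to-\infty$. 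The paper's key step---the part attributed to Peyr\'e--Mallat---is a change of variables along $\gamma$ showing that the \emph{tangential} derivative of $f*\psi_j$ picks up an extra factor $|\gamma'|\le C\,2^{(1-\delta)(\alpha-1)j'}$ from the $\Ca$ regularity of the edge, yielding the much sharper bound $\bigl|\partial_{v_\parallel}(f*\psi_j)\bigr|\le C\,2^{-(1+2\delta)j+(1-\delta)(\alpha-1)j'}$.

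Even this refined Lipschitz bound is not enough by itself: the paper must split the misaligned sub-case according to whether $j'\ge j/\alpha$ or $j'<j/\alpha$. When $j'\ge j/\alpha$ the edge tube of width $\sim 2^{(1-\delta)j}$ is so thin relative to $2^{j'}$ that a pure area argument (no vanishing moments, just $\|f_j\|_\infty\le C$ and $\|\psi_{j'}^\perp\|_\infty=O(2^{-2j'})$) already gives $O(2^{(s-1)j'})$; when $j'<j/\alpha$ the refined Lipschitz argument is needed, and the constraint $j>\alpha j'$ is precisely what converts the $2^{-(1+2\delta)j}$ factor into an acceptable power of $2^{j'}$. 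Your proposal contains no trace of this dichotomy, and without it the bookkeeping of the two scales $2^j$ and $2^{j'}$ cannot close.
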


\begin{proof}

If $\alpha=1$, the result is immediate, $\rho(f*\psi_j)$ being bounded and $\psi_{j'}^{\perp}$ being $\Lu$ normalized. In what follows, we thus consider $\alpha \in (1,2]$. Let $u \in [0,1]^2$ and $j \le j' \le \JJ$. For all $\epsilon>0$, one has:
\begin{equation}
\label{eq:bound_coeff_second_order}
        |\rho(f * \psi_j) * \psi_{j'}^{\perp}|(u) \le \left| \int_{\| v \| \le C_{\psi} 2^{(1-\epsilon)j'}} \rho(f * \psi_j)(u-v) \psi_{j'}^\perp(v) dv \right| + \|\rho(f * \psi_j)\|_{\infty} \int_{\| v \| > C_{\psi} 2^{(1-\epsilon)j'}} |\psi_{j'}^\perp(v)| dv.
    \end{equation}
As in the proof of Theorem \ref{th:first_order}, the second integral is controlled thanks to equation (\ref{eq:fast_decay}). Using that $\rho(f*\psi_j)$ is bounded, one gets:
\begin{equation}
\label{eq:bound_tail_second_order}
\|\rho(f * \psi_j)\|_{\infty} \int_{\| v \| > C_{\psi} 2^{(1-\epsilon)j'}} |\psi_{j'}^\perp(v)| dv = O(2^{\alpha j'}).
\end{equation}
If $f$ is uniformly $\Ca$ over the ball $B \left(u, C_{\psi} \left[ 2^{(1-\epsilon)j}+2^{(1-\epsilon)j'} \right] \right)$, then, for all $v \in B \left(0, C_{\psi}2^{(1-\epsilon)j'} \right)$, it is uniformly $\Ca$ over $B \left(u-v, C_{\psi}2^{(1-\epsilon)j} \right)$. Proceeding as in equation (\ref{eq:regular_coef}) thus leads to:
\begin{equation}
\label{eq:lemma_C_uniform_case_firs_order}
\forall v \in B \left(0, C_{\psi}2^{(1-\epsilon)j'} \right), ~~ |f * \psi_j|(u-v) \le C 2^{\alpha j}.
\end{equation}
The function $\rho$ being Lipschitz 1 with $\rho(0)=0$ and $\psi_{j'}^\perp$ being $\Lu$ normalized, we deduce from equation (\ref{eq:lemma_C_uniform_case_firs_order}) a bound on the first integral of equation (\ref{eq:bound_coeff_second_order}):
\begin{equation}
\label{eq:lemma_C_uniform_case}
\left| \int_{\| v \| \le C_{\psi} 2^{(1-\epsilon)j'}} \rho(f * \psi_j)(u-v) \psi_{j'}^\perp(v) dv \right| \le C 2^{\alpha j}.
\end{equation}
Let now assume that $f$ is not uniformly $\Ca$ over $B \left(u, C_{\psi} \left[ 2^{(1-\epsilon)j}+2^{(1-\epsilon)j'} \right] \right)$. The Lipschitz constant $L$ of $f$ is finite and, by definition, its edge curves do no intersect tangentially. Moreover, $u$ is $C_\psi 2^{(1-\delta) j'}$-distant from $f$'s corners and all of the above holds for any $\epsilon>0$. Then, for $j,j'$ small enough and even if it means increasing a multiplicative constant independent of the scale, the ball $B \left(u, C_{\psi} \left[ 2^{(1-\epsilon)j}+2^{(1-\epsilon)j'} \right] \right)$ contains at most one
connected edge portion from an edge curve $\gamma$, which could be parametrized either horizontally or vertically. Without loss of generality, we assume a horizontal parametrization. The angle of its best linear approximation with the horizontal axis is denoted $\theta$. There are two cases, depending wether $\theta$ belongs to $[-\pi/4, \pi/4]$ or not. If $\theta \in [-\pi/4, \pi/4]$, one can apply Theorem \ref{th:first_order} to get: 
\begin{equation}
\label{eq:lemma_C_aligned_case_firs_order}
\forall s<\alpha, ~\exists C>0 ~/ ~ \forall v \in B \left(0, C_{\psi}2^{(1-\epsilon)j'} \right), ~~ |f*\psi_j|(u-v) = C 2^{(s-1)j}.
\end{equation}
Finally, using the same arguments as for equation (\ref{eq:lemma_C_uniform_case}), we deduce from equation (\ref{eq:lemma_C_aligned_case_firs_order}) that:
\begin{equation}
\label{eq:lemma_C_aligned_case_firs_order_bound}
\left| \int_{\| v \| \le C_{\psi} 2^{(1-\epsilon)j'}} \rho(f * \psi_j)(u-v) \psi_{j'}^\perp(v) dv \right| \le C 2^{(s-1)j} \le C 2^{(s-1)j'}.
\end{equation}
If $\theta \notin [-\pi/4,  \pi/4]$, the underlying geometric regularity will be captured by $\psi_{j'}^{\perp}$ instead of $\psi_j$. Thanks to $\psi$ directional vanishing moments, we assume without loss of generality that $\theta = \pi/2$, meaning that $\psi_{j'}^{\perp}$ is aligned with the direction of regularity. Since $\rho$ is not assumed to be differentiable, $f_j := \rho(f * \psi_j)$ does not inherit fully the regularity of $f$ and $\psi_j$. Nevertheless, it inherits from $f * \psi_j$ a Lipschitz constant along the edge tangent. This constant is small, in a sense that will be specified, for $j'$ not too big compared to $j$. The idea of the proof is to exploit this regularity for $j' < j/\alpha$ and to use the fact that $f_j$ is of order 1 only on a small fraction of the support of $\psi_{j'}^{\perp}$ for $j' \ge j/\alpha$. One could proceed as in equation (\ref{eq:regular_coef}) to prove the existence of a constant $C>0$ such that, for all $v \in B\left(0, C_{\psi}2^{(1-\epsilon)j'} \right)$, $f_j(u-v) \le C 2^{\alpha j}$ if the edge curve $\gamma$ does not intersect $B\left(u-v, C_{\psi}2^{(1-\epsilon)j} \right)$. Denoting $\Gamma$ the graph of the edge curve $\gamma$ and using that $\|\psi_{j'}^\perp\|_1=1$, one gets:
\begin{equation}
\label{eq:in_out_gray_region}
\left| \int_{\| v \| \le C_{\psi} 2^{(1-\epsilon)j'}} f_j(u-v) \psi_{j'}^\perp(v) dv \right| \le \left| \int_{\substack{\| v \| \le C_{\psi} 2^{(1-\epsilon)j'} \\ B\left(u-v, C_{\psi}2^{(1-\epsilon)j} \right) \cap \Gamma \neq \emptyset}} f_j(u-v) \psi_{j'}^\perp(v) dv \right| + C 2^{\alpha j}.
\end{equation}
Figure \ref{fig:second_order_proof} illustrates in gray the set $u-\left\{v \in [0,1]^2 ~/~ \| v \| \le C_{\psi} 2^{(1-\epsilon)j'} ~ \text{and} ~ B\left(u-v, C_{\psi}2^{(1-\epsilon)j} \right) \cap \Gamma \neq \emptyset \right\}$. One has:
\begin{equation}
\label{eq:lemma_C_edge_neighborhood}
\int_{\substack{\| v \| \le C_{\psi} 2^{(1-\epsilon)j'} \\ B\left(u-v, C_{\psi}2^{(1-\epsilon)j} \right) \cap \Gamma \neq \emptyset}} dv = O(2^{(1-\epsilon)j+(1-\epsilon)j'}).
\end{equation}
The function $f_j$ is bounded and $\|\psi_{j'}^\perp\|_{\infty} = O(2^{-2j'})$, since it is $\Lu$ normalized. Using equation (\ref{eq:lemma_C_edge_neighborhood}) and assuming that $j' \ge j/\alpha$, one thus gets:
\begin{equation}
\left| \int_{\substack{\| v \| \le C_{\psi} 2^{(1-\epsilon)j'} \\ B\left(u-v, C_{\psi}2^{(1-\epsilon)j} \right) \cap \Gamma \neq \emptyset}} f_j(u-v) \psi_{j'}^\perp(v) dv \right| \le C 2^{(1-\epsilon)(j-j')-2 \epsilon j'} \le C 2^{[(1-\epsilon)(\alpha-1)-2 \epsilon] j'}.
\end{equation}
Replacing $(1-\epsilon)(\alpha-1)-2 \epsilon$, $\epsilon>0$, by $s - 1$, $s < \alpha $, one gets:
\begin{equation}
\label{eq:lemma_C_misaligned_case_big_scales}
\forall s < \alpha, \exists C>0 ~/ ~\left| \int_{\| v \| \le C_{\psi} 2^{(1-\epsilon)j'}} f_j(u-v) \psi_{j'}^\perp(v) dv \right| \le C 2^{(s-1)j'}.
\end{equation}
This bound does not exploit the directional vanishing moments of $\psi$. It is not suitable for $j' < j/\alpha$, since then $j-j'$ will no longer be upper bounded by $(\alpha-1)j'$. For $j' < j/\alpha$, one needs to exploit the directional regularity of $f_j$ to control the remaining integral in equation (\ref{eq:in_out_gray_region}) bound. The first order approximation $\tilde \gamma$ of $\gamma$ have been assumed to be oriented vertically ($\theta = \pi/2$). As in the proof of Theorem \ref{th:first_order}, the maximal distance between $\gamma$ and $\tilde \gamma$ over $B \left(u, C_{\psi}2^{(1-\epsilon)j'} \right)$ is controlled thanks to $\gamma$'s $\Ca$ regularity: $$\| \gamma - \tilde \gamma \|_{\infty} \le C 2^{(1-\epsilon) \alpha j'}.$$
Thus, there exists an interval $I_{j,j'} := \left[-C \left(2^{(1-\epsilon)j}+2^{(1-\epsilon) \alpha j'} \right), C \left(2^{(1-\epsilon)j}+2^{(1-\epsilon) \alpha j'} \right) \right] \subset \left[-C_{\psi} 2^{(1-\epsilon)j'}, C_{\psi} 2^{(1-\epsilon)j'} \right]$ such that:
\begin{equation}
\label{eq:inclusion}
\left\{v \in [0,1]^2 ~/~ \| v \| \le C_{\psi} 2^{(1-\epsilon)j'} ~ \text{and} ~ B\left(u-v, C_{\psi}2^{(1-\epsilon)j} \right) \cap \Gamma \neq \emptyset \right\} \subset I_{j,j'} \times \left[-C_{\psi} 2^{(1-\epsilon)j'}, C_{\psi} 2^{(1-\epsilon)j'} \right].
\end{equation}
The region $I_{j,j'} \times \left[-C_{\psi} 2^{(1-\epsilon)j'}, C_{\psi} 2^{(1-\epsilon)j'} \right]$ is localized between two vertical dashed lines in Figure \ref{fig:second_order_proof}. From equation (\ref{eq:inclusion}), one gets:
\begin{equation}
\label{eq:second_order_1D}
\left| \int_{\substack{\| v \| \le C_{\psi} 2^{(1-\epsilon)j'} \\ B\left(u-v, C_{\psi}2^{(1-\epsilon)j} \right) \cap \Gamma \neq \emptyset}} f_j(u-v) \psi_{j'}^\perp(v) dv \right| \le \int\limits_{v_1 \in I_{j,j'}} \left| \int\limits_{v_2 = -C_{\psi} 2^{(1-\epsilon)j'}}^{C_{\psi} 2^{(1-\epsilon)j'}} f_j(u-v) \psi_{j'}^{\perp}(v) dv_2 \right| dv_1 + C 2^{\alpha j},
\end{equation}
with $f_j$ being directionally regular along $v_2$ (since $\theta = \pi/2$). The $C 2^{\alpha j}$ term accounts for points $v \in I_{j,j'} \times \left[-C_{\psi} 2^{(1-\epsilon)j'}, C_{\psi} 2^{(1-\epsilon)j'} \right]$ such that $B\left(u-v, C_{\psi}2^{(1-\epsilon)j} \right) \cap \Gamma = \emptyset$, for which $f_j(u-v) \le C 2^{\alpha j}$. As in equation (\ref{eq:th_first_order_1D}) from Theorem \ref{th:first_order}'s proof, the integral on $v_2$ could be seen as a 1D wavelet coefficient of $v_2 \mapsto f_j(u-v)$ for a fixed $v_1$, thanks to $\psi$ directional vanishing moments. This integral is not taken on the whole space, but one could proceed as in equation (\ref{eq:regular_coef}) to exploit $\psi_{j'}^{\perp}$ directional vanishing moments and control the integral for $v_2 \notin \left[-C_{\psi} 2^{(1-\epsilon)j'}, C_{\psi} 2^{(1-\epsilon)j'} \right]$. We will use the Lipschitz regularity of this 1D function to get a bound on its wavelet coefficients, as illustrated in Figure \ref{fig:second_order_proof}. Its Lipschitz constant, which depends on $v_1$, will be controlled through a bound on $\left| \frac{\partial f * \psi_j}{\partial u_2} \right|$. The function $f$ is differentiable since $\alpha>1$.
Using triangular and Taylor-Lagrange inequalities, we get:
$$|f_j(u_1-v_1, u_2-v_2)-f_j(u_1-v_1, u_2)| \le C_{j,j'}(v_1) C_{\rho}|v_2|,$$
where $C_{j,j'}(v_1) := \left| \max\limits_{v_2 \in [- C_{\psi} 2^{(1-\epsilon)j'}, C_{\psi} 2^{(1-\epsilon)j'}]} \frac{\partial f*\psi_j(u_1-v_1, u_2-v_2)}{\partial v_2} \right|$ and where $C_{\rho}$ is the Lipschitz constant of $\rho$. The function $v_2 \mapsto f_j(u-v)$ for a fixed $v_1$ is thus Lipschitz 1 with Lipschitz constant $C_{j,j'}(v_1) C_{\rho}$ for $v_2 \in \left[-C_{\psi} 2^{(1-\epsilon)j'}, C_{\psi} 2^{(1-\epsilon)j'} \right]$. Thanks to $\psi_{j'}^{\perp}$ directional vanishing moments along $v_2$, we get:
$$\left| \int\limits_{v_2 = -C_{\psi} 2^{j'}}^{C_{\psi} 2^{j'}} f_j(u-v) \psi_{j'}^{\perp}(v) dv_2 \right| \le C \times C_{j,j'}(v_1) + C 2^{\alpha j'},$$
where the $2^{\alpha j'}$ term accounts for $v_2 \notin \left[-C_{\psi} 2^{(1-\epsilon)j'}, C_{\psi} 2^{(1-\epsilon)j'} \right]$, using equation (\ref{eq:fast_decay}). For an L1-normalized 1D wavelet, the bound would be proportional to $2^{j'}$. Here, this $2^{j'}$ factor, linked to $f_j$ Lipschitz 1 regularity, compensates for the $2^{-2j'}$ normalization of $\psi_{j'}^{\perp}$. Combining this bound with equation (\ref{eq:second_order_1D}) and integrating over $v_1$ leads to:
\begin{equation}
\label{eq:lemma_second_order_intermediate}
\left| \int_{\substack{\| v \| \le C_{\psi} 2^{(1-\epsilon)j'} \\ B\left(u-v, C_{\psi}2^{(1-\epsilon)j} \right) \cap \Gamma \neq \emptyset}} f_j(u-v) \psi_{j'}^\perp(v) dv \right| \le C \times C_{j,j'}^{\text{edge}}   (2^{(1-\epsilon)j}+2^{(1-\epsilon)\alpha j'}),
\end{equation}
where $C_{j,j'}^{\text{edge}} = \max\limits_{v_1 \in I_{j,j'}} C_{j,j'}(v_1)$. One thus needs to control $C_{j,j'}^{\text{edge}}$, which accounts to bound the derivative of $\tilde u \mapsto f * \psi_j(\tilde u)$ along the edge tangent. The computation is similar to the proof of proposition 1 from \citet{Peyr2008OrthogonalBB}, the bound on $f * \psi_j$ derivatives being here obtained on a square region of length $2^{j'}$, $j' < j/\alpha$, instead of $2^{j/\alpha}$. Since $\alpha>1$, $\gamma$ and $f$ along $\gamma$ are differentiable, and one has:
$$\frac{\partial f * \psi_j}{\partial \tilde u_2}(\tilde u) = \frac{\partial}{\partial \tilde u_2} \int f(\tilde u_1-v_1, \tilde u_2 - v_2) \psi_j(v_1, v_2) dv_1 dv_2 = \int \frac{\partial}{\partial \tilde u_2} \left[f(S(\tilde u, v)) \psi_j(T(\tilde u, v)) \right] dv_1 dv_2,$$
where $S(\tilde u, v) := (\tilde u_1+\gamma(\tilde u_2-v_2)-v_1, \tilde u_2-v_2)$ and $T(\tilde u, v) := (v_1-\gamma(\tilde u_2-v_2), v_2)$. The function $f$ being regular along $\gamma$, $\left| \frac{\partial f(S(\tilde u, v))}{\partial \tilde u_2} \right|$ is uniformly bounded for $v_2 \in \left[-C_{\psi} 2^{(1-\epsilon)j}, C_{\psi} 2^{(1-\epsilon)j} \right]$ and $\tilde u_2 \in  \left[-C_{\psi} 2^{(1-\epsilon)j'}, C_{\psi} 2^{(1-\epsilon)j'} \right]$. Moreover, $\gamma$ being $\Ca$, one has for all $\tilde u_2 \in  \left[-C_{\psi} 2^{(1-\epsilon)j'}, C_{\psi} 2^{(1-\epsilon)j'} \right]$: $$\forall v_2 \in \left[-C_{\psi} 2^{(1-\epsilon)j}, C_{\psi} 2^{(1-\epsilon)j} \right], ~\left| \frac{\partial \left[ \psi_j(T(\tilde u, v)) \right]}{\partial \tilde u_2} \right| = \left| \gamma'(\tilde u_2 - v_2) \frac{\partial \psi_j}{\partial \tilde u_1} (T(\tilde u, v)) \right| \le C 2^{(1-\epsilon)(\alpha-1)j'} 2^{-3j}.$$
By construction, $f$ is bounded and $\|\psi_j\|_{\infty} = 2^{-2j}$. Integrating on $v$ and using $\psi_j$'s fast decay through equation (\ref{eq:fast_decay}) then leads to:
$$\left|\frac{\partial f * \psi_j}{\partial \tilde u_2} \right|(\tilde u) \le C \int_{\| v \| \le C_{\psi} 2^{(1-\epsilon)j}} \left( \left|  \psi_j(T(\tilde u, v)) \right| + \left| \frac{\partial \psi_j(T(\tilde u, v))}{\partial \tilde u_2} \right| \right) dv + C 2^{\alpha j} \le C 2^{-(1+2\epsilon)j+(1-\epsilon)(\alpha-1)j'}.$$
Injecting the result in equation (\ref{eq:lemma_second_order_intermediate}) and using that $j'< j/\alpha$, one gets:
\begin{equation}
\label{eq:nearly_final_second_order}
\left| \int_{\substack{\| v \| \le C_{\psi} 2^{(1-\epsilon)j'} \\ B\left(u-v, C_{\psi}2^{(1-\epsilon)j} \right) \cap \Gamma \neq \emptyset}} f_j(u-v) \psi_{j'}^\perp(v) dv \right| \le C 2^{[(1-\epsilon)(\alpha-1)-3 \epsilon \alpha]j'}. 
\end{equation}
Replacing $(1-\epsilon)(\alpha-1)-3 \epsilon \alpha$, $\epsilon>0$, by $s-1$, $s < \alpha$, one gets from equations (\ref{eq:in_out_gray_region}) and (\ref{eq:nearly_final_second_order}):
\begin{equation}
\label{eq:lemma_C_misaligned_case_small_scales}
\forall s < \alpha, \exists C>0 ~/ ~\left| \int_{\| v \| \le C_{\psi} 2^{(1-\epsilon)j'}} f_j(u-v) \psi_{j'}^\perp(v) dv \right| \le C 2^{(s-1)j'}.
\end{equation}
Finally, combining equations (\ref{eq:bound_tail_second_order}), (\ref{eq:lemma_C_uniform_case}), (\ref{eq:lemma_C_aligned_case_firs_order_bound}), (\ref{eq:lemma_C_misaligned_case_big_scales}) and (\ref{eq:lemma_C_misaligned_case_small_scales}) with the bound of equation (\ref{eq:bound_coeff_second_order}) gives the expected result. Equations (\ref{eq:lemma_C_aligned_case_firs_order_bound}), (\ref{eq:lemma_C_misaligned_case_big_scales}) and (\ref{eq:lemma_C_misaligned_case_small_scales}) have been established assuming that $j,j'$ were small enough such that $B \left(u, C_{\psi} \left[ 2^{(1-\epsilon)j}+2^{(1-\epsilon)j'} \right] \right)$ contains at most one
connected edge portion. This limiting scale depends only on $f$, which is fixed. Equation (\ref{eq:lemma_second_order}) thus remains true for $j,j'$ above this scale, even if it means increasing the constant.
    
\end{proof}

\subsection{Main proof}

\begin{theorem*}
Suppose that $\psi$ satisfies the same hypothesis as in Theorem \ref{th:first_order}.
If $f$ is a $\mathrm C^{\alpha}$ geometrically regular function with $1 \le \alpha \le 2$ then, for all $s < \alpha$, there exists $C$ such that:

\begin{equation*}
\forall j \le j' \le \JJ,~ \forall k < 4, ~~ \| |f * \psi_j^k| * \psi_{j'}^{k^{\perp}} \|_1 \leq C 2^{ s j'},
\end{equation*}
and
\begin{equation*}
    \forall s<\alpha,~ \forall k < 4, ~~ \sum_{j=-\infty}^\JJ \sum_{j'=j+1}^{[j/\alpha]} 2^{-s j'} \| |f * \psi_j^k| * \psi_{j'}^{k^{\perp}} \|_1  < + \infty.
\end{equation*}

\end{theorem*}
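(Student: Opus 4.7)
The plan is to integrate the pointwise estimate of Lemma \ref{lemma:second_order} over $[0,1]^2$ after partitioning the square according to the geometric role of each point, then carefully track the area and the pointwise bound in each region. By the rotation symmetry we take $k=0$ and set $h(u):=\bigl||f*\psi_j|\ast \psi_{j'}^\perp\bigr|(u)$. Fix $s<\alpha$ and pick auxiliary parameters $s'\in(s,\alpha)$ and $\delta>0$ small enough that $s'-\delta>s$ and $2(1-\delta)\geq s$ hold simultaneously. Partition $[0,1]^2$ into $R_3$, a $C_\psi 2^{(1-\delta)j'}$-neighborhood of the finitely many corners; $R_2$, a neighborhood of the edge curves of width of order $2^{(1-\delta)j'}$ minus $R_3$; and $R_1$, the remainder, chosen so that the effective $\psi_{j'}^\perp$-support around any $u\in R_1$ stays uniformly far from every edge. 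These satisfy $|R_3|=O(2^{2(1-\delta)j'})$, $|R_2|=O(2^{(1-\delta)j'})$, $|R_1|=O(1)$.

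On $R_1$, the classical vanishing-moments argument of \eqref{eq:regular_coef} gives $|f*\psi_j|(v)=O(2^{\alpha j})$ for every $v$ in the effective support of $\psi_{j'}^\perp$ translated to $u$; combined with the fast decay \eqref{eq:fast_decay} and Young's inequality, $h(u)=O(2^{\alpha j'})$, contributing $O(2^{\alpha j'})$ to $\|h\|_1$. On $R_2$, Lemma \ref{lemma:second_order} applied with exponent $s'$ gives $h(u)\le C\,2^{(s'-1)j'}$, contributing $O(2^{(s'-\delta)j'})$. On $R_3$ the only available bound is the trivial $h(u)\le \|f\|_\infty$ (no useful cancellation survives at a corner), contributing $O(2^{2(1-\delta)j'})$. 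Each of these three is bounded by $C\,2^{sj'}$ thanks to the choices of $s'$ and $\delta$: $\alpha j'<sj'$ since $\alpha>s$ and $j'<0$; $(s'-\delta)j'<sj'$ since $s'-\delta>s$; and $2(1-\delta)j'\le sj'$ since $2(1-\delta)\ge s$. This proves \eqref{eq:theorem_nosum}. The last inequality, which forces $s<2$, is where the hypothesis $\alpha\le 2$ enters.

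Finally, \eqref{eq:theorem} follows by applying \eqref{eq:theorem_nosum} with an auxiliary $s''\in(s,\alpha)$: then
\[
\sum_{j'=j+1}^{[j/\alpha]} 2^{-sj'}\|h\|_1 \le C\sum_{j'=j+1}^{[j/\alpha]} 2^{(s''-s)j'},
\]
a geometric sum of ratio $2^{s''-s}>1$ whose largest term at $j'=[j/\alpha]$ is $O(2^{(s''-s)j/\alpha})$, and $\sum_{j\le \JJ} 2^{(s''-s)j/\alpha}$ converges because $(s''-s)/\alpha>0$ and $j\to-\infty$.

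The main obstacle is not the present theorem but Lemma \ref{lemma:second_order}, which carries out all the nontrivial harmonic-analytic work, in particular the delicate subcase where the edge tangent is misaligned with $\psi_j$'s vanishing moments and one must exploit the directional vanishing moments of $\psi_{j'}^\perp$ along the edge tangent together with a bound on the directional derivative of $f*\psi_j$. Once that lemma is granted, the theorem reduces to the $\epsilon$-bookkeeping on areas outlined above; the only subtlety is to ensure the three competing exponents $2^{\alpha j'}$, $2^{(s'-\delta)j'}$, $2^{2(1-\delta)j'}$ can all be simultaneously absorbed into a single $2^{sj'}$ for any prescribed $s<\alpha$.
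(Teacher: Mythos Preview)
Your proof is correct and follows essentially the same approach as the paper: partition $[0,1]^2$ into regular, edge, and corner regions, apply Lemma~\ref{lemma:second_order} (or the sharper regular-region bound) pointwise on each, multiply by the respective areas $O(1)$, $O(2^{(1-\delta)j'})$, $O(2^{2(1-\delta)j'})$, and then absorb all three contributions into $C\,2^{sj'}$ by choosing the auxiliary exponents appropriately; the double sum is then handled by the same geometric-series trick. The only differences from the paper are cosmetic: you fix $s',\delta$ at the outset and verify the three inequalities explicitly, whereas the paper carries a single parameter $\epsilon$ through and substitutes $s=(1-\epsilon)\alpha-3\epsilon\alpha$ at the end.
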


\begin{proof}

Let $j \le j' \le \JJ$ and $\epsilon>0$. The unit square $[0,1]^2$ is divided in three distinct regions $\mathcal R_{j,j'}^\epsilon$, $\mathcal E_{j,j'}^\epsilon$ and $\mathcal C_{j,j'}^\epsilon$, corresponding to varying degrees of regularity of $f$. The set $\mathcal R_{j,j'}^\epsilon$ contains points $u \in [0,1]^2$ such that $f$ is uniformly regular on the ball $B \left(u, C_{\psi} \left[ 2^{(1-\epsilon)j}+2^{(1-\epsilon)j'} \right] \right)$. Combining equations (\ref{eq:bound_coeff_second_order}), (\ref{eq:bound_tail_second_order}) and (\ref{eq:lemma_C_uniform_case}) from Lemma \ref{lemma:second_order}'s proof, one obtains:
\begin{equation}
\label{eq:thm_regular_points}
\exists C>0 ~ / ~ \forall u \in \mathcal R_{j,j'}^\epsilon, ~||f * \psi_j| * \psi_{j'}^{{\perp}}|(u) \le C 2^{\alpha j'}.
\end{equation}
The set $\mathcal E_{j,j'}^\epsilon$ contains points such that one and only one edge portion intersects $B \left(u, C_{\psi} \left[ 2^{(1-\epsilon)j}+2^{(1-\epsilon)j'} \right] \right)$. Proceeding as in Lemma \ref{lemma:second_order}'s proof and noticing that the leading term in the bound is given by equation (\ref{eq:nearly_final_second_order}), one gets:
\begin{equation}
\label{eq:thm_edge_points}
\exists C>0 ~ / ~ \forall u \in \mathcal E_{j,j'}^\epsilon, ~||f * \psi_j| * \psi_{j'}^{{\perp}}|(u) \le C 2^{[(1-\epsilon)(\alpha-1)-3 \epsilon \alpha]j'}.
\end{equation}
Finally, the set $\mathcal C_{j,j'}^\epsilon$ is defined as the complementary in $[0,1]^2$ of $\mathcal R_{j,j'}^\epsilon$ and $\mathcal E_{j,j'}^\epsilon$. For $j,j'$ small enough, it contains only points at the vicinity of $f$'s corners. The function $f$ being bounded, there is a constant $C>0$ such that:
\begin{equation}
\label{eq:thm_corner_points}
\forall u \in \mathcal C_{j,j'}^\epsilon, ~||f * \psi_j| * \psi_{j'}^{{\perp}}|(u) \le C.
\end{equation}
Combining equations (\ref{eq:thm_regular_points}), (\ref{eq:thm_edge_points}) and (\ref{eq:thm_corner_points}) one thus gets:
\begin{equation}
\label{eq:thm_bound_l1_intermediate}
\exists C>0 ~ / ~ \||f * \psi_j| * \psi_{j'}^{{\perp}}\|_1 \le C \left[ 2^{\alpha j'} \int\limits_{\mathcal R_{j,j'}^\epsilon} du + 2^{[(1-\epsilon)(\alpha-1)-3 \epsilon \alpha] j'} \int\limits_{\mathcal E_{j,j'}^\epsilon} du + \int\limits_{\mathcal C_{j,j'}^\epsilon} du \right].
\end{equation}
The integral $\int\limits_{\mathcal R_{j,j'}^\epsilon} du$ is bounded by the area of the unit square. By construction, $f$ has a finite number of corner points, from what one deduces that $\int\limits_{\mathcal C_{j,j'}^\epsilon} du \le C 2^{2(1-\epsilon) j'}$. Finally, since the length of edge curves is fixed for a given $f$, one has: $\int\limits_{\mathcal E_{j,j'}^\epsilon} du \le C 2^{(1-\epsilon)j'}$. Combining these bounds with equation (\ref{eq:thm_bound_l1_intermediate}) leads to:
$$\| |f * \psi_j| * \psi_{j'}^{\perp} \|_1 \le C_{\text{regular}} 2^{\alpha j'} + C_{\text{edge}} 2^{[(1-\epsilon)(\alpha-1)-3 \epsilon \alpha] j'+ (1-\epsilon) j'} + C_{\text{corners}} 2^{2(1-\epsilon)j'} \le C 2^{ [(1-\epsilon)\alpha-3 \epsilon \alpha] j'}.$$
Replacing $(1-\epsilon)\alpha-3 \epsilon \alpha$, $\epsilon>0$, by $s < \alpha$ gives the first statement of the theorem, even if it means increasing the constant to cover $j,j'$ for which $\mathcal C_{j,j'}^\epsilon$ is not reduced to a union of balls centred on $f$'s corners:
\begin{equation}
\label{eq:thm_first_statement}
\forall s< \alpha, \exists C>0 ~/~ \forall j \le j' \le \JJ, ~~ \| |f * \psi_j| * \psi_{j'}^{\perp} \|_1 \leq C 2^{ s j'}.
\end{equation}
In particular, for any $s'<\alpha$, choosing $s > s' $ in equation (\ref{eq:thm_first_statement}) gives a constant $C>0$ such that:
$$\sum_{j=-\infty}^\JJ \sum_{j'=j+1}^{[j/\alpha]} 2^{-s' j'} \| |f * \psi_j| * \psi_{j'}^{\perp}\|_1 \le C \sum_{j=-\infty}^0 2^{(s-s')j/\alpha} < +\infty,$$
which proves the second statement of the theorem.

\end{proof}

\section{Numerics}
\label{appendix:numerics}

\subsection{Discretising geometry}
\label{appendix:discretising_geometry}
 
\paragraph{Data generation}
Numerical experiments are performed on data generated following an adapted version of \citet{kadkhodaie2024generalization}. It defines a random process whose realizations are geometrically $\Ca [0,1]^2$ functions, projected on a finite dimensional space $\V$ of dimension $d$ (fixed to $128 \times 128$ in our experiments). Samples live on a uniform 2D grid of $\sqrt d$ pixels width - the grid dimension, $2$, and the space dimension, $d$, should not be confused. A realization $P_\V f$ of this process is obtained in four steps. The projections on the 2D grid of two uniformly regular $\Ca[0,1]^2$ functions are first sampled to form the background and the foreground, respectively denoted $B$ and $F$. The contours of the foreground shape, which are the projections of three $\Ca[0,1]$ functions, are then sampled on a 1D grid of length $\sqrt d$. They are combined to create a binary mask  $M$ defining the shape of the foreground on the 2D grid. The foreground is eventually multiplied by the mask $M$ and added to the background:
$$P_\V f = B + M \times F.$$
Denoting $\gamma_1, \gamma_2, \gamma_3$ the contours sampled on the 1D grid and indexing the 2D grid with $(n_1, n_2) \in \llbracket 1, \sqrt d \rrbracket^2$, the mask $M$ is defined as follows:
$$M(n_1, n_2) = \prod\limits_{i} r_{\theta_i} \left(\mathbb 1_{n_2/\sqrt{d} \le \gamma_1(n_1)} \right),$$
where $r_{\varphi}$ is the rotation of angle $\varphi$. For each $i=1,2,3$:
$$\theta_i \sim \mathcal U([(1+3(i-1)) 2\pi/9, (2+3(i-1)) 2\pi/9]).$$
This constraint on contours orientation prevents boundary issues, since most of the time the resulting edges will intersect to form a triangular shape. If this shape crosses the boundary of the grid anyway, this boundary is considered as a fourth contour. This additional contour being a straight line, it is $\C^{\infty}$ and has no impact on the asymptotic denoising performances. The projection on a uniform 2D grid of width $\sqrt d$ (resp. 1D grid of length $\sqrt d$) of a random uniformly $\Ca$ function is obtained convolving a sample of Gaussian white noise of dimension $d$ (resp. $\sqrt d$) and a filter defined in the Fourier space as $(c+\omega_1^2 + \omega_2^2)^{-(\alpha+1)/2}$ (resp. $(c+\omega)^{-(\alpha+1)}$). The parameter $c$ controls the Lipschitz constant of the sampled uniformly regular signals.
These Lipschitz constants define the set $\Lambda \subset \Ld[0,1]^2$ from which functions $f$ are drawn. The previous sampling procedure defines a random process on the projection of such functions in dimension $d$. It induces average Lipschitz constants and contours length, which are reported in table \ref{table:lip_cst} for the different values of $\alpha$ used in our numerical experiments. The concentration of total contours length around its average value justifies the usage of $\epsilon_{mms}$ as a proxy of $\epsilon_m$. Figure \ref{fig:examples_c_alpha} gives examples of sampled images for $\alpha$ varying in $\{1,1.2,1.5,2 \}$.
\begin{table}[ht]
\centering
\begin{tabular}{|l||c|c|c|c|}
\hline
& $\alpha = 1$ & $\alpha = 1.2$ & $\alpha = 1.5$ & $\alpha = 2$ \\ \hline
Lipschtiz constant of contours        & 1.81/0.432     &   3.38/0.853    &   4.10/1.07    &   11.4/2.83  \\ \hline
Lipschitz constant of uniform regions &    16.8/2.84 &    39.8/8.68   &   153/34.8    &   120/32.4  \\ \hline
Contours length &    1.91/0.282 &    1.84/0.261   &   1.81/0.264    &   1.79/0.260  \\ \hline
\end{tabular}
\caption{Lipschitz constant and total contours length of the sampled functions (mean/standard deviation), estimated over $1000$ i.i.d. realizations in dimension $d=128^2$.}
\label{table:lip_cst}
\end{table}

\begin{figure}
    
    \centering
    
    \subfigure[$\alpha=1$]{\includegraphics[width=38mm]{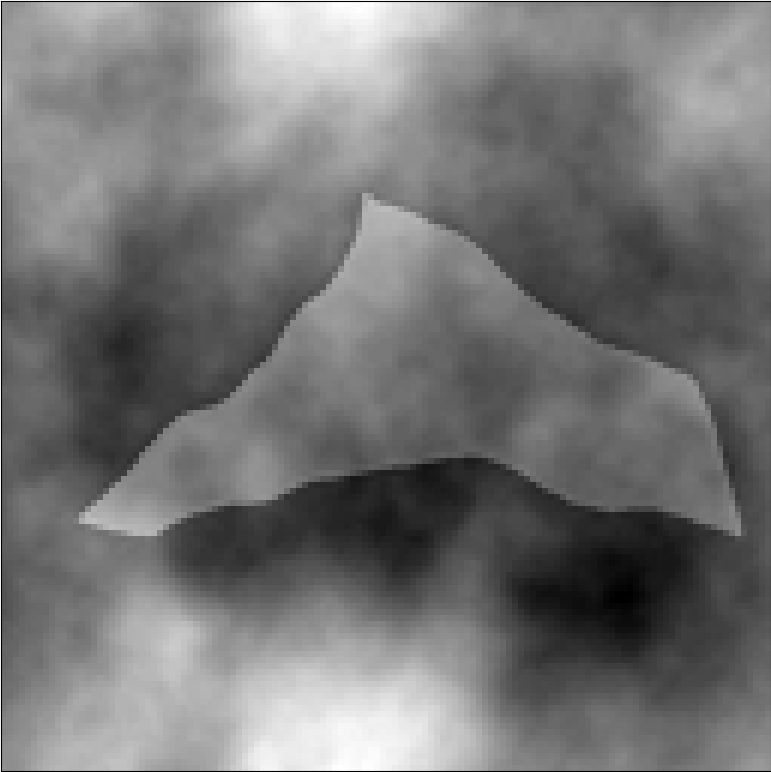}}
    \subfigure[$\alpha=1.2$]{\includegraphics[width=38mm]{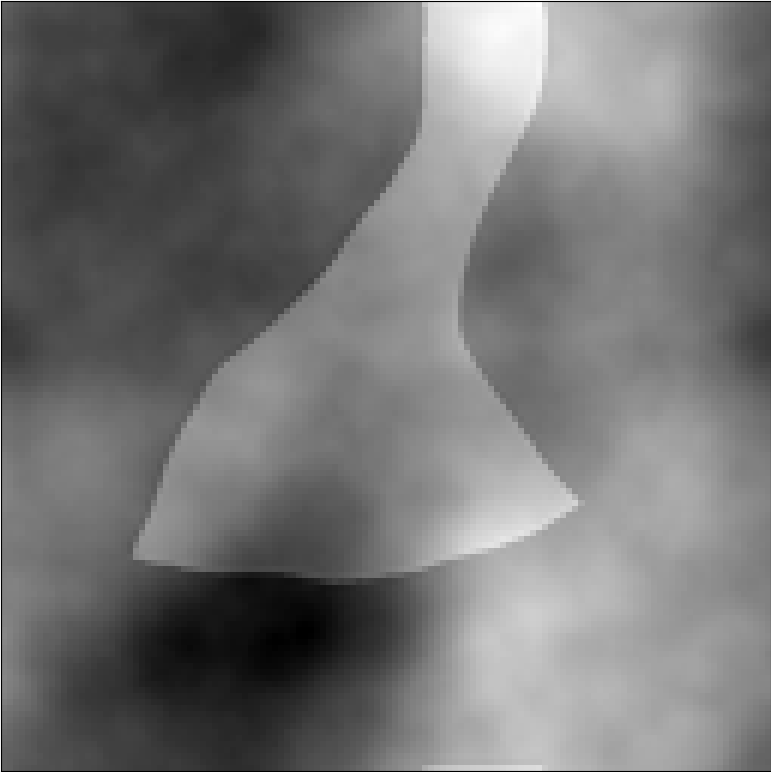}}
    \subfigure[$\alpha=1.5$]{\includegraphics[width=38mm]{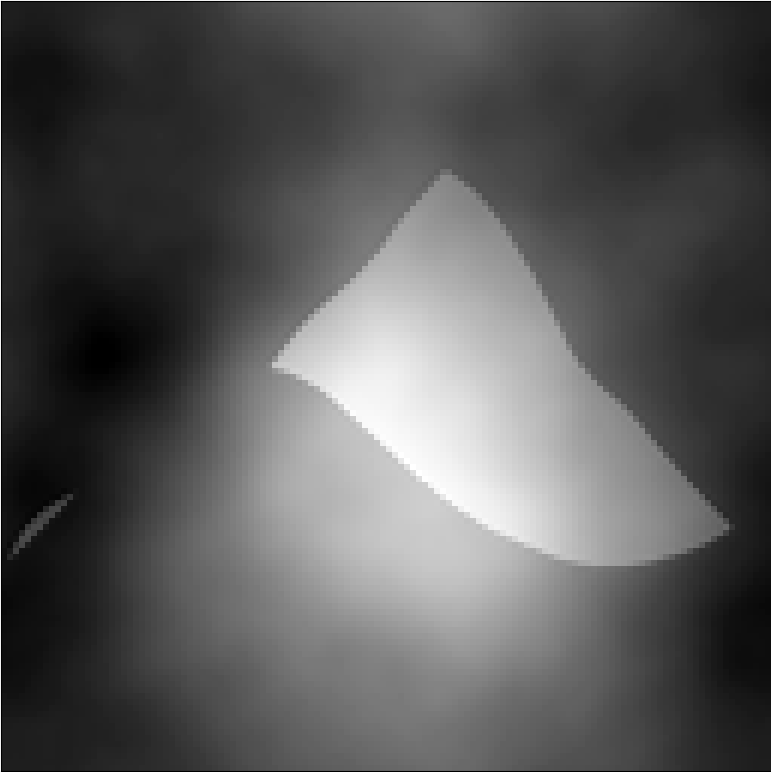}}
    \subfigure[$\alpha=2$]{\includegraphics[width=38mm]{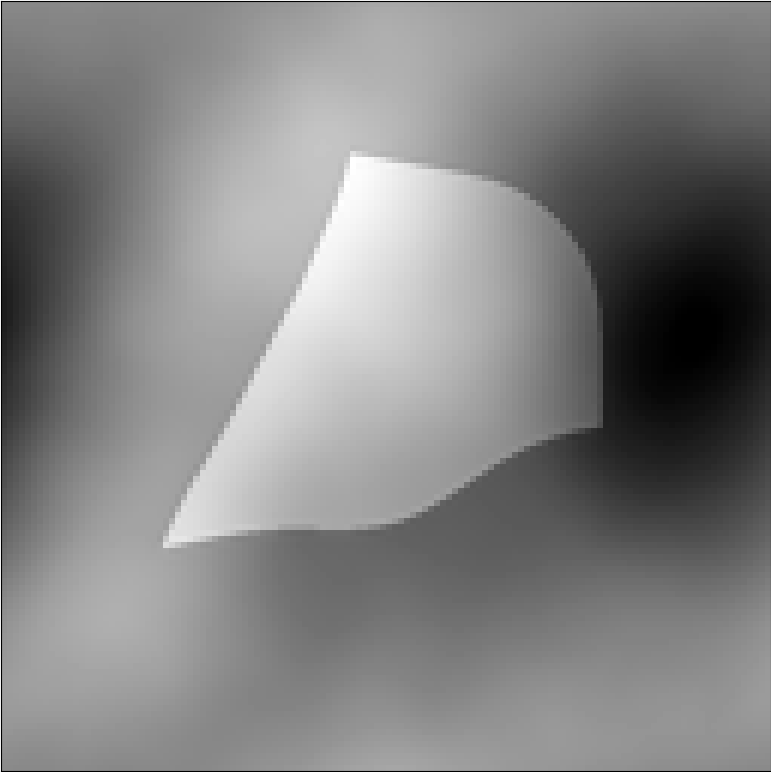}}

    \caption{Examples of $\Ca[0,1]^2$ images discretised in dimension $d=128^2$ and normalized in $[-1,1]$, for $\alpha$ varying in $\{1,1.2,1.5,2 \}$. }

    \label{fig:examples_c_alpha}
\end{figure}

Taken as one-dimensional objects, contours are regular $\Ca[0,1]$ functions. Nevertheless, this regularity is lost when they are discretised on the two-dimensional grid, since stair case discontinuities appear. The central aspect of this work being to study the ability of the scattering estimator to exploit and to be adaptive to frontiers' regularity, this issue must be solved cautiously. To mitigate it, $d$ pixels images are generated as the low frequency components of a two levels FWT, with Symlet4 wavelets, of $4^2 d$ pixels images.

\paragraph{Noise range} Discretization implies that geometric regularity properties of $\Ca$ images used in Section \ref{sec:numerics} are verified only on a given range of scales. This scale range implies a noise range on which one could expect to observe the theoretical MSE slope of $2 \alpha / (\alpha+1)$. Indeed, observing such a slope requires the denoiser to have an uncertainty on the contour's location at a scale for which the contour is still $\Ca$. For the smallest scales, the contour is no longer regular, as illustrated in Figure \ref{fig:coeff_decay}.
\begin{figure}
    \centering
    \includegraphics[width=80mm]{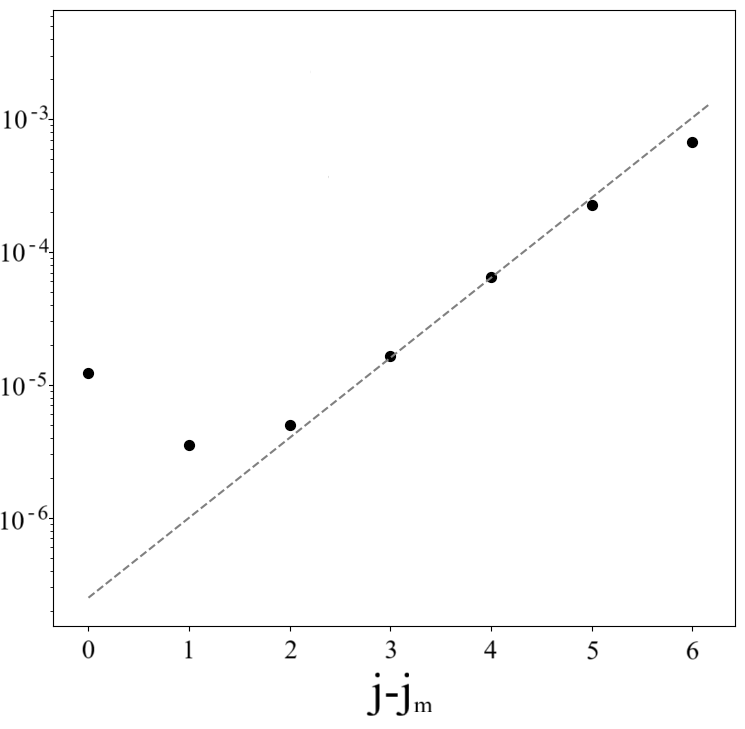}
    \caption{Scaling of $\| |f * \psi_j^k| * \psi_{j}^{k^{\perp}} \|_1$ for $j \in \{\LL,...,\LL+7\}$ and $f$ being a $\mathbf C^2$ geometrically regular image with constant background and foreground - in order to focus on the regularity of the contour - discretised in dimension $d=1024^2$. Theorem \ref{theorem:scattering_bounds} predicts a scaling with a slope of $2$ for such coefficients, which is represented by the gray dotted line. Geometry being discretised, the finest scales do not behave as expected. The y-axis is log-scaled.}
    \label{fig:coeff_decay}
\end{figure}
Consequently, one must be cautious on the value of the gap that separates the background and the foreground. Indeed, if it is too high, the contour could be estimated directly, for instance through a thresholding. There will be no uncertainty on its location (due to the limited precision allowed by the discretization scale), except for very high noise levels, and the MSE will vary as if the image was uniformly regular\footnote{For a wavelet thresholding denoising, all edge coefficients would be very high valued. They will thus get out quickly from the noise, letting only coefficients from uniform regions for smaller noise levels. This issue would not exist in a continuous setting, since there will be edge coefficients of all amplitudes.}. Conversely, if this gap is too low, the image will look uniformly regular up to a certain scale. If this scale is too low compared to the scale range on which the theoretical behaviour is expected, then the experimental MSE slope will not be correct. Correctly bounding the gap between background and foreground is thus necessary so that there exists a noise range over which to observe the correct MSE slope. The precise value of this range will then depend on the performance of the used denoiser. The better the denoiser, the more accurately it will be able to estimate the position of the contour at high noise levels, and the faster the uncertainty about its location will disappear. Thus, applied to the same data as the scattering estimator, the UNet whose performances are reported in Figure \ref{fig:UNet_slopes} has a $2\alpha/(\alpha+1)$ MSE slope for noise levels much higher than those used for the Figure \ref{fig:scattering_denoising_slopes}.
In summary, given the finite size of the denoised images, the theoretical behaviour can only be observed over a range of upper and lower bounded noise. For $\Ca$ geometrically regular images, discretised in dimension $d=128^2$, the gap value between background and foreground has been experimentally set between $0.4$ et $0.6$ for data normalized between $-1$ and $1$. These values were chosen ensuring that the dyadic wavelet estimator MSE has a slope of $1$, while the scattering estimator gives the theoretically expected slope.

\subsection{Denoisers and optimization}
\label{appendix:denoisers_and_optim}

\paragraph{UNet architecture}

The UNet used for experiments of Figures \ref{fig:UNet_slopes} and \ref{fig:MMS-denoising} follows the architecture proposed in \citet{ho2020denoising}. Implementation was obtained from \url{https://huggingface.co/blog/annotated-diffusion}. It has been trained on a dataset of $44000$ $\Ca[0,1]^2$ geometrically regular images discretised in dimension $d=128^2$, composed of $11000$ samples from each degree of regularity $\alpha \in \{1,1.2,1.5,2\}$. It has been split into train ($36000$ samples), validation ($4000$ samples) and test ($4000$ samples) datasets. Early stopping with patience $20$ and delta $0.1$ was used. The network was trained with the Adam optimizer \cite{Kingma2014AdamAM} and a MSE loss function.

\paragraph{Wavelet thresholding denoiser}

Figure \ref{fig:denoised_threshold_image} presents the result of a denoising by thresholding in a Symlet4 wavelet basis. Slopes from Figure \ref{fig:wavelet_trans_inv_slopes} are obtained with a thresholding denoiser, again in a Symlet4 basis, average over 10 translations along each direction. In both cases, thresholding was performed following the rule prescribed by \citet{Donoho1994IdealSA}.

\paragraph{Scattering estimator}

Optimizing parameters $\{\lambda, \gamma, \eta_0, \eta_{\pm 1}\}$ in order to minimize the MSE of the variational denoiser of equation (\ref{eq:variational_denoising}) could be written as a bi-level optimization problem \cite{Zhang2023AnIT}. More precisely, since we want to minimize its MSE for a whole range of noise, we consider the following problem:
\begin{equation}
\label{eq:bi_level_problem}
    \argmin_{\lambda, \gamma, \eta_0, \eta_{\pm 1}} \sum\limits_{\sigma} \log( \mathbb E_{P_\V f,g}(\|\hat f_{\lambda, \gamma, \eta_0, \eta_{\pm 1}}(g)-P_\V f\|_2^2)) \text{,  s.t.  } \hat f_{\lambda, \gamma, \eta_0, \eta_{\pm 1}}(g) = \argmin_{h \in \V} \frac 1 {2} \|h - P_\V g \|^2 + \sigma^2 U_{\lambda, \gamma, \eta_0, \eta_{\pm 1}}(g).
\end{equation}
The need to adapt the scale parameters $\LL$ et $\JJ$ depending on the noise level $\sigma$ prevents the usage of implicit differential methods to solve straightforwardly problem \ref{eq:bi_level_problem} \cite{Zucchet_2022, Dontchev2009ImplicitFA}. As the number of parameters to be optimized was limited, a grid search approach was adopted. For the dyadic wavelet estimator penalization term defined in equation (\ref{eq:scattering_first_order_energy}), there is only one parameter to optimize: $\lambda$. This is done computing the MSE obtained for each $\lambda$ of the grid, at each noise level $\sigma^2 \in \{1.05, 0.67,0.43,0.27, $ $0.18, 0.11, 0.07, 0.05\}$, for the best choice of $\JJ$. The one maximising the criterion of equation (\ref{eq:bi_level_problem}) is then selected, leading to $\lambda = 1.2$.
For the scattering penalization term defined in equation (\ref{eq:scattering_energy}), the same procedure is applied to optimize $\{\lambda, \gamma, \eta_0, \eta_{\pm 1}\}$. 
The criteria of equation (\ref{eq:bi_level_problem}) is computed with three noise levels $\sigma^2 \in \{0.43, 0.11, 0.03\}$. One should notice a subtlety compared to the first order case: $\LL$ requires also to be adjusted for the scattering estimator. Indeed, as the image size is not reduced according to the noise level, $\LL$ must be adapted for the terms controlling the singularities' support, otherwise artificially sharp contours - badly positioned - would be reconstructed.
Finally, the finer scale being special due to the discretization issues discussed in \ref{appendix:discretising_geometry}, the corresponding scattering coefficients - those for which the first convolution is performed with the finest wavelets $\psi^k_{\LL}$ - are regularized with a multiplicative factor $0.55$, which was adjusted empirically.

\subsubsection{Variational denoising optimization}

The optimization problem defined in equation (\ref{eq:variational_denoising}) with $U$ given by equation (\ref{eq:scattering_energy}) is non-standard. Unlike energies defined in equations (\ref{sparse-prior}) or (\ref{eq:scattering_first_order_energy}), the regularizer is non-convex and its proximal operator is unknown. This precludes the use of standard proximal methods such as proximal gradient descent \cite{polson2015proximal}, ISTA \cite{Daubechies2003AnIT}, and pFISTA \cite{liu2016projected}.
The literature proposes extensions of these approaches for non-convex regularizers, but they require assumptions that are not verified by the scattering energy, such as the property to be written as a difference of convex functions \cite{gong2013general}.
Furthermore, while the scattering transform can be interpreted as a two layers neural network, Plug-and-Play (PnP) optimization methods \cite{romano2017little} are not applicable in this context. PnP approaches typically employ neural networks to approximate MSE denoisers rather than energy functionals, with the denoiser subsequently integrated into an ADMM framework \cite{shoushtari2023convergence, sun2021scalable} for denoising \cite{Laumont2022OnMA}. The neural network is generally not differentiated.
Even in cases where neural networks are used to model energy functionals directly, as in \citet{Hurault2021GradientSD}, modifications are required to ensure Lipschitz continuity of the gradient, which is not guaranteed for the scattering transform.
We therefore evaluated several gradient-based optimization methods (L-BFGS \cite{Liu1989OnTL}, Stochastic Gradient Descent, and Adam \cite{Kingma2014AdamAM}), ultimately selecting L-BFGS based on its superior performance. To ensure a fair comparison with the dyadic wavelet estimator defined in equation (\ref{eq:scattering_first_order_energy}), we applied the same optimization algorithm to both approaches.
Given the non-convex nature of the optimization problem, we tested multiple initialization strategies: random initialization, initialization from the noisy image, and initialization from the image denoised using the dyadic wavelet estimator. These different initialization schemes yielded no significant differences in denoising performance.

\bibliographystyle{plainnat}
\bibliography{bibliography}

\end{document}